\documentclass[a4paper,11pt]{scrartcl}

\RequirePackage{xr}
\RequirePackage[OT1]{fontenc}
\RequirePackage{amsthm,amsmath,amssymb,dsfont,bm}
\usepackage{mathbbol}
\DeclareSymbolFontAlphabet{\amsmathbb}{AMSb}%
\RequirePackage{algorithm}
\RequirePackage{algpseudocode}
\RequirePackage{graphicx} 
\RequirePackage[section]{placeins}
\RequirePackage{color}
\RequirePackage{chngcntr}
\RequirePackage{enumerate}
\usepackage[shortlabels]{enumitem}
\RequirePackage{longtable}
\usepackage{multirow}
\usepackage{mathtools}
\usepackage[english]{babel} 

\RequirePackage[OT1]{fontenc} 

\RequirePackage{bbm}
\usepackage[normalem]{ulem}

\RequirePackage{graphicx}
\RequirePackage{verbatim}
\usepackage{url}

\numberwithin{equation}{section}
\RequirePackage{a4wide} 
\usepackage{color,charter,graphicx,fancyhdr,bibentry,enumitem,eso-pic,here,setspace,multirow,amsmath,amssymb,amsfonts,subfigure,mathdots,sidecap}
\usepackage[svgnames,x11names]{xcolor}
\usepackage[pdftex]{hyperref}

%
%


\theoremstyle{plain}
\newtheorem{definition}{Definition}[section]
\newtheorem{theorem}{Theorem}[section]

\newtheorem{corollary}{Corollary}[section]

\newtheorem{remark}{Remark}[section]
\newtheorem{lemma}{Lemma}[section]


\def\@bysame#1{\vrule height 1.5pt depth -1pt width 3em \hskip
0.5em\relax}

\newcommand{\N}{ \mathbb{N} }

\newcommand{\Z}{ \mathbb{Z} }

\newcommand{\R}{ \mathbb{R} }

\newcommand{\pen}{\operatorname{pen}}

\newcommand{\wh}[1]{ \widehat{ #1 } }
\newcommand{\wt}[1]{ \widetilde{ #1 } }

\newcommand{\calA}{\mathcal{A}}

\newcommand{\calF}{\mathcal{F}}

\newcommand{\calI}{\mathcal{I}}

\newcommand{\calN}{\mathcal{N}}

\newcommand{\eins}{{\bm 1}}

\newcommand{\matA}{{\mathbb{A}}}
\newcommand{\matB}{{\mathbb{B}}}
\newcommand{\matC}{{\mathbb{C}}}
\newcommand{\matD}{{\mathbb{D}}}

\newcommand{\matF}{{\mathbb{F}}}

\newcommand{\matO}{{\mathbb{O}}}

\newcommand{\matR}{{\mathbb{R}}}

\newcommand{\matU}{{\mathbb{U}}}
\newcommand{\matV}{{\mathbb{V}}}
\newcommand{\matW}{{\mathbb{W}}}
\newcommand{\matX}{\mathbb{X} }
\newcommand{\matY}{{\mathbb{Y}}}
\newcommand{\matZ}{{\mathbb{Z}}}
\newcommand{\matPhi}{{\mathbb{\Phi}}}

\newcommand{\matid}{\mathbb{I}}

\newcommand{\vecnull}{{\bm 0}}

\newcommand{\veca}{{\bm a}}
\newcommand{\vecb}{{\bm b}}

\newcommand{\vecc}{{\bm c}}

\newcommand{\vece}{{\bm e}}
\newcommand{\vecE}{{\bm E}}
\newcommand{\vecF}{{\bm F}}
\newcommand{\vecf}{{\bm f}}
\newcommand{\vecg}{{\bm g}}

\newcommand{\vecH}{{\bm H}}

\newcommand{\veco}{{\bm o}}

\newcommand{\vecu}{{\bm u}}
\newcommand{\vecr}{{\bm r}}

\newcommand{\vecs}{{\bm s}}

\newcommand{\vecv}{{\bm v}}

\newcommand{\vecw}{{\bm w}}
\newcommand{\vecx}{{\bm x}}
\newcommand{\vecX}{{\bm X}}
\newcommand{\vecy}{{\bm y}}
\newcommand{\vecY}{{\bm Y}}
\newcommand{\vecZ}{{\bm Z}}
\newcommand{\vecz}{{\bm z}}
\newcommand{\vecU}{{\bm U}}
\newcommand{\vecW}{{\bm W}}

\newcommand{\bfxi}{\bm \xi}
\newcommand{\bfmu}{\bm \mu}
\newcommand{\bfbeta}{\bm\beta}
\newcommand{\bftheta}{{\bm\theta}}
\newcommand{\bfTheta}{\bm\Theta}

\newcommand{\bfeta}{\bm\eta}

\newcommand{\bfeps}{\bm \epsilon}
\newcommand{\bhbeta}{\widehat{\bm\beta}}
\newcommand{\bfvareps}{\bm \varepsilon}

\newcommand{\bfvartheta}{\bm\vartheta}
\newcommand{\bfGamma}{\mathbb{\Gamma}}
\newcommand{\matGamma}{\mathbb{\Gamma}}
\newcommand{\bfSigma}{\mathbb{\Sigma}}

\newcommand{\matxi}{\mathbb{\Xi}}

\newcommand{\EE}{\mathbb E}
\newcommand{\PP}{\mathbb P}
\newcommand{\EEE}{\operatorname{E}}
\newcommand{\PPP}{\operatorname{P}}
\newcommand{\Var}{{\mbox{Var\,}}}
\newcommand{\Cov}{{\mbox{Cov\,}}}

\newcommand{\diag}{{\mbox{diag\,}}}

\newcommand{\id}{\operatorname{id}}

\newcommand{\vecsf}{{\bm r}}


\usepackage{subfigure}
\usepackage{mathtools} 
\usepackage{bm}
\usepackage{dsfont}
\usepackage{color}
\usepackage{amsthm}

\usepackage{float}                 

\begin{document}

\begin{center}
	\begin{minipage}{.8\textwidth}
		\centering 
		\LARGE { Consistency of Extreme Learning Machines  and Regression under Non-Stationarity and Dependence for ML-Enhanced Moving Objects} \\[0.5cm]
		
		\normalsize
		\textsc{Ansgar Steland}\\[0.1cm]
		RWTH Aachen University,\\
		Aachen, Germany\\
		Email: \verb+steland@stochastik.rwth-aachen.de+
		
	\end{minipage}
\end{center}

\begin{abstract}
  Supervised learning by extreme learning machines resp. neural networks with random weights is studied under a non-stationary spatial-temporal sampling design which especially addresses settings where an autonomous object moving in a non-stationary spatial environment collects and analyzes data. The stochastic model especially allows for spatial heterogeneity and weak dependence. As efficient and computationally cheap learning methods (unconstrained) least squares, ridge regression and $ \ell_s $-penalized least squares (including the LASSO) are studied. Consistency and asymptotic normality of the least squares and ridge regression estimates as well as corresponding consistency results for the $ \ell_s $-penalty are shown under weak conditions. The results also cover bounds for the sample squared predicition error. 
\end{abstract}

\textit{Keywords:} Consistency, deep learning, extreme learning machine, high-dimensional data, lasso, machine learning, penalized estimation, ridge regression, spatial statistics

\section{Introduction}

A  growing number of instances of machine learning takes place in real time by  (autonomous) objects with limited computing resources, which move in a spatial domain and collect target and explanatory observations from sensors and other sources. Examples of this data sampling and learning setting are (electric) cars using data from sensors (and cameras) for autonomous driving or optimizing usage of attached photovoltaic modules, pedestrians with wearable sensors or flying objects (drones, planes) collecting environmental data, but also tiny sensors for drug monitoring in patients and future applications such as devices for focused drug application in cancer treatment. Such a setting leads to nonstationary dependent samples, for example due to possible interactions with the environment resulting in spatial-temporal dependencies and heterogeneous dispersion. 

For supervised learning, i.e. regression problems, artificial neural networks (including deep learners) with random weights, dating back to \cite{SchmidtEtAl1992} and studied as extreme learning machines (ELMs),  \cite{Huang2004,Huang2006}, random Fourier feature bases and random kitchen sinks, \cite{RahimiRecht2007,RahimiRecht2008a,RahimiRecht2008b}, are an attractive approach due to their extremely fast learning by reducing the optimization part to the output layer, where random features calculated by hidden layers resp. nonlinear feature functions are linearly combined. That last step is often implemented by least squares or ridge regression. The aim and contribution of this paper is to establish consistency results, bounds for the sample prediction error and asymptotic normality results under a nonstationary model for the errors covering the above setting of a moving object going beyond the classical i.i.d. framework. The estimation methods under investigation are least squares estimation, ridge regression (Tikhonov regularization) and $ \ell_s $-penalized least squares for $ 1 \le s \le 2 $ (thus covering the LASSO) are studied. Due to the close relationship of machine learning with ELMs and random features with multivariate regression, the results contribute to both areas.

Our approach to model data acquisition and supervised least squares learning by a moving object is as follows. It is assumed that the object moves through a $q$-dimensional spatial domain, typically $ q = 3 $, and observes at discrete time instants $ t = 1, \ldots, T $ 
input vectors $ \vecz_t $ and $d$-dimensional responses $ \vecY_t $. The inputs $ \vecz_t $ are either used as regressors or they are processed by a neural network with random weights (ELM) yielding (random) features $ \vecx_t $ collected in a $ T \times p $ data matrix $ \matX_T $. The data is then used to train an ELM respectively to fit a regression model using the estimation methods mentioned above.

The proposed model is defined in terms of an underlying continuous-domain spatial-temporal random field which models the (maximal) set of space-time trajectories of an object travelling through the spatial landscape. The random field is given in terms of underlying (unobservable) factor processes.  The model can be seen as an extension of space-time models obtained by smoothing underlying spatial Gaussian processes as studied in spatial statistics, see  \cite{Higdon2002} and the references given there. It allows, however, for non-normal data,  a flexible dependence structure with respect to time and nonlinearity and thus goes beyond \cite{Higdon2002} and similar models. The model is also related to (nonlinear) factor models, which have been extensively studied in time series analysis as well as for spatial data,  see, e.g., \cite{RenBanerjee2013} for hierarchical factor models using a low-rank predictive approach, and the references given there. But the model used in this paper differs  from classical factor models in that it introduces a weighting scheme and allows for a nonlinear relationship given by a Lipschitz continuous mapping. 
In this work, the factors are used to model non-stationarity and possible interactions of the object with its (local) environment. They may represent physical objects of the landscape, such as sources of emission when considering environmental monitoring, or virtual factors (without physical meaning), used to span a rich class of non-stationary models.  The factors are located at certain (fixed) spatial positions, either on a regular grid or at arbitrary positions. There is no need to know these locations, neither it is the goal to estimate them or the model. We introduce weights depending on the distance of the moving object to those factor locations determining the influence of each factor. Therefore, when the factors are arranged on a grid spanning the spatial domain, the resulting model provides a  flexible framework to allow for spatially inhomogenous correlations, and this approach has the advantage that it also allows to interpret the model as a means to explain
 the stochastic relationship between the moving object and the factors (sources).

The paper is organized as follows. In Section~\ref{Sec: Regression ML} the multivariate supervised learning problem is defined and ELMs are briefly reviewed and put in the framework of our results. We further show that under general conditions the theoretical solution of the least squares problem  corresponding to the first $ T $ observations converges to a well defined quantity. 
Section~\ref{Sec: Model Assumptions} introduces and explains the proposed model in greater detail and gives the main assumptions. The asymptotic theory is presented in Section~\ref{Sec: Theory}, whereas proofs are provided in Section~\ref{Sec: Proofs}.

\section{Problem Setup and Machine Learning for Representable Learning Problems}
\label{Sec: Regression ML}

The problem of supervised regression learning for an object moving in a stochastic environment as outlined in the introduction can be formulated as follows: The moving object observes data at discrete time points $ t = 1, \ldots, T $ with associated spatial locations $ \vecs_1, \ldots, \vecs_T $,
	\[ (\vecY_t^\top, \vecZ_t^\top, \vecs_t^\top)^\top, \qquad  t = 1, \ldots, T, \]
defined on a common probability space. 
The inputs $ \vecZ_t $ are either processed by a randomized neural network with linear output activation function  or analyzed by a multivariate linear regresssion model using regressors $ \vecX_t \in \R^p $, $p \in \N $, which may be functions of $ \vecZ_t $. We assume a signal plus noise model given $ \vecX_t = \vecx_t $, i.e.
\[
	\vecY_t = \bfmu_t + \bfeps_t, \qquad t = 1, \ldots, T,
\]
where $ \bfmu_t = \bfmu_t( \vecs_t, \vecx_t ) \in \R^d $ and $ \bfeps_t = \bfeps_t( \vecs_t ) $ is a $d$-variate mean zero spatial-temporal noise introduced and discussed in Section~\ref{Sec: Model Assumptions}. Throughout the paper the $ \nu$th coordinate of the error process $ \bfeps_t $ will be denoted by  $ \epsilon_t^{(\nu)} $, $ \nu = 1, \ldots, d  $. $ \bfmu_t( \vecs, \vecx ) $ is an underlying (conditional) mean function which is only observed and modeled at the relevant trajectory $ (\vecs_1, \vecx_1), \ldots, (\vecs_T, \vecs_T) $ by an ELM or a regression model. Extreme learning machines as well as multivariate regression models can be formulated by modeling the $ d $-dimensional response vectors $ \vecY_t$ given $p$-dimensional explanatory variables $ \vecx_t $, $ p, d \in \N $, by 
\begin{equation}
\label{RegressionModel}
\vecY_t^\top = \vecx_t^\top \matB + \bfeps_t^\top, \qquad t = 1, \ldots, T,
\end{equation}
where $ \vecY_t $ is the $d$-dimensional response vector, 
$ \matB = [ \bfbeta_1, \ldots, \bfbeta_{d} ] $ is an unknown $ p \times d $ matrix of regression parameters and $ \bfeps_t $ the $d$-vector of mean zero errors. In case of an ELM $ \vecx_t  $ is the output of the last hidden layer, as reviewed below. 


The ridge regression estimator $ \wh{\matB}_T^{(R)}  = ( \wh{\bfbeta}_{T,1}^{(R)}, \dots, \wh{\bfbeta}_{T\nu}^{(R)} ) $ for $ \matB$  minimizes
\[
  \tilde\matB \mapsto \| \matY_T - \matX_T \tilde\matB \|_2^2 + \lambda \| \tilde\matB \|_2^2
  = \sum_{\nu=1}^d \| \vecY_T^{(\nu)} - \matX_T \tilde\vecb_\nu \|_2^2 + \frac{\lambda}{2} \sum_{\nu=1}^d \| \tilde\vecb \|_2^2, \quad \tilde\matB \in \R^{p \times d},
\]
for some regularization parameter $ \lambda \ge 0 $. For $ \lambda = 0 $ one obtains classical least squares. Here, $ \matX _T = ( \vecx_1, \ldots, \vecx_T )^\top \in \R^{T \times p} $ is assumed to be of full rank,  $ \matY_T = (  \vecY_1, \ldots, \vecY_T )^\top $ is the $ T \times d $ matrix of the responses and $ \tilde\matB = [\tilde\vecb_1, \dots, \tilde\vecb_d] $. For some  common activation functions \cite{LiuEtAl2015} have shown explicitly that the output matrix $ \matX_T $ of the last hidden layer has full rank. The estimator is given by
\[
	\wh{\matB}_T^{(R)} = (\matX_T^\top \matX_T  + \lambda \matid )^{-1} \matX_T^\top \matY_T, 
\]
where $ \matid $ denotes the $p \times p $ identity matrix, and thus can be calculated column-by-column by simply solving linear equations. Here the $\nu$th column corresponds to the model
\begin{equation}
\label{ModelNu}
Y_t^{(\nu)} = \vecx_t^\top \bfbeta_\nu + \epsilon_t^{(\nu)}, \qquad  t = 1, \ldots, T,  \nu = 1, \ldots, d,
\end{equation}
where $ \epsilon_t^{(\nu)} $ denotes the $ \nu $th coordinate of the multivariate noise $ \bfeps_t $, which is fitted by minimizing $ \| \vecY_T^{(\nu)} - \matX_T \tilde{\vecb}_\nu \|_2^2 + \frac{\lambda}{2} \| \tilde{\vecb} \|_2^2 $, $ \tilde{\vecb} \in \R^p $, leading to solution $  \wh{\bfbeta}_{T\nu}^{(R)} $. Throughout the paper the least squares solution is denoted $ \wh{\bfbeta}_T^{(\nu)} $.

 It is well known that the ridge estimator shrinks the estimates towards the grand mean and, if the models contains an intercept term, usually by putting $ x_{t1} = 1 $ for all $t $, then the coefficients of the regressors are shrunken towards zero. 
It converges to $ \overline{\matY}_T = T^{-1} \sum_{t=1}^T \vecY_t $ if $ \lambda \to \infty $.  

Ridge regression is attractive due to its low computational costs and thus is often used to train randomized neural networks. A LASSO penality, however, has the advantage that it automatically selects the best fitting neurons from the last hidden layers which represent the final features of the net. These best fitting neurons are then linearly combined at the output layer.

Therefore, we also study the $ \ell_s $-norm penalized least squares problem, $ 1 \le s \le 2 $, in the following specification: The $ \nu$th model (\ref{ModelNu}) is fitted by minimizing 
\[
\tilde\vecb \mapsto \| \vecY_T^{(\nu)} - \matX_T \tilde \vecb \|_2^2+ \pen_\lambda( \tilde\vecb ), \qquad \tilde\vecb \in \R^p,
\]
where the penalty term is given by
\[
\pen_\lambda( \tilde\vecb )  = \lambda \| \tilde\vecb \|_s, \qquad \tilde\vecb \in \R^p. 
\]
For $ s = 1 $ we obtain the LASSO estimate for $ \bfbeta_\nu $. The corresponding $d$-dimensional separable $ \ell_s $-penalized least squares problem is to minimize
\[
  \tilde\matB \mapsto \| \matY_T - \matX_T \tilde\matB \|_2^2 + \lambda \sum_{\nu=1}^d \| \tilde\vecb \|_s, \quad \tilde\matB \in \R^{p \times d}.
\]
The LASSO of \cite{Tib1996} dates back to \cite{ChenDonoho1994}, where it has been called basis pursuit, and has been studied quite extensively for independent data. For fixed $p$ \cite{KnightFu2000} provided consistency and asymptotic distributions. High-dimensional settings were studied by \cite{MeinshausenBuehlmann2006},  \cite{MeinshausenYu2009} and \cite{BuehlmannGeer2011}.

\subsection{Extreme learning machine and randomized neural networks}
\label{Sec: ELM}

Extreme learning machines are a popular approach to machine learning based on artifical neural networks, see \cite{HuangWang-2011} for a review, belonging to the class of randomized neutral networks. They are very fast to train, rank among the best classifiers in large scale comparison studies using real data, \cite{JMLR:v15:delgado14a}, and therefore have been chosen as a benchmark classifier for the extended MNIST data set of handwritten characters, \cite{EMNIST2017}.
Originally developed for supervised learning as studied here, \cite{Huang2004,Huang2006}, they have been applied to numerous problems and  extended to various other learning problems including semi- and unsupervised learning problems, see \cite{HuangSongGuptaWu2014}, and multi layer networks, \cite{TangDengHuang2016}. Here we focus on single hidden layer feedforward networks and multi-layered feedforward nets for deep learning. In \cite{LiuEtAl2015} it has been shown that, for i.i.d. learning data, the generalization bounds known for feedforward neural networks, see \cite{GyoerfiKohlerEtAl2002}, essentially carry over to ELMs for nice activation functions.


 An extreme learning machine resp. neural networks with random weights, \cite{SchmidtKraaijveldDuin1992} and \cite{Huang2004}, with $q$ input nodes, $p$ hidden neurons and $d$-dimensional output in the form of a single hidden layer feedforward network with activation function $g$ computes the output of the $j$th neuron of the hidden layer for an input vector $ \vecz_t \in \R^q $ by
\begin{equation}
\label{ELM1}
x_{tj} = g\left( b_j + \vecw_{j}^\top \vecz_t  \right), \qquad j = 1, \ldots, p, \ t = 1, \ldots, T,
\end{equation}
where $ \vecw_j \in \R^q  $ are the weights connecting the input nodes and the hidden units, and  $ b_j \in \R $ are  bias terms, $ j = 1, \ldots, p $. Typical choices for the activation function $ g(u) $ are squashing functions such as the classical sigmoid function $ 1/(1+e^{-u}) $, the ReLU function $ g(u) = \max(0, u)  $ or the algebraic polynomial $ g(u) = u^s $.  

A multilayer (deep) learning neural network is given by a composition (concatenation) of, say, $r$, hidden layers with activation functions $ g_k : \R^{n_{k-1}} \to \R^{n_k} $, weighting matrices $ \matW^{(k)} = ( \vecw_1^{(k)}, \ldots, \vecw_{n_k}^{(k)} )^\top \in \R^{n_k \times n_{k-1}} $, bias terms $ \vecb_{k} = (b_{1k}, \ldots, b_{n_k,k})  \in \R^{n_k} $, satisfying $ n_0 = q $ and $ n_r = p $, such that the $j$th output of the $k$th layer is computed recursively as  
\begin{align*}
  x_{tj}^{(1)} &= g_1( b_{j1} + \vecw_{j}^{(1)}{}^\top \vecz_t ), \qquad j = 1, \ldots, n_1, \\
  x_{tj}^{(k)} &= g_k ( b_{jk} + \vecw_{j}^{(k)}{}^\top \vecx_t^{(k-1)}  ), \qquad j = 1, \ldots, n_k,  \ k = 2, \ldots, r,
\end{align*}
with $ \vecx_t^{(k)} = ( x_{t1}^{(k)}, \ldots, x_{tn_k}^{(k)} )^\top $, for $k = 2, \ldots, r $. 
We can write these equations as 
\[
\vecx_t^{(k)} =  \vecg_k^{(\matW^{(k)}, \vecb_k)}( \vecx_t^{(k-1)} ) = g_k( \vecb_k + \matW^{(k)}  \vecx_t^{(k-1)} ), 
\]
where  for a real-valued function $f $ defined on reals and a vector $ \vecx $ the expression $ f( \vecx ) $ is understood component-wise. Thus, the output $ \vecx_t $ of the $r$th hidden layer for an input $ \vecz_t $ is given by
\[
  \vecx_t = \vecg_r^{(\matW^{(r)}, \vecb_r)} \circ \cdots \circ \vecg_1^{(\matW^{(1)}, \vecb_1)}( \vecz_t )
\]
with $j$th coordinate
\begin{equation}
\label{DeepELM}
  x_{tj} = g_r^{(\matW_j^{(r)}, b_{jr})} \circ  \vecg_{r-1}^{(\matW^{(r-1)}, \vecb_{r-1})} \circ \cdots \circ  \vecg_1^{(\matW^{(1)}, \vecb_1)}( \vecz_t  ),
\end{equation}
where $ \matW_j^{(r)} $ denotes the $j$th row of $ \matW^{(r )} $.

Deep learners are often not fully connected and use specialized (fixed) topologies. For example, convolutional layers and pooling layers are heavily used in deep learners processing image data. If the $k$th layer is a convolutional layer, the input $ \vecx_t^{(k-1)} $ (where $ \vecx_t^{(0)} = \vecz_t $) is structured in $n_k $ subvectors of dimensions $ n_{k1}, \ldots, n_{k,n_k} $, such as $ \vecx_t^{(k-1)}{}^\top = (  \vecx_{t1}^{(k-1)}{}^\top, \ldots, \vecx_{t,n_k}^{(k-1)}{}^\top ) $, and the $j$th node  computes $n_k $ convolutions of the $j$th subvector with weights $ \vecw_{tj}^{(k)} \in \R^{n_{k,n_j}} $,
\[
  x_{tj}^{(k)} = \vecw_{tj}^{(k)}{}^\top \vecx_{tj}^{(k-1)}{}, \qquad j = 1, \ldots, n_k,
\]
e.g. to compute local (image) features. Such convolutional layers are typically followed by pooling layers computing the mean or max feature activation to aggregate those convolved features over certain regions.

Let $ \vecx_t = ( x_{t1}, \ldots, x_{tp} )^\top $ be the output of the last hidden layer. The output layer now processes these values using the linear activation function, such that the output $ o_t^{(\nu)} \in \R $ of the $ \nu$th output neuron is given by
\begin{equation}
\label{ELM2}
	o_t^{(\nu)} = \vecx_t^\top \bfbeta_{\nu}, \qquad t = 1, \ldots, T,
\end{equation}
for $ p $-dimensional weighting vectors $ \bfbeta_{\nu} $, $ \nu = 1, \ldots, d $.  Here we assume, for simplicity of presentation, that the first elements of the $ \vecx_t $'s are ones, such that the bias term is absorbed into $ \bfbeta_{\nu} $. The $d$-dimensional net output for an input sample $ (\vecY_t, \vecz_t) $, $ t = 1, \dots, T $, can be written as
\[
  \veco_t^\top = (o_t^{(1)}, \ldots, o_t^{(d)} ) = \vecx_t^\top \matB, \qquad t = 1, \ldots, T,
\]
when defining the  $ p \times d $ weighting matrix  $ \matB = ( \bfbeta_1, \ldots, \bfbeta_d ) $, and by stacking these $T$ rows we may write
\[
  \matO_T = \matX_T \matB,
\]
where $ \matO_T = ( \veco_1, \ldots, \veco_T )^\top $ is the $ T \times d $ output matrix.  The output layer of a single hidden layer network is parameterized by the $ p \times d $ weighting matrix  $ \matB $, and the whole network is parameterized by $ \bfvartheta = (\vecb, \matW, \matB ) $ where $ \vecb = (b_1, \ldots, b_p) $ and $ \matW = (\vecw_1, \ldots, \vecw_p ) \in \R^{q \times p} $. A multilayer network is parameterized, for a fixed topology of the net, by $ \bfvartheta = ( \matW^{(j)}, \vecb_j : j = 1, \ldots, r ) $.

An ELM uses random weights for all layers except the output layer and thus optimizes only the output weights. Usually, independent and identically distributed random biases and random weights are generated to connect the inputs and the hidden neurons. For example $ b_j \sim \calN(0, \sigma_b^2 ) $ and $ \vecW^{(j)} \sim \calN(\vecnull, \sigma_w^2 \matid ) $, $ j = 1, \ldots, r $, for $ \sigma_b^2, \sigma_w^2  > 0 $. See \cite{Dudek2019} for some tuned choices of the probability distributions for the weights depending on the activation function. As the law of the weights is chosen by the user, one can easily implement networks with sparse weights. The random choice of the connection weights generates samples of neuron outputs $ (x_{1j}, \ldots, x_{Tj}) $, $ j = 1, \ldots, p, $ which are independent respectively conditionally independent given the inputs $ \vecz_t $, since the weights of each neuron are generated independently at random. 


The weights of the output layer of an ELM are optimized by minimizing a (penalized) least squares criterion, as proposed by \cite{Huang2004} without refering to a stochastic regression model. But as well known, this is equivalent to a (penalized) least squares regression of the model
\begin{equation} 
	\label{AuxModel}
	\matY_T = \matO_T + \mathbb{e}_T,
\end{equation} 
where  $ \matY_T = ( \vecY_T^{(1)}, \ldots, \vecY_T^{(d)} ) $ is the $ T \times d $ matrix of target value, $ \vecY_T^{(\nu)} = ( Y_{1}^{(\nu)}, \ldots, Y_{T}^{(\nu)} )^\top $ is the available sample of the $ \nu$th target variable and $ \mathbb{e}_T $ is a $ T \times d $ random matrix of mean zero errors.

I.i.d. sampling of the random weights is a common approach, but there are two reason to allow for non-i.i.d. sample, especially dependent sampling:

\begin{remark} Attaching random $ \ell_1 $-weights is attractive to achieve sparsity of the network weights and can be easily achieved by using a Dirichlet distribution. 
\end{remark}

\begin{remark}
We conjecture that dependent sampling may improve the performance considerably. For example, if the the data splits into two groups depending on the value of one input variable, generating at the last but one layer groups of neurons which generate multivariate Gaussian distributions 
strongly concentrated around different subspaces may simplify the separation of the subsamples at the output layer. 
\end{remark}

\subsection{Representable learning problems under nonstationarity}
\label{ReprLProb}

To simplify presentation, let us fix $ \nu \in \{ 1, \dots, d \} $. Assume for a moment that  $ ( \vecX, Y^{(\nu)}), ( \vecX_t, Y_t^{(\nu)} ), $ $t = 1, 2, \dots $, is a strictly stationary sequence defined on a common probability space $ ( \Omega, \calF, \PPP ) $. Denote the expection w.r.t $ \PPP $ by $ \EEE $. The solution, $ \bfbeta_\nu $, of the $ \nu $th theoretical least squares problem, $ \EEE (Y_t^{(\nu)} - \vecX_t^\top \vecb )^2 \to \min_\vecb $, is independent of time,  satisfies $ [ \EEE( \vecX \vecX^\top ) ] \bfbeta_\nu = \EEE( Y \vecX ) $, and is therefore given by $ \bfbeta_\nu = [ \EEE( \vecX \vecX^\top ) ]^{-1} \EEE( Y \vecX ) $ if the matrix of second order moments $  \EEE( \vecX \vecX^\top ) $ is regular. But for nonstationary data, as we shall study here, we need to define an appropriate framework where the theoretical solutions converge to a well defined quantity as the sample size and the dimension grow. The following discussion is not required for the results of the following sections, but it is insightful and also clarifies the role of the true parameters $ \bfbeta_\nu $, $ \nu \ge 1 $.

Let us drop the stationarity assumption. Instead, we essentially assume ergodicity in the form of the strong law of large numbers (SLLN). These conditions are sufficient and avoid  convergence of the process itself. Precisely, we assume that for all $ \nu \ge 1 $

\begin{align}
	\label{AsConv}
& \text{$ \{ \vecX_t Y_t^{(\nu)} \}$  satisfies the SLLN and }\text{$ |  \vecX_t Y_t^{(\nu)} | \le W, t \ge 1, $ for some  $ W $ with $ \EE_{Y|X}(W) < \infty $},  
\end{align}
Here $ \EEE_{Y|X} $ denotes the conditional expectation given  $\vecX_t, t \ge 1 $.  Recall that a sufficient condition for the second assumption is $ \sup_{t \ge 1} \EE | \vecX_t Y_t^{(\nu)} |^{1+\delta} $ for some $ \delta > 0 $. It can also be replaced by uniform integrability.

For a sample $ \bfxi_1, \ldots, \bfxi_T $ of random vectors denote the associated empirical measure   by $ \PPP_T(A) = \frac{1}{T} \sum_{t=1}^T \eins_A( \bfxi_t ) $. $ \PPP_T $ defines an expectation operator for random vectors $ \vecZ $ and vector- or matrix-valued mappings $ h $ such that $ \int h( \vecu ) \, d \PPP_T( \vecu ) $ exists elementwise, namely 
\[
 \EEE_T( h( \vecZ ) ) = \int \vecu \, d\PPP_T( \vecu ) = \frac{1}{T} \sum_{t=1}^T h (\bfxi_t),
\] 
i.e., the mean of $ h( \vecZ ) $ under $ \vecZ \sim \PPP_T $. In addition, introduce the operator
\begin{equation}
\label{DefExp}
\EEE_T^*( h (\vecZ) ) := \EEE_{Y|X}( \EEE_T( h( \vecZ ) ) ) = \frac{1}{T} \sum_{t=1}^T \EEE_{Y|X}( h( \bfxi_t) ),
\end{equation}
the average of conditional expectations.
Now put $ \bfxi_t = (Y_t^{(\nu)}, \vecX_t ) $, $ t \ge 1 $.  $ \EEE_T^* $ serves as an appropriate analogon for $ \EEE $. Thus, consider the least squares criteria
\[ 
	\EEE_T^* (\vecY - \vecX^\top \vecb)^2 = \frac{1}{T} \sum_{t=1}^T \EEE_{Y|X}( Y_t^{(\nu)} - \vecX_t^\top \vecb )^2 
\] 
and the associate multivariate versions 
$
   \EEE_T^* \| \matY_T - \matX_T \matB \|_2^2 = \sum_{\nu=1}^d \EEE_T^* \| Y^{(\nu)} - \vecX^\top \bfbeta_\nu \|_2^2 $.
Differentiating\footnote{Use $ \frac{\partial \veca^\top \vecx}{\partial \vecx} = \veca^\top $ and observe that $ \vecx^\top \bfbeta_\nu \vecx = \vecx \vecx^\top \bfbeta_\nu $. } with respect to $ \vecb $ leads to the equation
\[
  [  \EEE_T^*( \vecX \vecX^\top ) ] \bfbeta_{T\nu} = \EEE_T^*( \vecX Y^{(\nu)} )
\]
for the optimal solution $  \bfbeta_{T\nu} $. If $ \EEE_T^*( \vecX \vecX^\top ) = T^{-1} \sum_{t=1}^T \vecX_t \vecX_t^\top $  is invertible, we obtain
\[
  \bfbeta_{T\nu} = [  \EEE_T^*( \vecX \vecX^\top ) ]^{-1} \EEE_T^*( \vecX Y^{(\nu)} ) = \left( \frac{1}{T} \sum_{t=1}^T \vecX_t \vecX_t^\top \right)^{-1} \left( \frac{1}{T} \sum_{t=1}^T \EEE_{Y|X} ( \vecX_t Y_t^{(\nu)} ) \right).
\]
The following result studies the asymptotic limit of these least squares problems naturally associated to a nonstationary sequence $ (\vecY_t^{(\nu)}, \vecX_t) $, as $ T, p \to \infty $.  Since at this point we are mainly interested in identifying appropriate limits, we impose somewhat top-level assumptions. Results and assumptions in  Section~\ref{Sec: Theory} are more tailored to the assumed non-stationary model of Section~\ref{Sec: Model Assumptions}. For a matrix $ \matA $ the operator norm associated to the vector $2$-norm $ \| \cdot \|_2 $ is denoted by $ \| \matA \|_2 $.

\begin{theorem} 
	\label{GeneralConsistency}
	Supose that (\ref{AsConv}) holds. Then
	\begin{equation}
		\label{ConvGeneralElementwise}
		\EEE_T^*( \vecX Y^{(\nu)} )  \to  \EEE( \vecX Y^{(\nu)} )
	\end{equation}
	elementwise, as $ T, p \to \infty $. If (\ref{ConvGeneralElementwise}) holds w.r.t. $ \| \cdot \|_2 $, $ \| \EEE_T^*( \vecX \vecX^\top )  - \bfSigma_\vecx \|_2 = o(1) $ for some
	regular matrix $ \bfSigma_\vecx $ with $ \| \bfSigma_\vecx^{-1} \|_2 = O(1) $, and
	if additionally
	\begin{equation}
	\label{AsympCond}
	 \| \EEE( \vecX Y^{(\nu)} )  \|_2 = O(1),
	\end{equation}
	then
	\[
	  \| \bfbeta_{T\nu} - \bfbeta_\nu \|_2 \to 0,  \quad \bfbeta_\nu = \bfSigma_\vecx^{-1} \EEE( \vecX Y^{(\nu)} ),
	\]
	as $ T, p \to \infty $.
\end{theorem}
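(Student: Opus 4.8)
The plan is to treat the two assertions in turn. For the elementwise convergence (\ref{ConvGeneralElementwise}), fix a coordinate $j$ and set $S_T := \frac{1}{T}\sum_{t=1}^T \vecX_t Y_t^{(\nu)}$. The strong law assumed in (\ref{AsConv}) gives $(S_T)_j \to (\EEE(\vecX Y^{(\nu)}))_j$ almost surely, while the uniform bound $|\vecX_t Y_t^{(\nu)}| \le W$ passes to the average, so that $|(S_T)_j| \le W$ with $\EEE_{Y|X}(W) < \infty$. The conditional dominated convergence theorem then permits interchanging the limit with $\EEE_{Y|X}$, and since the deterministic limit is unaffected by conditioning,
\[
  \EEE_T^*\big((\vecX Y^{(\nu)})_j\big) = \EEE_{Y|X}\big((S_T)_j\big) \longrightarrow (\EEE(\vecX Y^{(\nu)}))_j \qquad \text{a.s.}
\]
This is the claimed coordinatewise convergence as $T,p\to\infty$.

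For the consistency statement, abbreviate $\matA_T := \EEE_T^*(\vecX\vecX^\top)$ and $\vecv_T := \EEE_T^*(\vecX Y^{(\nu)})$, so that $\bfbeta_{T\nu} = \matA_T^{-1}\vecv_T$ and $\bfbeta_\nu = \bfSigma_\vecx^{-1}\vecv$ with $\vecv := \EEE(\vecX Y^{(\nu)})$. I would use the decomposition
\[
  \bfbeta_{T\nu} - \bfbeta_\nu = \matA_T^{-1}(\vecv_T - \vecv) + (\matA_T^{-1} - \bfSigma_\vecx^{-1})\vecv,
\]
which by submultiplicativity of the operator norm yields
\[
  \|\bfbeta_{T\nu} - \bfbeta_\nu\|_2 \le \|\matA_T^{-1}\|_2\,\|\vecv_T - \vecv\|_2 + \|\matA_T^{-1} - \bfSigma_\vecx^{-1}\|_2\,\|\vecv\|_2.
\]
Here $\|\vecv_T - \vecv\|_2 = o(1)$ is the hypothesis that (\ref{ConvGeneralElementwise}) holds in $\|\cdot\|_2$, and $\|\vecv\|_2 = O(1)$ is (\ref{AsympCond}); it remains to bound the two matrix factors.

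The key is a Neumann-series perturbation argument kept uniform in the growing dimension $p$. Writing $\matA_T = \bfSigma_\vecx(\matid + \bfSigma_\vecx^{-1}(\matA_T - \bfSigma_\vecx))$ and combining $\|\matA_T - \bfSigma_\vecx\|_2 = o(1)$ with $\|\bfSigma_\vecx^{-1}\|_2 = O(1)$, choose $T$ large enough that $\|\bfSigma_\vecx^{-1}\|_2\,\|\matA_T - \bfSigma_\vecx\|_2 \le 1/2$; then the bracketed factor is invertible with inverse of norm at most $2$, whence $\|\matA_T^{-1}\|_2 \le 2\|\bfSigma_\vecx^{-1}\|_2 = O(1)$. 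The identity $\matA_T^{-1} - \bfSigma_\vecx^{-1} = \matA_T^{-1}(\bfSigma_\vecx - \matA_T)\bfSigma_\vecx^{-1}$ then gives $\|\matA_T^{-1} - \bfSigma_\vecx^{-1}\|_2 \le \|\matA_T^{-1}\|_2\,\|\matA_T - \bfSigma_\vecx\|_2\,\|\bfSigma_\vecx^{-1}\|_2 = o(1)$. Substituting both estimates into the displayed inequality shows $\|\bfbeta_{T\nu} - \bfbeta_\nu\|_2 \to 0$.

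I expect the main obstacle to be keeping every estimate uniform as $p\to\infty$: because $\matA_T$ and $\bfSigma_\vecx$ are $p\times p$ with $p$ growing, a bound on $\|\matA_T^{-1}\|_2$ obtained naively could deteriorate with the dimension. The hypotheses $\|\bfSigma_\vecx^{-1}\|_2 = O(1)$, a uniform lower bound on the smallest eigenvalue, and $\|\vecv\|_2 = O(1)$ are exactly what make the perturbation bound dimension-free. A secondary point is that the coordinatewise SLLN of (\ref{AsConv}) delivers only elementwise convergence, so the $\|\cdot\|_2$-convergence invoked in the second part is a genuine extra assumption rather than a consequence of the first, as the statement of the theorem itself makes explicit.
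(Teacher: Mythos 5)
Your proposal is correct and follows essentially the same route as the paper: the same SLLN-plus-dominated-convergence argument for (\ref{ConvGeneralElementwise}) and the identical decomposition $\bfbeta_{T\nu} - \bfbeta_\nu = \matA_T^{-1}(\vecv_T - \vecv) + (\matA_T^{-1} - \bfSigma_\vecx^{-1})\vecv$ for the consistency claim. The only (welcome) difference is that your Neumann-series perturbation step makes explicit the bounds $\|\matA_T^{-1}\|_2 = O(1)$ and $\|\matA_T^{-1} - \bfSigma_\vecx^{-1}\|_2 = o(1)$, which the paper uses but does not derive within the proof itself.
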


\begin{proof}
	Observe that $ \EEE_T^*( \vecX Y^{(\nu)} ) =  \EEE_{Y|X} \left( \frac{1}{T} \sum_{t=1}^T \vecX_t Y_t^{(\nu)} \right) $. Since $ T^{-1} \sum_{t=1}^T \vecX_t^{(\nu)} Y_t^{(\nu)} - \EEE( \vecX_t Y_t^{(\nu)} ) $ is dominated by $ 2 W $, the SLLN and dominated convergence yield
	\[
	  \EEE_T^*( \vecX Y^{(\nu)} ) - \EEE(  \EEE_T^*( \vecX Y^{(\nu)} )  )
	  = \EEE_{Y|X} \left(  \frac{1}{T} \sum_{t=1}^T [ \vecX_t Y_t^{(\nu)} - \EEE( \vecX_t Y_t^{(\nu)} ) ] \right) \to 0, \quad a.s.,
	\]
	as $ T, p \to \infty $, and using $ \EEE = \EEE \EEE_{Y|X} $ we also obtain
	\[
	 \EEE(  \EEE_T^*( \vecX Y^{(\nu)} )  ) - \EEE( \vecX Y^{(\nu)} ) 
	 	= \EEE\left(  \frac{1}{T} \sum_{t=1}^T \vecX_t Y_t^{(\nu)} -  \EEE( \vecX Y^{(\nu)} ) \right) 
	 \xrightarrow[T\to\infty]{} 0,
	\]
	if $ T, p \to \infty $. Next write 
	\begin{align*} 
	& [\EEE_T^*( \vecX \vecX^\top )]^{-1} \EEE_T^*( \vecX Y^{(\nu)} ) 
	- \bfSigma_\vecx^{-1}  \EEE( \vecX Y^{(\nu)} ) \\
	& \quad =
	[\EEE_T^*( \vecX \vecX^\top )]^{-1} ( \EEE_T^*( \vecX Y^{(\nu)} ) - \EEE( \vecX Y^{(\nu)} ) ) + (  [\EEE_T^*( \vecX \vecX^\top )]^{-1} - \bfSigma_\vecx^{-1} ) \EEE( \vecX Y^{(\nu)} ) ).
	\end{align*}
	We obtain
	\begin{align*}
	  & \| [\EEE_T^*( \vecX \vecX^\top )]^{-1} \EEE_T^*( \vecX Y^{(\nu)} ) 
	  - \bfSigma_\vecx^{-1}  \EEE( \vecX Y^{(\nu)} ) \|_2 \le R_{T1} + R_{T2}, 
	\end{align*}
	where
	\begin{align*}
	  R_{T1} & = \| [\EEE_T^*( \vecX \vecX^\top )]^{-1} ( \EEE_T^*( \vecX Y^{(\nu)} ) - \EEE( \vecX Y^{(\nu)} )  \|_2  \\
	  & \le \left( \| [\EEE_T^*( \vecX \vecX^\top )]^{-1} - \bfSigma_\vecx^{-1} \|_2 + \| \bfSigma_\vecx^{-1} \|_2 \right) \| \EEE_T^*( \vecX Y^{(\nu)} ) - \EEE( \vecX Y^{(\nu)} ) \|_2 \\
	  & = o(1), 
	 \end{align*}
and
	\[  R_{T2} = \|  (  [\EEE_T^*( \vecX \vecX^\top )]^{-1} - \bfSigma_\vecx^{-1} ) \EEE( \vecX Y^{(\nu)} ) ) \|_2 
	   \le \|  (  [\EEE_T^*( \vecX \vecX^\top )]^{-1} - \bfSigma_\vecx^{-1} ) \|_2  \| \EEE( \vecX Y^{(\nu)} )  \|_2 \
	   = o(1), \]
as 	$ T, p \to \infty $.
\end{proof}

Condition (\ref{AsympCond}), which will also play a role when studying the rigde estimator, can also be justified as follows. Observe that $ \vecX^\top \bfbeta_\nu = \vecX^\top \bfSigma_\vecx^{-1} \EEE( \vecX Y^{(\nu)} ) $ is the orthogonal projection of $ Y^{(\nu)} $ onto the span of the entries $ X_1, \dots, X_p $ of $ \vecX $ and $ \bfbeta_\nu $ are the corresponding coefficients\footnote{If $ Y^{(\nu)} = \vecX^\top \bfeta $ for some $ \bfeta \in \R^p $, then $ \vecX^\top \bfbeta_\nu = \vecX^\top \bfSigma_\vecx^{-1} \EEE( \vecX \vecX^\top \bfeta ) =\vecX^\top \bfeta$.}.
 For many problems it is reasonable to assume that the signal (target)  $ Y^{(\nu)} $ has a $L_2$-representation in terms of the variables  $ X_j $, $ j = 1, 2, \ldots  $, i.e.
\begin{equation}
\label{ReprEQ}
  Y^{(\nu)} = \sum_{j=1}^\infty \theta_j^{(\nu)} X_j + \epsilon^{(\nu)}, \qquad
  \epsilon^{(\nu)} \perp \text{span} \{  X_1, X_2, \ldots \},
\end{equation}
in mean square, where $ \theta_j^{(\nu)} $ is the $j$th element of $  [ ( \EEE(X_k X_j ) )_{j,k\ge 1} ]^{-1} (\EEE( Y^{(\nu)} X_j ))_{j=1}^\infty  $ provided that the infinite-dimensional matrix $ ( \EEE(X_k X_j ) )_{j,k\ge 1} $ has absolutely summable columns and defines a bijective mapping $ \R^\infty \to \R^\infty $. Thus, the solution $ \bfbeta_\nu = \bfSigma_\vecx^{-1} \EEE( \vecX Y^{(\nu)} ) $ can be seen as an approximation of $ \bftheta_{\nu,1:p} = ( \theta_1, \ldots, \theta_p ) $. For an orthonormal system we simply have
\[
  \theta_j^{(\nu)} = \EEE( Y^{(\nu)} X_j ), \qquad j \ge 1. 
\]
If the system $ ( X_j )_{j=1}^\infty $ spans $ L_2( \Omega, \calF, \PPP )$, then the term $ \epsilon^{(\nu)} $ in (\ref{ReprEQ}) vanishes, of course. It is natural to impose decay conditions on the coefficients $ \bftheta_\nu = ( \theta_j^{(\nu)} )_{j=1}^\infty $. For example,  $ \| \bftheta_\nu \|_{\ell_0} = s < \infty $ or an $ \ell_2 $-condition,
\begin{equation}
\label{SparseRepr}
 \theta_j^{(\nu)} = O( j^{-\gamma} ),
\end{equation}
for some $ \gamma > 1/2 $, such that the projection coefficients have finite $ \ell_2 $-norm, $ \sum_j ( \theta_j^{(\nu)} )^2 < \infty $. The $ L_2 $ representation of a signal $ Y^{(\nu)} $ implies that the limit of the  related least squares solution yields the first $p$ coefficients of the series representation.   

\begin{definition} $ Y^{(\nu)} $ is called {\em representable in terms of $ X_1, X_2, \ldots $}, if (\ref{ReprEQ}) holds. It is called  {\em $ \ell_2 $-representable} if (\ref{ReprEQ})  holds  and $ \| \bftheta_\nu \|_{\ell_2} < \infty $.
\end{definition}

We shall provide consistency results for the ridge estimator, see Theorem~\ref{THRIDGE}, under the assumption that
\begin{equation}
\label{IsSatisfied}
  \| \bfSigma_\vecx \bfbeta_\nu \|_2 = \| \EEE( \vecX Y^{(\nu)} ) \|_2 = O(1),
\end{equation}
which is a natural one in view of the above discussion, cf.  \eqref{AsympCond}. Contrary, the consistency results for the $ \ell_s $-penalized least squares solution requires a sparsity constraint on $ \bfbeta_\nu $.

\section{Model and assumptions}
\label{Sec: Model Assumptions}

For simplicity of presentation, we assume that the spatial domain is given by the unit interval $ [\bm 0, \bm 1] $ in the $ q $-dimensional space. 

\subsection{Non-stationary model}

We model the data  by a discretely sampled spatial-temporal process modeling a space-time environment. So, let us assume we are given a $ d $-dimensional process
\begin{equation}
\label{STProcess}
\vecY_t( \vecs, \vecx ) = \bfmu_t( \vecs, \vecx ) + \bfeps_t( \vecs ), \qquad \vecs \in [\bm 0, \bm 1], \vecx \in \R^p, t = 1, \ldots, T,
\end{equation}
where  $ \bfmu_t(  \vecs, \vecx ) \in \R^{d} $ is the mean of $ \vecY_t $, 
\begin{equation}
\label{ModelErrors}
  \bfeps_t( \vecs ) = \vecH( \vecU_t( \vecs ), \ldots, \vecU_{t-m}( \vecs ) ) -
    \EE \vecH( \vecU_t(\vecs), \ldots, \vecU_{t-m}( \vecs ) )
\end{equation}
for some $m \in \N_0 $ and a function $ \vecH : \R^{d\times m} \to \R^d $ are error terms , where 
\begin{equation}
	\label{LinFactorModel}
	\vecU_t(\vecs ) = \matPhi( \vecs ) \matF_t + \vecE_t, \qquad \vecs \in [\bm 0, \bm 1], \vecx \in \R^p, t = 1, \ldots, T
\end{equation}
with
\[
  \matPhi(\vecs) = \left( \phi_{\nu \ell} d( \vecs, \vecsf_\ell ) \right)_{1 \le \nu \le d \atop 1 \le \ell \le L},
  \qquad
  \matF_t = \left( \vecF_t( \vecsf_1 ), \ldots, \vecF_t( \vecsf_L ) \right)^\top.
\]
for (factor loading) coefficients $ \phi_{\nu \ell} \in \R $ and $ d $-variate mean zero factor processes
  $ \vecF_t( \vecsf_\ell ) $, $ \ell = 1, \ldots, L $, modeling the $L$ sources of randomness distributed over the spatial domain at sites $ \vecsf_1, \ldots, \vecsf_L \in [\bm 0, \bm 1] $. $ d : [\bm 0, \bm 1] \times [\bm 0, \bm 1] $ is a given function, such that $ d(\vecs, \vecsf_\ell ) $ models the effect of factor $ \ell $ on a moving object located at $ \vecs $,  and $ \vecE_t $ is the idiosyncratic noise with mean zero.  This means, at site $ \vecs $ and time $t $ the non-random spatial-temporal mean $ \bfmu_t( \vecs, \vecx ) $ is disturbed by a nonlinear noise process (dependent on $ \vecs $ and $t$). For $ m = 0 $ and $ \vecH = \id $ it is given by $ ( \sum_{\ell=1}^L \phi_{\nu \ell} d(\vecs, \vecsf_\ell ) \vecF_t( \vecsf_\ell )  )_{\nu=1}^d + \vecE_t $, a linear combination of spatially distributed factors plus a noise process common to all spatial sites, but  in general it may depend on $m$ lagged values in a nonlinear way.  The above random field is sampled at location $ \vecs_t $ yielding the error term
  \[
		\bfeps_t = \bfeps_t( \vecs_t ) = \vecH( \vecU_{t:m} ) - \EE \vecH( \vecU_{t:m} ), \qquad  t = 1, \ldots, T,
  \]
  where $ \vecU_{t:m} = ( \vecU_t, \ldots, \vecU_{t-m} ) $ with $ \vecU_t = \vecU_t( \vecs_t ) $.  Observe that this model is related to classical factor models but differs in several aspects: In a classical factor model $ \matPhi \vecf_t $ each coordinate uses the same $d$-variate factors $ \vecf_t $ but different weights (loadings) to combine them. In model (\ref{LinFactorModel}) each coordinate uses different factors and the loadings may depend on the distance to the spatial location of the factors. Further, by allowing for a nonlinear transformation $ \vecH $ model (\ref{ModelErrors}) covers for more complicated error models such as $ \bfeps_t = h( \vecU_{t-1} ) \vecU_t $ for some positive function $h$, such that the past value $ \vecU_{t-1} $ affects the scale of the time $t$ noise.

In model (\ref{LinFactorModel})  $ d(\vecu, \vecv) $, $ \vecu, \vecv \in [\bm 0, \bm 1] $, is a nonincreasing function of the distance $ \| \vecu  - \vecv \| $ taking values in the unit interval $ [0,1] $ and attaining its maximal value on the diagonal, i.e., $ d( \vecu, \vecu ) = 1 $ for all $ \vecu \in [\bm 0, \bm 1] $. Natural choices are the function
\[
d(\vecu, \vecv) = 1 -  \| \vecu  - \vecv \|/M, \qquad \vecu, \vecv \in [\bm 0, \bm 1],
\]
where $ M = \sup \{ \| \vecu - \vecv \| : \vecu, \vecv \in [\bm 0, \bm 1] \} $, or, for some positive constant $ c_d $,
\[
d(\vecu, \vecv) = \frac{1}{1+c_d\| \vecu  - \vecv \| }, \qquad \vecu, \vecv \in [\bm 0, \bm 1],
\] 
or the exponential 
\begin{equation}
\label{ExpD}
	d(\vecu, \vecv) = \exp( - a \| \vecu  - \vecv \|), 
\end{equation}
for a constant $ a \ge 0 $ determining the smallest weight, where $ \| \cdot \| $ denotes an arbitrary vector norm.

The factor processes in (\ref{LinFactorModel}) are assumed to be given by $ d $-variate linear processes
\[
  \vecF_t^{(\ell)}( \vecsf_\ell ) = \sum_{j=0}^\infty \matC_{j}^{(\ell)}{} \bfvareps_{t-j}^{(\ell)}, \qquad t = 1, \ldots, T, \ell = 1, \ldots, L,
\]
for $ d \times d $ coefficient matrices $ \matC_{j}^{(\ell)} = ( \vecc_{j}^{(\ell,1)}{}, \ldots, \vecc_{j}^{(\ell,d)}{} )^\top  $ with rows $ \vecc_{j}^{(\ell,\nu)}{}^\top $, $ \nu = 1, \ldots, d $. In addition,  the idiosyncratic noise is modelled as 
\[
  \vecE_t = \sum_{j=0}^\infty \matC_{j}^{(0)}{} \bfvareps_{t-j}^{(0)}, \qquad t = 1, \ldots, T.
\]
A simulated trajectory of the noise model (for $ d = 1 $) is depicted in Figure~\ref{Fig1}.

\begin{figure}
	\begin{center}
		\includegraphics[width=3.5cm]{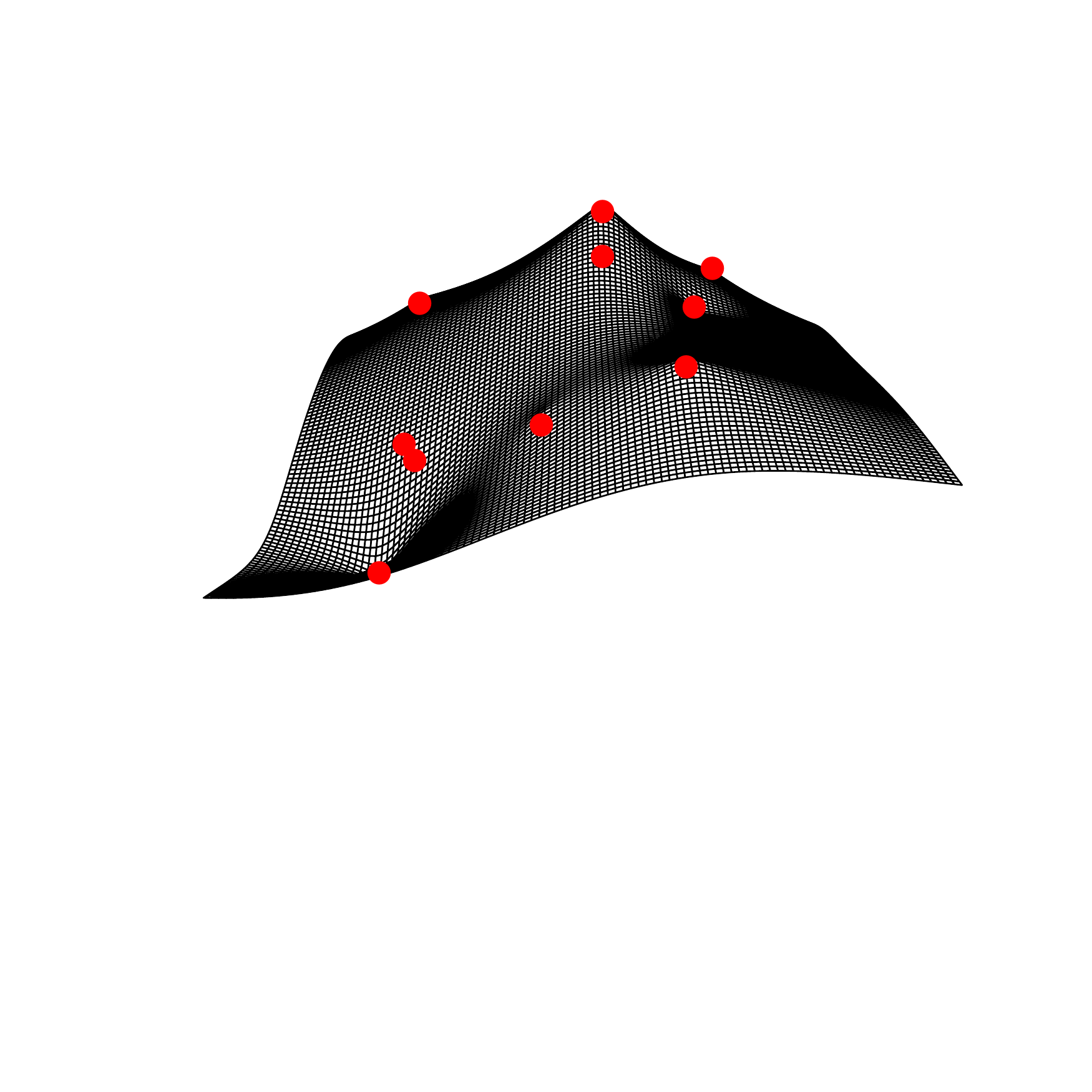}
		\includegraphics[width=3.5cm]{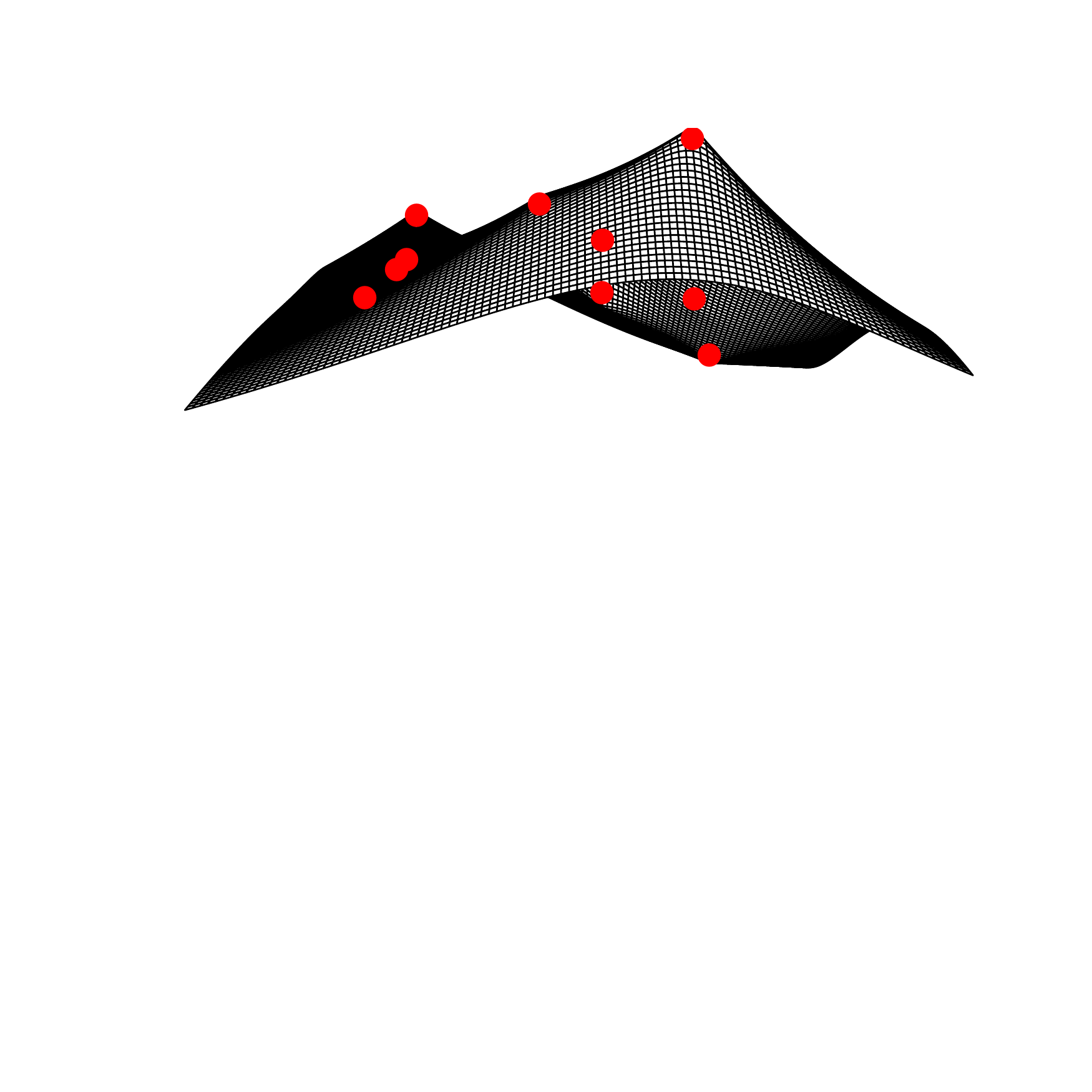}	\includegraphics[width=3.5cm]{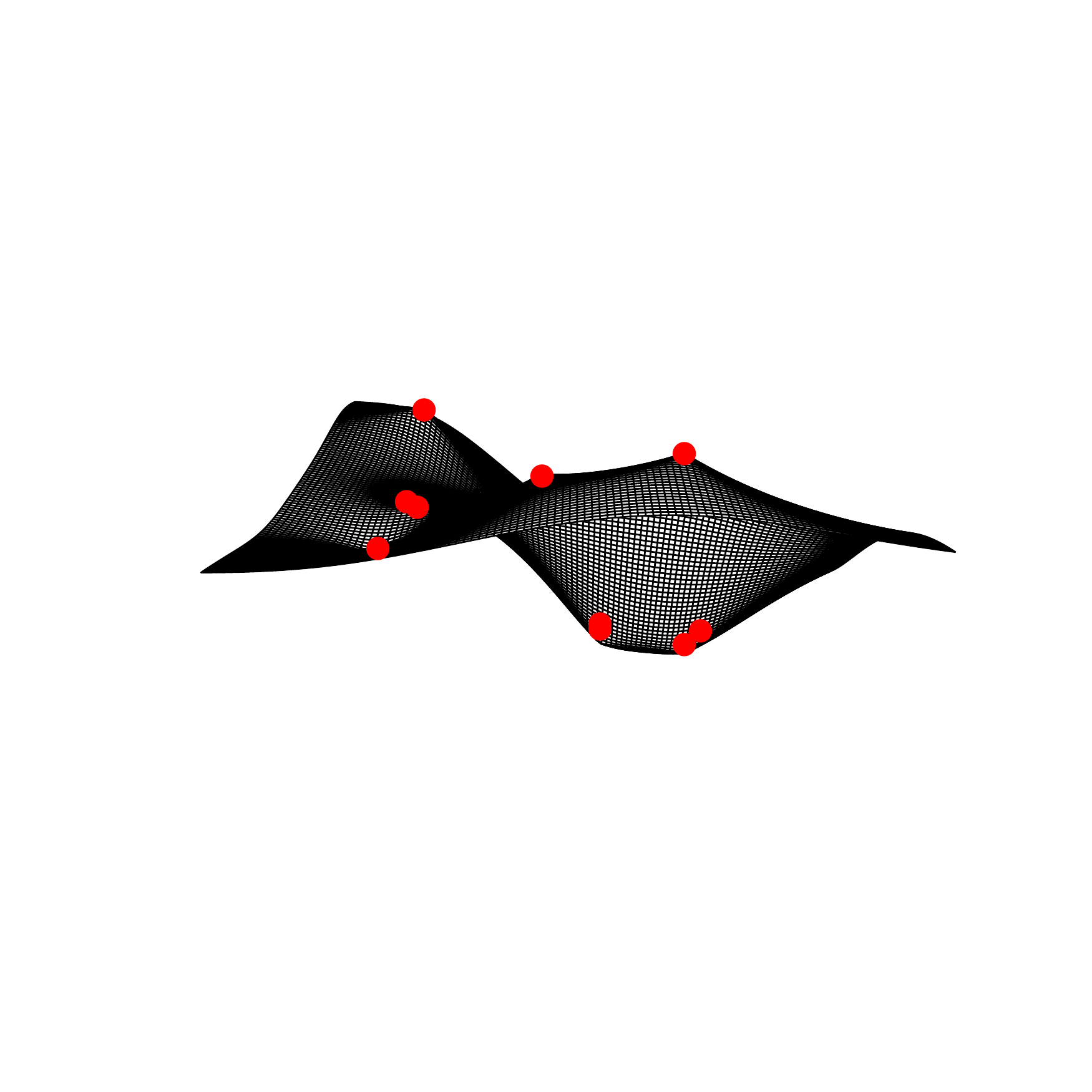}
		\includegraphics[width=3.5cm]{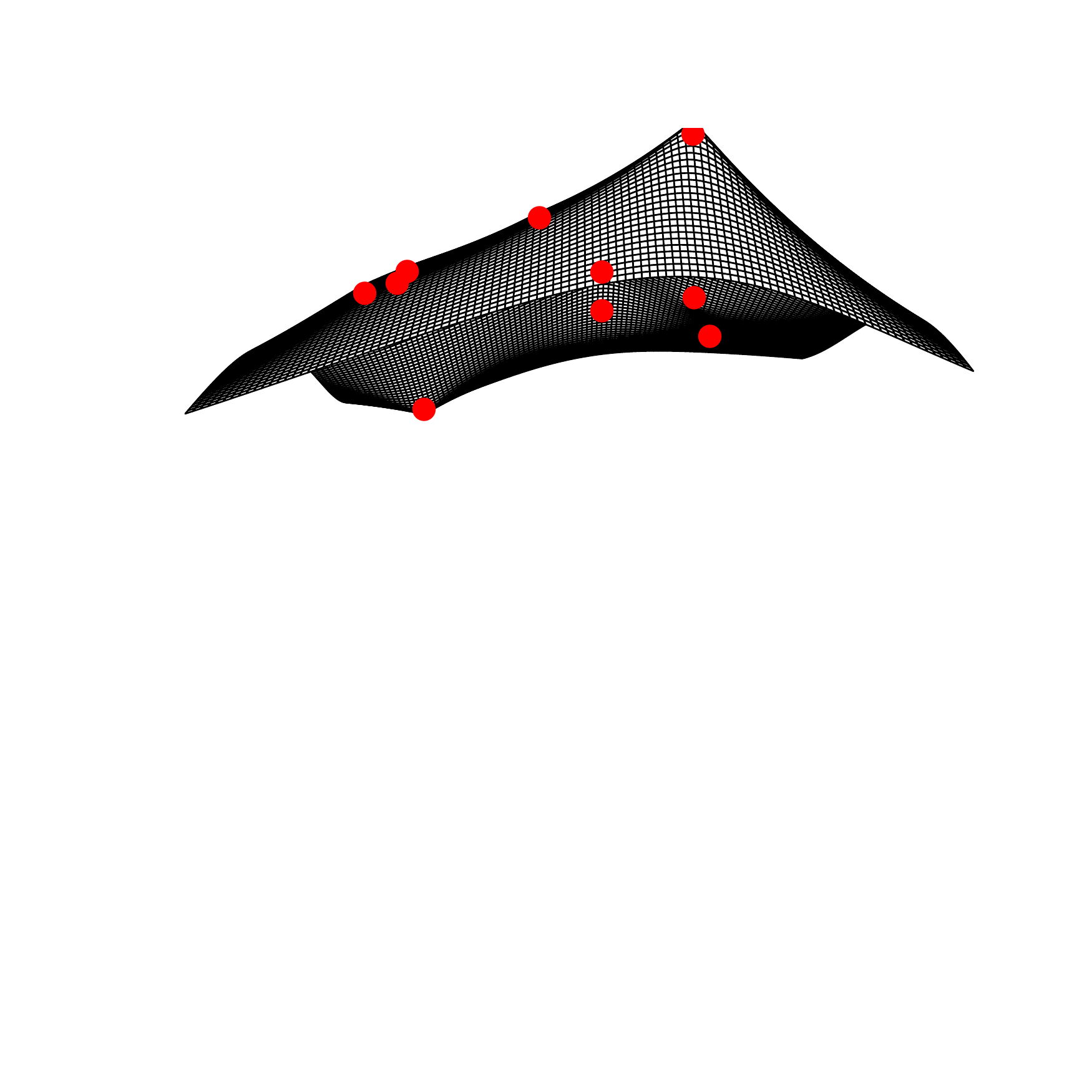}		
	\end{center}
	\caption{Four consecutive time points of a simulated trajectory of model (\ref{ModelErrors}) for $ \vecH = \id $ using the exponential function (\ref{ExpD}). The locations of the $L=10$ factor sources are marked by red points on the error surface.}
	\label{Fig1}
\end{figure}

\subsection{Assumptions on the errors}
\label{ModelErrors}

The assumptions on the error terms, the coefficients of the processes and the nonlinearity $ \vecH $ are as follows. It is assumed that
\begin{align}
\label{AssErrors1} 
&  \{ \bfvareps_t^{(\ell)} : t \in \Z \},  \ell = 1, \ldots, L,  \text{are $L$ independent sequences of independent random} \\ \nonumber & \text{vectors $ \bfvareps_t^{(\ell)} = \bfSigma_{\bfvareps}^{(\ell)}{}^{1/2} \vecu_t^{(\ell)}, t \in \Z, $ for $d$-dimensional mean zero random vectors $ \vecu_t^{(\ell)} $} \\ \nonumber
& \text{with independent coordinates,}
\end{align}
for symmetric $ d \times d $ matrices $ \bfSigma_{\bfvareps,t}^{(\ell)} > 0 $, and 
\begin{equation}
\label{AssErrors2}
  \sup_{t \in \Z} \max_{1 \le \ell \le L} \max_{1 \le \nu \le d} \EE | \varepsilon_t^{(\ell, \nu)} |^{2+\delta} < \infty, \qquad t \in \Z, \ell = 0, \ldots, L,
\end{equation}
for some $ \delta > 0 $, where $ \varepsilon_t^{(\ell, \nu)} $ denotes the $ \nu $th element of the random vector $ \bfvareps_t^{(\ell)} $. 

For the special case $ d = 1 $, we assume  
possibly inhomogenous but uniformly bounded variances $ \sigma_{t\ell}^2 = \EE( \varepsilon_t^{(\ell)}  )^2 $, i.e.,
\begin{equation}
\label{AssErrors3}
  C_V = \sup_{t \in \Z} \max_{1 \le \ell \le L} \sigma_{t\ell}^2 < \infty. 
\end{equation}
The variances of the idiosyncratic noise  are bounded away from zero,
\begin{equation}
\label{AssErrors4}
 \sigma_{t0}^2 \ge \underline{\sigma}_0^2 > 0,
\end{equation}
whereas for general $ d > 1 $ the latter assumption is replaced by
\begin{equation}
\label{AssErrors4GeneralD}
  \lambda_{\min}( \bfSigma_{\bfvareps,t} ) \ge \underline{\lambda}_{\min} > 0 
\end{equation}
for some constant $ \underline{\lambda}_{\min} $. Further, we assume that all $ \varepsilon_t^{(\ell,\nu)} $'s attain densities $ p_t^{(\ell,\nu)} $ satisfying
\begin{equation}
\label{AssErrors5}
\sup_{t \ge 1} \int_{\R} | p_t^{(\ell,\nu)}(z+x) - p_t^{(\ell,\nu)}(z) | \, dz = O( |x| ).
\end{equation}
This condition is satisfied, if the densities are of bounded variation.

The coefficients of the linear processes are assumed to satisfy the  algebraic weak dependence decay condition
\begin{equation}
\label{AssDecay}
 \max_{1 \le \ell \le L} \| \matC_j^{(\ell)} \|_\infty =   \max_{1 \le \ell \le L}  \max_{1 \le \nu \le d} \| \vecc_{j}^{(\ell,\nu)} \|_\infty \le C \max(j,1)^{-7/2-\theta}
\end{equation}
for some $ \theta > 0 $. Obviously, it then holds $ \max_{1 \le \ell \le L} \| \matC_j^{(\ell)} \bfSigma_{\bfvareps}^{(\ell)}{}^{1/2}  \|_\infty = O( \max(j,1)^{-7/2-\theta} ) $ as well. Therefore, one can absorb the matrices $ \bfSigma_{\bfvareps}^{(\ell)}{}^{1/2}  $ into the coefficient matrices $ \matC_j^{(\ell)} $ and hence assume that the innovations are independent with independent coordinates, since our results only require to impose condition (\ref{AssDecay}) on the coefficient matrices.

Lastly, we impose the following regularity condition on $\vecH$. The coordinate functions $ H_\nu $ of $ \vecH = ( H_1, \ldots, H_d ) $ are Lipschitz continuous with constant $L_H $,
\begin{equation}
\label{AssGLIP}
  \sup_{\vecx \not= \vecy} \frac{ | H_\nu( \vecx ) - H_\nu( \vecy )| }{ \| \vecx - \vecy \|_F } \le L_H
\end{equation}
and there exists a random $ d \times (m+1) $ matrix $ \matxi = ( \bfxi_0, \ldots, \bfxi_m ) $ which is independent of $ \{ \vecU_t \}  $ with $ \sup_{\nu \ge 1} \EE \| H_\nu(\matxi) \|_F^{2+\delta} < \infty $, where $ \delta $ is as in Assumption (\ref{AssErrors2}). This implies
\begin{equation}
\label{MomentsG}
  \sup_{\nu \ge 1} \EE | \epsilon_t^{(\nu)} |^{2+\delta} < \infty.
\end{equation}
In (\ref{AssGLIP})  $ \| \matA \|_F = \left( \sum_{i=1}^{n_1} \sum_{j=1}^{n_2} a_{ij}^2 \right)^{1/2} $ denotes the Frobenius matrix norm of a $ n_1 \times n_2 $ matrix $ \matA = (a_{ij})_{ij} $. The operator matrix norm associated to the Euclidean vector $2$-norm, the spectral norm, is denoted $ \| \matA \|_2 $ . Recall that $ \| \matA \|_2 = \sqrt{ \lambda_{\max}( \matA^\top \matA )}$, where $ \lambda_{\max}( \cdot ) $ denotes the largest eigenvalue, and that it is given by the spectral radius $ \rho(\matA) = \lambda_{\max}( \matA ) $ if $ \matA $ is symmetric. Also recall that both matrix norms are submultiplicative and satisfy
 $ \| \matA \|_2 \le \| \matA \|_F \le \sqrt{r(\matA^\top \matA )} \| \matA \|_2 $, where $ r(\matA) = \min(n_1, n_2) $ denotes the rank of $ \matA $.

\subsection{Assumptions on the regressors}

Throughout the rest of the paper, for simplicity of presentation, we change our notation and denote by $ \PP = \PPP_{|\vecX} $ the conditional probability given $ \vecX_t = \vecx_t $, $t \ge 1 $. Concerning the regressors and their relationship with the errors, we need to impose further regularity assumptions. We assume that the regressors are bounded and satisfy for positive definite $ \bfSigma_\vecx \in \R^{p \times p} $, constants $ 0 \le \eta $, $ C < \infty $ and a compact interval $ A\subset (0,\infty) $  
\begin{equation}
	\label{AssRegr1}
	\left\| \frac{1}{T} \sum_{t=1}^T \vecx_t \vecx_t^\top - \bfSigma_\vecx \right\|_2 \le C  \frac{p^\eta}{\sqrt{T}}\ \ \text{and} \ \ \text{spec}( T^{-1} \matX_T^\top \matX_T ) \subset A.
\end{equation}
For fixed $p$ the convergence is for  $ T \to \infty $ and the condition on the spectra of the Gram matrices $ T^{-1} \matX_T^\top \matX_T $ then follows from $ 0 < \lambda_1( \bfSigma_\vecx ) \le \cdots \le \lambda_p( \bfSigma_\vecx ) < \infty $. Compactness of the spectrum is a common assumption in the literature, see e.g. \cite{LiuYu2013}. If $p$ grows as $ T \to \infty $,  (\ref{AssRegr1}) assumes  compactness of the spectra of $ T^{-1} \matX_T^\top \matX_T $, $ T, p \ge 1 $, and a convergence rate for the associated Gram matrices. Our condition  is mild in view of the known results about convergence rates of covariance matrices. Especially, if the rows of $ \matX_T $ are realizations of independent subgaussian random vectors, then (\ref{AssRegr1}) holds for $ \eta = 1/2 $ with high probability of at least $ 1 - 2 e^{-(p+T^{\varepsilon_1})\varepsilon_2} $, for arbitrarily small $ \varepsilon_1, \varepsilon_2 > 0 $, see \cite[Th.~4.7.1]{Vershynin2018} and the subsequent discussion or \cite{Wainwright2019}. The necessity of the condition $ p = o(T) $ has also been identified and studied, see \cite[Th.~3.1]{LedoitWolf2004}, in the sense of consistency.


The central limit theorem for the least squares estimates $ \bhbeta_T^{(\nu)} $ needs an assumptions on  the matrices
\begin{equation}
	\label{DefGXe}
	\bfGamma_T^{(\nu)} = \frac{1}{T} \sum_{s=1}^T \sum_{t=1}^T \vecx_s \vecx_t^\top  \Cov( \epsilon_s^{(\nu)}, \epsilon_t^{(\nu)} ),
\end{equation}
namely that, a.s.,
\begin{equation} 
	\label{AssGXe}
	0 < \liminf_{T \to \infty} \lambda_{\min}( \bfGamma_T^{(\nu)} ) < \limsup_{T \to \infty} \lambda_{\max}( \bfGamma_T^{(\nu)} ) < \infty,
\end{equation}
where $ \lambda_{\min}( \matA) $ ($ \lambda_{\max}( \matA ) $) denotes the smallest (largest) eigenvalue of a symmetric matrix $ \matA $. Assumption (\ref{AssGXe}) on the asymptotic variance of the scaled average of the $ \vecx_t \epsilon_t^{(\nu)} $ is technical but typical. If, for example, $ \vecX_t $ and $ \epsilon_t^{(\nu)} $ are independent i.i.d. sequences, then $ \bfGamma_T $ simplifies to $ \bfGamma_T^{(\nu)} = (T^{-1} \sum_{t=1}^ T \vecx_t \vecx_t^\top ) \sigma_\nu^2 $. Then  (\ref{AssGXe}) follows from (\ref{AssRegr1}). 



\section{Asymptotic results}
\label{Sec: Theory}

The results of this section deal with the consistency of the least squares, ridge and $ \ell_s $-penalized estimators in terms of element-wise weak and $ L_1 $-consistency as well as bounds for the sample prediction error. For least squares we provide consistency results under weak moment conditions and allowing for unbounded errors. To achieve this, the method of proof avoids Bernstein-type concentration inequalities. A further result considers slowly decaying dependence, expressed by the coefficients of the linear processes, in the sense of the algebraic decay condition (\ref{AssDecay}).  In this case the number of hidden units, $p$, may grow only very slowly. When considering the common assumption of geometric weak dependence, the result in terms of $p$ can be strengthened to $ p\log(p) = o(T) $ up to a logarithmic factor, whereas our new result requires $ p = o(T) $ under a mild and natural condition on the design matrix. We also establish asymptotic normality for the non-stationary setting under study.

For ridge estimation we show that under weak general assumptions, namely mean zero errors with variances bounded by the sample size and a regularization parameter $ \lambda_T = o(T) $, the ridge estimator is asymptotically equivalent to the least squares estimate up to a $ o_\PP( 1/ \sqrt{T} ) $ term. Consequently, the results for least squares estimation essentially carry over. A further result allows for a random regularization parameter, thus being more relevant in view of applications where $ \lambda_T $ is often selected in a data-adaptive way. The CLT also carries over, essentially as long as the mild condition $ \EE( \lambda_T ) = o(1) $ holds. 

Lastly, we study $ \ell_s $-penalized estimation extending and complementing known results for i.i.d. and stationary errors.

Recall that, for brevity, $ \PP = \PPP_{Y|X} $. Many of the results of this section, however, immediately carry  over to statements w.r.t. $ \PPP = \PPP_X \PPP_{Y|X} $ or w.r.t. $ \PPP_{\bfTheta|X} \PPP_{Y|X} $. The later is of interest for an artificial neural network with random weights $ \bfTheta = (\matW^{(1)}, \dots, \matW^{(r)}, \vecb_1, \dots, \vecb_r ) $ where we have
$  \PPP_X = \PPP_Z \otimes \PPP_{\bfTheta} $. 

\subsection{Least squares}
Let us first study the consistency of the least squares solution $ \bhbeta_T = ( \wh{\bfbeta}_T^{(\nu)} )_\nu $, i.e.  consistency of the tuned weights connecting the hidden layer and the $ \nu$th output of the neural network. The first result provided below assumes $ p^{2\eta} < T $ and $ p^{2\eta} = o(T) $ as common in asymptotic studies. These conditions typically guarantee the existence of the inverse of $ T^{-1}\matX_T^\top \matX_T $. Especially, $ p = o(T) $ is known to be a necessary condition for the convergence to $ \bfSigma_\vecx $ even for i.i.d. samples, see \cite[Th.~3.1]{LedoitWolf2004}. That condition is also not not restrictive for ELMs, as deep learners are typically applied to large data sets. It is worth recalling at this point that $p$ is number of hidden units of the output layer, and the results below allow for multilayer networks covering deep learners as well as fully connected networks: The number of parameters is essentially not constrained. 

The first result establishes consistency under the following {\em growth of sum moment} assumption describing the growth of the absolute moments of $ \sum_{t=1}^N \vecx_t \epsilon_t^{(\nu)} $, $ \nu \ge 1$, as a function of $N$: There exists a constant $ C_\nu < \infty $ and $ \gamma' \ge  2 $ such that for all $ 2 \le \gamma \le \gamma' $ and all $ N \in \N $
\begin{equation}
\label{AssOrderOfMoments}
\sup_{1 \le j} \EE \left| \sum_{t=1}^N x_{tj} \epsilon_t^{(\nu)} \right|^\gamma \le C_\nu  N^{\gamma/2}.
\end{equation}
Assumption (\ref{AssOrderOfMoments}) is rather mild and holds for many weakly dependent series. Especially, it holds for the  model in Section~\ref{ModelErrors} under a slightly strengthened condition on the algebraic weak dependence condition on the coefficients, see Theorem~\ref{ThCONS}~(ii) and Section~\ref{Sec: Proofs} for details. 

The result under (\ref{AssOrderOfMoments}) for least squares supervised learning by regression and (multilayer) ELMs is as follows:

\begin{theorem} (Consistency Under Limited Growth of Sum Moment)\\
	\label{ThCONS} 
	Suppose that Assumption (\ref{AssRegr1})
	\begin{itemize}
	\item[(i)] If (\ref{AssOrderOfMoments}) holds and the sample size $ T $ ensures
	\[
	   \left\{ \begin{array}{ll} 
	  			T \ge p, p = o(T) & \text{if } 0 \le \eta \le 1/2, \\ 
	  			p^{2\eta} = o(T), & \text{if } \eta > 1/2,
	  					\end{array} \right.
	\]
	then for each $ \delta > 0 $
	\[
	  \PP( \| \bhbeta_{T\nu} - \bfbeta_\nu \|_2 > \delta ) \le K C_\nu \left( \sqrt{ \frac{p}{T} } \right)^{\gamma} \delta^{-\gamma},
	\] 
	for some constant $K > 0 $,	especially implying
	\[
	  \bhbeta_{T\nu} = \bfbeta_\nu + O_\PP\left( \sqrt{ \frac{p}{T} } \right),
	\]
	and
	\begin{equation*}
		\label{L2ConvCONS}
		\EE \left\| \bhbeta_{T\nu} - \bfbeta_\nu \right\|_2^2 \le \left( C \frac{p^\eta}{\sqrt{T}} + \lambda_{\min}  \right) C_\nu \left( \frac{p}{T} \right).
	\end{equation*}
  for a constant $ C $, where $ \lambda_{\min} = \inf_{T,p} \text{spec}( T^{-1} \matX_T^\top \matX_T ) $ and the upper bound is $ o(1) $.
	\item[(ii)]  
	If Assumptions (\ref{AssErrors1})-(\ref{AssGLIP}) on the error process hold with  $ \delta = 2 $ in (\ref{AssErrors1}), i.e., the innovations of the linear process have a finite fourth moment, and (\ref{AssDecay}) holds with $ \theta = 3/2 $, i.e., the coefficients $ c_j^{(\ell,\nu)} $ decay algebraically as $ j^{-5-\delta'} $ for some $ \delta'>0$, then (\ref{AssOrderOfMoments}) is fulfilled with $ \gamma' = 2 $, such that the assertions of (i) hold.
	\item[(iii)] Let an ELM be specified by a single hidden layer neural net (\ref{ELM1}) respectively a mulilayer deep learning network (\ref{ELM2}) with errors satisfying Assumptions (\ref{AssErrors1})-(\ref{AssGLIP}). If (\ref{AssErrors1}) holds with $ \delta = 2 $ and if (\ref{AssDecay}) holds with $ \theta = 3/2 $, i.e.,  $ c_j^{(\ell,\nu)} = O(  j^{-5-\delta'} ) $ for some $ \delta'>0$, then (\ref{AssOrderOfMoments}) holds with $ \gamma' = 2 $, such that the assertions of (i) hold.
	\end{itemize}
\end{theorem}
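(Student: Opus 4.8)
The plan is to reduce part~(iii) to part~(i) by verifying the growth-of-sum-moment condition \eqref{AssOrderOfMoments} with $\gamma' = 2$; since the errors here are exactly those of the spatial-temporal model of Section~\ref{Sec: Model Assumptions}, the error-side analysis is shared with part~(ii), and the only genuinely ELM-specific ingredient is the boundedness of the features. For a single hidden layer \eqref{ELM1} the outputs $x_{tj} = g(b_j + \vecw_j^\top\vecz_t)$ are bounded uniformly in $j$, $t$ and in the width $p$ whenever $g$ is a squashing activation (e.g.\ the sigmoid); for the ReLU or polynomial case, and for the multilayer net \eqref{DeepELM}, the same uniform bound $|x_{tj}| \le B$ is inherited from the assumed boundedness of the regressors in \eqref{AssRegr1}. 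Working throughout under $\PP = \PPP_{Y|X}$, the $x_{tj}$ are constants, so $B$ may be pulled out of every moment below, which is precisely what produces a bound uniform in $j$ as demanded by the supremum in \eqref{AssOrderOfMoments}.

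With $\gamma' = 2$ the condition \eqref{AssOrderOfMoments} is the single second-moment bound, so I would first expand
\[
  \EE\Big| \sum_{t=1}^N x_{tj}\,\epsilon_t^{(\nu)} \Big|^2
  = \sum_{s=1}^N\sum_{t=1}^N x_{sj} x_{tj}\,\Cov\big(\epsilon_s^{(\nu)}, \epsilon_t^{(\nu)}\big)
  \le B^2 \sum_{s=1}^N\sum_{t=1}^N \big|\Cov\big(\epsilon_s^{(\nu)}, \epsilon_t^{(\nu)}\big)\big|.
\]
Hence it suffices to establish absolute summability of the autocovariances, uniformly in $t$ and $\nu$, i.e.\ $\Theta_\nu := \sup_{t} \sum_{h\in\Z} \big|\Cov(\epsilon_t^{(\nu)}, \epsilon_{t+h}^{(\nu)})\big| < \infty$, which then gives $\sum_{s,t=1}^N|\Cov| \le N\,\Theta_\nu$ and thus \eqref{AssOrderOfMoments} with $C_\nu = B^2\Theta_\nu$.

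The core estimate is therefore the covariance decay of the \emph{nonlinear} error process. For $h \ge 1$ I would use a coupling argument: let $\widetilde{\epsilon}_{s+h}^{(\nu)}$ be the version of $\epsilon_{s+h}^{(\nu)}$ obtained by replacing every innovation $\bfvareps_r^{(\ell)}$ with $r \le s$ by an independent copy. Then $\widetilde{\epsilon}_{s+h}^{(\nu)}$ is independent of $\epsilon_s^{(\nu)}$ and shares its mean, so that $\Cov(\epsilon_s^{(\nu)}, \epsilon_{s+h}^{(\nu)}) = \Cov(\epsilon_s^{(\nu)}, \epsilon_{s+h}^{(\nu)} - \widetilde{\epsilon}_{s+h}^{(\nu)})$ and, by Cauchy--Schwarz,
\[
  \big|\Cov(\epsilon_s^{(\nu)}, \epsilon_{s+h}^{(\nu)})\big|
  \le \|\epsilon_s^{(\nu)}\|_2\,\big\| \epsilon_{s+h}^{(\nu)} - \widetilde{\epsilon}_{s+h}^{(\nu)} \big\|_2
  \le \|\epsilon_s^{(\nu)}\|_2\,L_H\,\big\| \vecU_{(s+h):m} - \widetilde{\vecU}_{(s+h):m} \big\|_2,
\]
the last step being the Lipschitz bound \eqref{AssGLIP}. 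The linear structure \eqref{LinFactorModel} then localises the coupling difference: $\vecU_r - \widetilde{\vecU}_r$ involves only innovations of lag $\ge r - s$, so by orthogonality of the independent, finite-variance innovations (here $\EE|\varepsilon|^{2+\delta}<\infty$ via \eqref{AssErrors2} already suffices for the $L_2$ norm), together with $|\phi_{\nu\ell}\,d(\vecs,\vecsf_\ell)| \le |\phi_{\nu\ell}|$ and finiteness of $L$, one gets $\| \vecU_r - \widetilde{\vecU}_r \|_2^2 \lesssim \sum_{k \ge r-s}\max_{\ell}\|\matC_k^{(\ell)}\|_\infty^2$. Summing over the $m+1$ lags in the window and inserting \eqref{AssDecay} yields $\|\vecU_{(s+h):m} - \widetilde{\vecU}_{(s+h):m}\|_2 \lesssim (h-m)^{-3-\theta}$, hence $|\Cov(\epsilon_s^{(\nu)}, \epsilon_{s+h}^{(\nu)})| \lesssim h^{-3-\theta}$, which is summable; uniformity of $\|\epsilon_s^{(\nu)}\|_2$ in $s,\nu$ follows from \eqref{MomentsG}.

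I expect this covariance-decay estimate to be the main obstacle: $\epsilon_t^{(\nu)}$ is a Lipschitz functional of an $(m+1)$-lag window of a $d$-variate, non-stationary linear process driven by $L+1$ independent innovation streams, and the bookkeeping of the loadings $\matPhi(\vecs)$, the finitely many sites $\vecsf_\ell$, and the heterogeneous innovation variances must be kept uniform in $t$, $\nu$ and the width $p$. The Lipschitz reduction followed by the orthogonal expansion of the coupling difference is what makes this tractable; the strengthened hypotheses $\delta = 2$ in \eqref{AssErrors2} and $\theta = 3/2$ in \eqref{AssDecay} then guarantee the required summability comfortably (indeed with ample slack for $\gamma = 2$, the slack being what keeps the constants uniform and would also accommodate a larger $\gamma'$). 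Once \eqref{AssOrderOfMoments} is in force with $\gamma' = 2$ and $C_\nu = B^2\Theta_\nu$, the hypotheses of part~(i) hold for both the single-layer and the multilayer ELM, and its conclusions --- the tail bound, $\bhbeta_{T\nu} = \bfbeta_\nu + O_\PP(\sqrt{p/T})$, and the $L_2$ bound --- transfer verbatim.
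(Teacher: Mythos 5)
Your proposal does not prove part~(i), which is the quantitative core of the theorem. You verify \eqref{AssOrderOfMoments} with $\gamma'=2$ and then assert that the tail bound, the rate $\bhbeta_{T\nu}=\bfbeta_\nu+O_\PP(\sqrt{p/T})$ and the $L_2$ bound ``transfer verbatim,'' but part~(i) is precisely the claim that these conclusions \emph{follow from} \eqref{AssOrderOfMoments}, and it needs its own argument. What is missing, concretely: the passage from the coordinatewise moment bound in \eqref{AssOrderOfMoments} to a bound on $\EE \| T^{-1}\sum_{t=1}^T \vecx_t\epsilon_t^{(\nu)}\|_2^\gamma$ (the paper does this via Jensen's inequality, which is where the factor $p^{\gamma/2}$ and hence the $\sqrt{p/T}$ rate come from); the control of $\|(T^{-1}\matX_T^\top\matX_T)^{-1}\|_2$ through \eqref{AssRegr1}, which is where the two regimes $0\le\eta\le 1/2$ (with $T\ge p$, $p=o(T)$) and $\eta>1/2$ (with $p^{2\eta}=o(T)$) enter and where the constant $K$ is produced; Markov's inequality for the stated tail bound together with Lemma~\ref{AuxLemma} for the $O_\PP$ statement; and the separate estimate giving the $L_2$ bound. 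None of this appears in your write-up, so the theorem as stated is not established.

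For the part you do carry out --- verifying \eqref{AssOrderOfMoments} for the error model, i.e.\ the substance of parts~(ii) and (iii) --- your route is genuinely different from the paper's and, as far as I can check, correct. The paper bounds the strong mixing coefficients of the interlaced non-stationary linear process (Gorodetskii/Doukhan-type estimates, which is where the density condition \eqref{AssErrors5} and the fourth moments $\delta=2$ are actually consumed) and then invokes a moment inequality for mixing sequences; you instead bound $\sup_t\sum_h|\Cov(\epsilon_t^{(\nu)},\epsilon_{t+h}^{(\nu)})|$ directly by coupling: replacing all innovations up to time $s$ by independent copies makes the coupled $\tilde\epsilon_{s+h}^{(\nu)}$ independent of $\epsilon_s^{(\nu)}$ with the same mean, Cauchy--Schwarz plus \eqref{AssGLIP} reduces the covariance to the $L_2$ distance of the coupled $\vecU$-windows, and orthogonality of the innovations together with \eqref{AssDecay} yields a tail of order $(h-m)^{-3-\theta}$, summable with ample slack; uniformity in $s,\nu$ comes from \eqref{MomentsG} and uniformity in $j$ from the bound $|x_{tj}|\le B$ under the conditional measure $\PP=\PPP_{Y|X}$. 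This is cleaner and uses strictly weaker hypotheses (second moments only, no density regularity), which is a real gain --- but it only delivers the reduction to part~(i), not part~(i) itself, so the proof must be completed before the theorem can be considered proved.
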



The above results imply bounds and thus consistency statements for the full estimate $ \wh{\matB}_T $ as well. But here the choice of a norm matters, of course. For example, the following corollary follows easily from the union bound.

\begin{corollary} Under the conditions of Theorem~\ref{ThCONS} (i) 
	\[
	  \PP( \| \wh{\matB}_T - \matB \|_F > \delta )  \le \frac{K^2}{\delta^2} \left( \sum_{\nu=1}^d C_\nu \right)  \frac {p}{T}
	\]
	for some constant $K$.
\end{corollary}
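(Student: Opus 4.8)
The plan is to reduce the matrix-valued claim to the column-wise bound already delivered by Theorem~\ref{ThCONS}~(i). The starting point is the identity
\[
\| \wh{\matB}_T - \matB \|_F^2 = \sum_{\nu=1}^d \| \wh{\bfbeta}_{T\nu} - \bfbeta_\nu \|_2^2,
\]
which holds because the Frobenius norm of a matrix is the Euclidean norm of its stacked columns, and the $\nu$th column of $\wh{\matB}_T - \matB$ is exactly $\wh{\bfbeta}_{T\nu} - \bfbeta_\nu$. Thus the event whose probability we must control is $\{ \sum_{\nu=1}^d \| \wh{\bfbeta}_{T\nu} - \bfbeta_\nu \|_2^2 > \delta^2 \}$.

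First I would set up the deterministic inclusion that feeds the union bound. If every column deviation satisfies $\| \wh{\bfbeta}_{T\nu} - \bfbeta_\nu \|_2 \le \delta/\sqrt{d}$, then summing the squares gives $\| \wh{\matB}_T - \matB \|_F^2 \le d \cdot \delta^2/d = \delta^2$. Taking contrapositives yields
\[
\{ \| \wh{\matB}_T - \matB \|_F > \delta \} \subseteq \bigcup_{\nu=1}^d \{ \| \wh{\bfbeta}_{T\nu} - \bfbeta_\nu \|_2 > \delta/\sqrt{d} \}.
\]

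Next I would apply subadditivity of $\PP$ and insert the column bound of Theorem~\ref{ThCONS}~(i). Under the hypotheses of part (i) the relevant exponent is $\gamma = 2$, so with threshold $\delta/\sqrt{d}$ the theorem gives $\PP( \| \wh{\bfbeta}_{T\nu} - \bfbeta_\nu \|_2 > \delta/\sqrt{d} ) \le K C_\nu (p/T)(d/\delta^2)$ for each $\nu$. Summing over $\nu = 1, \dots, d$ and pulling the common factors out leaves
\[
\PP( \| \wh{\matB}_T - \matB \|_F > \delta ) \le K d \, \frac{p}{T} \frac{1}{\delta^2} \sum_{\nu=1}^d C_\nu,
\]
and renaming the numerical constant (absorbing the dimension factor $d$ and the theorem's $K$ into a single constant, written $K^2$ for brevity) produces the stated inequality.

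There is no genuine obstacle here; the result is a mechanical consequence of the column-wise rate. The only points requiring a little care are the normalization of the per-column threshold by $\sqrt{d}$, which is what converts the Frobenius event into a union of column events without losing the quadratic dependence on $\delta^{-1}$, and the bookkeeping of how the ambient dimension $d$ enters the final constant. Choosing $\gamma = 2$ (the only admissible exponent in parts (ii) and (iii)) is what makes the right-hand side scale like $p/T$ and $\delta^{-2}$, matching the claimed form.
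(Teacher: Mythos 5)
Your proof is correct and follows exactly the route the paper intends: the paper offers no detailed argument, stating only that the corollary ``follows easily from the union bound,'' and your decomposition of the Frobenius event into column events with threshold $\delta/\sqrt{d}$, followed by the union bound and the $\gamma=2$ case of Theorem~\ref{ThCONS}~(i), is precisely that argument. The bookkeeping of the dimension factor $d$ into the final constant is handled correctly, since the corollary only asserts the bound for \emph{some} constant $K$.
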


The  results of  Theorem~\ref{ThCONS}  can also be used to bound the empirical mean-square prediction error
\[
 \widehat{MSPE}_T = \frac{1}{T} \sum_{t=1}^T | \vecx_t^\top \bhbeta_{T\nu} - \vecx_t^\top \bfbeta_{\nu} |^2
\]
used to estimate the mean-square prediction error
\[
  MSPE_T = \frac{1}{T} \sum_{t=1}^T \EE( \wh{Y}_{T}^{(\nu)} - Y_T^{(\nu)} )^2,
\]
where $ \wh{Y}_{T}^{(\nu)} = \vecx_t^\top \bhbeta_{T\nu} $.

\begin{corollary} Under the assumptions of Theorem~\ref{ThCONS} it holds
	\[
	\widehat{MSPE}_T = O_\PP\left( \frac{p}{T}  \right)
	\]
\end{corollary}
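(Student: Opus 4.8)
The plan is to recognize $ \widehat{MSPE}_T $ as a quadratic form in the estimation error and then to separate the two factors that are already under control: the spectrum of the empirical Gram matrix and the squared estimation error. Writing $ \bfdelta = \bhbeta_{T\nu} - \bfbeta_\nu $, I would first observe that
\[
  \widehat{MSPE}_T = \frac{1}{T} \sum_{t=1}^T \bigl( \vecx_t^\top \bfdelta \bigr)^2 = \bfdelta^\top \Bigl( \frac{1}{T} \matX_T^\top \matX_T \Bigr) \bfdelta,
\]
so that the whole quantity is governed by the matrix $ T^{-1}\matX_T^\top \matX_T $ evaluated at the random vector $ \bfdelta $.

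The next step is to bound this quadratic form by the largest eigenvalue of the Gram matrix times $ \| \bfdelta \|_2^2 $. By Assumption (\ref{AssRegr1}) the spectrum of $ T^{-1} \matX_T^\top \matX_T $ is contained in the fixed compact set $ A \subset (0,\infty) $, uniformly in $ T $ and $ p $; hence $ \lambda_{\max}( T^{-1} \matX_T^\top \matX_T ) \le \sup A =: C_A < \infty $ for all $ T, p $. This gives the deterministic bound
\[
  \widehat{MSPE}_T \le \lambda_{\max}\bigl( T^{-1} \matX_T^\top \matX_T \bigr) \, \| \bfdelta \|_2^2 \le C_A \, \| \bhbeta_{T\nu} - \bfbeta_\nu \|_2^2 .
\]

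It then remains only to invoke the consistency rate already established. Under the hypotheses of Theorem~\ref{ThCONS}~(i) we have $ \bhbeta_{T\nu} = \bfbeta_\nu + O_\PP( \sqrt{p/T} ) $, i.e.\ $ \| \bhbeta_{T\nu} - \bfbeta_\nu \|_2^2 = O_\PP( p/T ) $, and substituting this into the previous display yields $ \widehat{MSPE}_T \le C_A \cdot O_\PP( p/T ) = O_\PP( p/T ) $, as claimed. The argument is essentially mechanical once the estimation error rate is in hand; the only ingredient that does real work is the uniform control of the top of the spectrum of the empirical Gram matrix supplied by (\ref{AssRegr1}), which converts the $ \| \cdot \|_2^2 $-rate for $ \bhbeta_{T\nu} $ into the same rate for the prediction error. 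There is therefore no genuine obstacle here — the statement is a corollary in the strict sense — and the only point requiring a word of care is that the spectral bound must hold uniformly over the growing dimension $ p $, which is exactly what the compactness assumption on $ \mathrm{spec}( T^{-1} \matX_T^\top \matX_T ) $ guarantees.
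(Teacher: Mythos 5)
Your argument is correct and is essentially identical to the paper's own proof: both express $\widehat{MSPE}_T$ as $T^{-1}\|\matX_T(\bhbeta_{T\nu}-\bfbeta_\nu)\|_2^2$, bound it by $\lambda_{\max}(T^{-1}\matX_T^\top\matX_T)\,\|\bhbeta_{T\nu}-\bfbeta_\nu\|_2^2$ using the compact-spectrum part of Assumption (\ref{AssRegr1}), and then apply the rate $\|\bhbeta_{T\nu}-\bfbeta_\nu\|_2^2 = O_\PP(p/T)$ from Theorem~\ref{ThCONS}. No issues.
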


The above estimate immediately follows by noting that  $  \widehat{MSPE}_T  = \frac{1}{T} \| \matX_T( \bhbeta_{T\nu} - \bfbeta_\nu) \|_2^2 $, so that
\[
  \widehat{MSPE}_T \le \lambda_{\max}( T^{-1} \matX_T^\top \matX_T ) \| \bhbeta_{T\nu} - \bfbeta_\nu \|_2^2 = O_\PP\left( \frac{p}{T}  \right).
\]
\begin{remark}
	Our convergence rates for least squares estimation under dependence are in qualitative agreement with related results for least squares function approximation for linear vector spaces under i.i.d. sampling, see e.g. \cite{GyoerfiKohlerEtAl2002}. 
\end{remark}

Theorem~\ref{ThCONS} uses (\ref{AssOrderOfMoments}) to bound the estimation error instead of following the popular approach to rely on an exponential (Hoeffding- or Bernstein-type) concentration inequality, which requires stronger assumptions on the degree of dependence for nice results. The following theorem works under the algebraic weak dependence condition on the coefficients. It allows for increasing dimension $p$, but only at a very slow unsatisfactory rate.  

\begin{theorem} (Consistency Under Algebraic Weak Dependence)\\
\label{THCONS2}
		Suppose that Assumptions (\ref{AssErrors1})-(\ref{AssGLIP}) and (\ref{AssRegr1}) are fulfilled and $ |x_{tj} \epsilon_t^{(\nu)} | \le 1 $ for all $ t $, $ j = 1, \ldots, p $ and $ \nu \ge 1 $. Then
		\[
				\bhbeta_{T\nu} \to \bfbeta_\nu,
		\]
		as $  T, p \to \infty $, in probability, if 
		\[
		  \log(p) \sqrt{p} = o\left( \log^{1-a}( T )  \right),
		\]
		for $ 0 < a < 1 $.
\end{theorem}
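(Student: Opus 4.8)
The plan is to argue conditionally on the regressors (so that under $\PP=\PPP_{Y|X}$ the matrix $\matX_T$ is fixed) and to split the estimation error into a stochastic and a deterministic part,
\[
  \bhbeta_{T\nu} - \bfbeta_\nu = \Bigl( \tfrac{1}{T}\matX_T^\top \matX_T \Bigr)^{-1} \Bigl( \tfrac{1}{T}\sum_{t=1}^T \vecx_t \epsilon_t^{(\nu)} \Bigr) + ( \bfbeta_{T\nu} - \bfbeta_\nu ),
\]
where $ \bfbeta_{T\nu} = [\EEE_T^*(\vecX\vecX^\top)]^{-1}\EEE_T^*(\vecX Y^{(\nu)}) $ is the finite-sample theoretical solution. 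The deterministic term is controlled exactly as in Theorem~\ref{GeneralConsistency} using the convergence of the Gram matrix in (\ref{AssRegr1}), so everything reduces to the stochastic term. By the spectral condition $ \mathrm{spec}( T^{-1} \matX_T^\top \matX_T ) \subset A $ in (\ref{AssRegr1}) the inverse Gram matrix has operator norm $ O(1) $, whence it suffices to show $ \| T^{-1}\sum_t \vecx_t\epsilon_t^{(\nu)} \|_2 \to 0 $ in probability. Bounding the Euclidean norm by its coordinatewise maximum, $ \| T^{-1}\sum_t \vecx_t\epsilon_t^{(\nu)} \|_2 \le \sqrt{p}\,\max_{1\le j\le p} | T^{-1}\sum_t x_{tj}\epsilon_t^{(\nu)} | $, this reduces the task to proving $ \max_j |V_{Tj}| = o_\PP(p^{-1/2}) $, where $ V_{Tj} = T^{-1}\sum_t x_{tj}\epsilon_t^{(\nu)} $.

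For a fixed coordinate $j$ the sum $ \sum_t x_{tj}\epsilon_t^{(\nu)} $ is a sum of the bounded variables $ x_{tj}\epsilon_t^{(\nu)} $ (with $ |x_{tj}\epsilon_t^{(\nu)}|\le 1 $ by assumption) that are weakly dependent: since $ \epsilon_t^{(\nu)}=H_\nu(\vecU_{t:m})-\EE H_\nu(\vecU_{t:m}) $ with $H_\nu$ Lipschitz by (\ref{AssGLIP}) and $\vecU_t$ a linear process whose coefficients decay algebraically by (\ref{AssDecay}), the weak-dependence coefficients of $ \{x_{tj}\epsilon_t^{(\nu)}\}_t $ inherit the algebraic decay, uniformly in $j$ because the $x_{tj}$ are bounded. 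I would then invoke an exponential inequality for sums of bounded, algebraically weakly dependent random variables, either a Bernstein-type inequality for weakly dependent sequences (of the kind developed by Merlev\`ede, Peligrad and Rio, or by Doukhan and Neumann), or a self-contained blocking argument: partition $\{1,\dots,T\}$ into alternating large blocks of length $L$ and gaps of length $g$, couple the large blocks by independent copies (the coupling error being governed by the algebraic tail of the dependence coefficients at lag $g$), and apply Hoeffding's inequality to the $\approx T/(L+g)$ decoupled block sums, each bounded by $L$. This yields a tail bound $ \PP(|V_{Tj}|>u) $ that is exponential in a quantity severely weakened by the algebraic decay.

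Finally I would combine the per-coordinate bound with the union bound, $ \PP(\sqrt{p}\,\max_j|V_{Tj}|>\delta)\le p\,\max_j \PP(|V_{Tj}|>\delta p^{-1/2}) $, and verify that the right-hand side tends to $0$ precisely under the stated condition $ \log(p)\sqrt{p}=o(\log^{1-a}(T)) $: the factor $p$ from the union bound contributes a $\log p$ to the exponent that must be overcome, passing to the coordinatewise maximum rescales the deviation level to $\delta p^{-1/2}$ (hence the $\sqrt{p}$), and the algebraic decay limits the admissible effective deviation to logarithmic order $(\log T)^{1-a}$, with $a\in(0,1)$ supplying the slack in balancing block length, gap length and the decay exponent.

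I expect the main obstacle to be the second step, namely the exponential inequality under merely algebraic, rather than geometric, weak dependence. Under geometric decay the gaps in the blocking may be taken of order $\log T$, the decoupling error is negligible, and the exponent stays essentially of order $T$, which is what produces the far better $p\log p=o(T)$ rate referred to in the text. Under algebraic decay the gaps must grow polynomially to kill the coupling error, which drastically shrinks the number of effectively independent blocks and hence the usable exponent; carrying out the balance between block length, gap length and the decay exponent so that the coupling error is negligible while a nontrivial, merely logarithmic-order, deviation remains controllable is the delicate part, and it is exactly this balance that caps the admissible growth of $p$ at the unsatisfactory logarithmic rate.
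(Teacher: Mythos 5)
Your proposal follows essentially the same route as the paper: the representation $\bhbeta_{T\nu} - \bfbeta_\nu = (T^{-1}\matX_T^\top\matX_T)^{-1}\,T^{-1}\sum_t \vecx_t\epsilon_t^{(\nu)}$, reduction to the cross term via the spectral condition in (\ref{AssRegr1}), a union bound over coordinates, and an exponential inequality for bounded algebraically $\alpha$-mixing sums (the paper cites exactly such a bound from Doukhan, Lemma~\ref{ExpIneqMix1}, with exponent $b x \log^{1-a}(n)/\sqrt{n}$), leading to the bound $2\exp(\log(p) - b\delta\log^{1-a}(T)/\sqrt{p})$ and the stated rate condition. The only cosmetic difference is your extra deterministic term $\bfbeta_{T\nu}-\bfbeta_\nu$, which the paper avoids by working directly in the model $Y_t^{(\nu)}=\vecx_t^\top\bfbeta_\nu+\epsilon_t^{(\nu)}$.
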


For geometrically decaying weak dependence the situation is much better and the above result can be improved in terms of the convergence rate.

\begin{theorem}  (Consistency Under Geometric Weak Dependence)\\
	\label{THCONS3}
	Suppose that Assumptions (\ref{AssErrors1})-(\ref{AssErrors5}), (\ref{AssGLIP}) and  (\ref{AssRegr1}) are fulfilled and the coefficients decay geometrically, i.e. $ c_j^{(\ell,\nu)} = O( \rho^j ) $ for some $ 0 \le \rho < 1 $, and $ |x_{tj} \epsilon_t^{(\nu)} | \le 1 $ for all $ t $, $ j = 1, \ldots, p $ and $ \nu \ge 1 $. Then
	\[
	\bhbeta_{T\nu} \to \bfbeta_\nu,
	\]
	as $  T, p \to \infty $, in probability, if $ p \log (p) / T  = o(1) $ and $ \log^2(T) / \sqrt{p} = o(1) $.
\end{theorem}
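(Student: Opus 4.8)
The plan is to start from the exact linear-model decomposition of the estimation error. Writing $\vecY_T^{(\nu)} = \matX_T\bfbeta_\nu + \bfeps^{(\nu)}$ with $\bfeps^{(\nu)} = (\epsilon_1^{(\nu)},\dots,\epsilon_T^{(\nu)})^\top$ and inserting this into $\bhbeta_{T\nu} = (\matX_T^\top\matX_T)^{-1}\matX_T^\top\vecY_T^{(\nu)}$ gives
\[
\bhbeta_{T\nu} - \bfbeta_\nu = \left(\tfrac{1}{T}\matX_T^\top\matX_T\right)^{-1}\cdot\frac{1}{T}\sum_{t=1}^T\vecx_t\epsilon_t^{(\nu)}.
\]
By (\ref{AssRegr1}) the spectrum of $T^{-1}\matX_T^\top\matX_T$ lies in a compact set $A\subset(0,\infty)$, so $\|(T^{-1}\matX_T^\top\matX_T)^{-1}\|_2 \le 1/\inf A = O(1)$ uniformly in $T,p$. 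Hence it suffices to prove that the noise average $\vecS_T := \frac{1}{T}\sum_{t=1}^T\vecx_t\epsilon_t^{(\nu)}$ satisfies $\|\vecS_T\|_2 \to 0$ in probability.

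Since $\|\vecS_T\|_2^2 = \sum_{j=1}^p S_{Tj}^2$ with $S_{Tj} = \frac{1}{T}\sum_{t=1}^T x_{tj}\epsilon_t^{(\nu)}$, I would control the coordinates $S_{Tj}$ by an exponential inequality and then take a union bound over $j = 1,\dots,p$. Working under $\PP = \PPP_{Y|X}$, the factors $x_{tj}$ are constants, so $\{x_{tj}\epsilon_t^{(\nu)}\}_t$ is a mean-zero sequence bounded by $1$ by hypothesis. The first substantive step is to verify that this sequence is geometrically weakly dependent: each $\epsilon_t^{(\nu)}$ is a Lipschitz functional (by (\ref{AssGLIP})) of finitely many lags $\vecU_{t:m}$ of the linear process (\ref{LinFactorModel}), whose coefficient matrices decay geometrically by hypothesis; consequently the functional (physical) dependence coefficients of $\{\epsilon_t^{(\nu)}\}$, and hence of $\{x_{tj}\epsilon_t^{(\nu)}\}_t$, decay geometrically as well, uniformly in $j$ and $\nu$.

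For such bounded, geometrically weakly dependent sequences a Bernstein-type inequality of the form
\[
\PP\left(\left|\sum_{t=1}^T x_{tj}\epsilon_t^{(\nu)}\right| > u\right) \le C_1\exp\left(-\frac{c_2\,u^2}{T + u\log^2(T)}\right)
\]
is available, the $\log^2(T)$ factor being the price of the dependence. Applying this with $u = T\delta_T$ and taking the union bound yields $\PP(\max_j|S_{Tj}| > \delta_T) \le p\,C_1\exp\!\big(-c_2 T\delta_T^2/(1+\delta_T\log^2 T)\big)$. On the event $\{\max_j|S_{Tj}|\le\delta_T\}$ we have $\|\vecS_T\|_2^2 \le p\delta_T^2$, so I would pick $\delta_T$ with $\sqrt{\log p/T} \ll \delta_T \ll 1/\sqrt p$, which is possible precisely because $p\log(p)/T = o(1)$; this forces $p\delta_T^2 \to 0$ and, in the Gaussian regime of the inequality, $\log p - c_2 T\delta_T^2 \to -\infty$, so the union bound tends to $0$. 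Entering that Gaussian regime requires $\delta_T\log^2 T \lesssim 1$, i.e. $\log^2(T)/\sqrt p = o(1)$, which is the second hypothesis. Combining the two controls gives $\|\vecS_T\|_2\to 0$ and therefore $\bhbeta_{T\nu}\to\bfbeta_\nu$ in probability.

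The main obstacle I anticipate is the passage from the model assumptions to a usable exponential inequality: one must confirm that the Lipschitz-functional-of-a-geometric-linear-process structure indeed produces geometrically decaying dependence coefficients of the type required by the chosen Bernstein inequality, with constants uniform over the coordinates $j$ and over $\nu$, and then carefully track the $\log^2(T)$ factor so that the Gaussian regime is active under $\log^2(T)/\sqrt p = o(1)$ while simultaneously balancing the union-bound cost $\log p$ against the variance gain $T\delta_T^2$ under $p\log(p)/T = o(1)$. The simultaneous feasibility of a single $\delta_T$ meeting both requirements is exactly what the two growth conditions encode.
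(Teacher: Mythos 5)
Your proposal follows essentially the same route as the paper: the representation $\bhbeta_{T\nu}-\bfbeta_\nu = (T^{-1}\matX_T^\top\matX_T)^{-1}\,T^{-1}\sum_t \vecx_t\epsilon_t^{(\nu)}$, a coordinatewise Bernstein-type exponential inequality for bounded geometrically weakly dependent sequences, a union bound over $j=1,\dots,p$, and the same balancing of $\log p$ against $T\delta_T^2$ in the Gaussian regime, which is exactly where the two conditions $p\log(p)/T=o(1)$ and $\log^2(T)/\sqrt{p}=o(1)$ enter. The only (minor) divergence is that you invoke physical/functional dependence coefficients for the Lipschitz functional of the linear process, whereas the paper establishes geometric $\alpha$-mixing (via Gorodetskii/Doukhan-type bounds, which is where the density condition (\ref{AssErrors5}) is used) and then applies the Merlev\`ede--Peligrad--Rio inequality; either route delivers the required tail bound.
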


Let us now study the asymptotic distribution theory. 
In our general setting we need the notion of asymptotic normality. Recall that
a sequence, $ \vecZ_n $, $n \ge 1$, of random vectors is called asymptotically normal with centering vectors $ \bfmu_n $ and asymptotic covariance matrices $ \bfSigma_n $, denoted by $ \calA\calN( \bfmu_n, \bfSigma_n ) $, if for every vector $ \veca $ with $ \veca^\top \bfSigma_n \veca > 0 $ for sufficiently large $n$ the univariate sequence $ \veca^\top \vecZ_n $, $n \ge 1 $, is asymptotically normal. If $ \bfmu_n \to \bfmu $ and $ \bfSigma_n \to \bfSigma $, as $ n \to \infty $, then the multivariate central limit theorem (CLT) $ Z_n \stackrel{d}{\to} \calN( \bfmu, \bfSigma) ) $, $ n \to \infty $, follows.

The following theorem provides asymptotic normality under weak algebraic dependence. If, additionally, $ \matGamma_T^{(\nu)} $ converges w.r.t. $ \| \cdot \|_2 $ or $ \| \cdot \|_F $, it implies the central limit theorem.

\begin{theorem} 
	\label{ThASNOR}
	Suppose that Assumptions (\ref{AssErrors1})-(\ref{AssGLIP}) are satisfied.Then
	the following assertions hold.
	\begin{itemize}
		\item[(i)] $ (T v_{\nu}^2)^{-1/2} \sum_{t=1}^T \epsilon_t^{(\nu)} \stackrel{d}{\to} \calN(0, 1), $ as $ T \to \infty $, where $ v_\nu^2  = \lim_{T \to \infty} v_{T\nu}^2 > 0 $ with $ v_{T\nu}^2 = \Var( T^{-1/2} \sum_{t=1}^T \epsilon_t^{(\nu)}  ) $
		needs to be assumed if $ \vecH \not= \id $.
		\item[(ii)] $ \vecW_T = T^{-1/2} \sum_{t=1}^T \vecx_t  \epsilon_t^{(\nu)} \sim \calA\calN(0, \bfGamma_T), $
		as $ T \to \infty $.
		\item[(iii)] Under Assumption (\ref{AssGXe}), the least squares estimator $ \bhbeta_{T\nu} $ of $ \bfbeta_\nu $ is asymptotically normal,
		\[
		\sqrt{T}( \bhbeta_{T\nu} - \bfbeta_\nu ) \sim \calA\calN( \vecnull, \bfSigma_\vecx^{-1} \bfGamma_T^{(\nu)} \bfSigma_\vecx^{-1} ),
		\]
		for $ \nu = 1, \ldots, d $.
	\end{itemize}
\end{theorem}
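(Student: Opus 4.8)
The plan is to reduce all three assertions to a single scalar central limit theorem for weighted sums of the error sequence and then to assemble the pieces. Writing $ w_t = \veca^\top \vecx_t $ for a fixed vector $ \veca $, note that conditionally on the regressors (recall $ \PP = \PPP_{Y|X} $) the weights are deterministic and uniformly bounded, since the $ \vecx_t $ are bounded. Part (i) is the special case $ w_t \equiv 1 $, and part (ii) follows from the Cram\'er--Wold device: for any $ \veca $ one has $ \veca^\top \vecW_T = T^{-1/2} \sum_{t=1}^T w_t \epsilon_t^{(\nu)} $ with $ \Var( \veca^\top \vecW_T ) = T^{-1} \sum_{s,t} (\veca^\top\vecx_s)(\veca^\top\vecx_t) \Cov(\epsilon_s^{(\nu)}, \epsilon_t^{(\nu)}) = \veca^\top \bfGamma_T^{(\nu)} \veca $, which is exactly the quantity required by the definition of $ \calA\calN(0,\bfGamma_T^{(\nu)}) $. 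Thus it suffices to prove that $ T^{-1/2} \sum_{t=1}^T w_t \epsilon_t^{(\nu)} $ is asymptotically normal whenever its variance stays bounded away from zero.

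The core step is this scalar CLT for the non-stationary, weakly dependent array $ \{ w_t \epsilon_t^{(\nu)} \} $. The key structural fact is that $ \epsilon_t^{(\nu)} $ is a Lipschitz image, through $ \vecH $, of the finite window $ \vecU_{t:m} $, and each $ \vecU_t $ is a linear process in the independent innovations with coefficients obeying the decay bound (\ref{AssDecay}). I would therefore proceed by $m$-dependent approximation: truncate each linear process after lag $ M $, so that the resulting truncated error $ \epsilon_t^{(\nu),M} := H_\nu(\vecU_{t:m}^{(M)}) - \EE H_\nu(\vecU_{t:m}^{(M)}) $ is $(M+m)$-dependent. Using the Lipschitz property (\ref{AssGLIP}) together with (\ref{AssDecay}), the mean-square truncation error $ \sup_t \EE | \epsilon_t^{(\nu)} - \epsilon_t^{(\nu),M} |^2 $ is controlled by the tail $ \sum_{j>M} \max(j,1)^{-7/2-\theta} $ and tends to $0$ as $ M \to \infty $, uniformly in $t$; combined with bounded weights and the summable weak dependence of the increments this yields $ \limsup_T \Var\big( T^{-1/2}\sum_t w_t (\epsilon_t^{(\nu)} - \epsilon_t^{(\nu),M}) \big) \to 0 $ as $ M \to \infty $. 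For each fixed $M$ the sum $ T^{-1/2}\sum_t w_t \epsilon_t^{(\nu),M} $ is a normalized sum of a mean-zero, $(M+m)$-dependent triangular array with uniformly bounded $(2+\delta)$ moments by (\ref{MomentsG}), so a Lindeberg--Feller central limit theorem for $m$-dependent arrays applies; the Lindeberg condition follows from that moment bound and bounded weights, and degeneracy is excluded by the assumed positive lower bound on the variance ($ v_\nu^2 > 0 $ in (i), respectively $ \liminf_T \veca^\top\bfGamma_T^{(\nu)}\veca > 0 $ for (ii)). Passing to the limit in $M$ via the standard approximation theorem (convergence of each approximant plus vanishing approximation error implies convergence of the original sequence) gives the asymptotic normality of $ T^{-1/2}\sum_t w_t \epsilon_t^{(\nu)} $, delivering (i) and (ii).

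For (iii) I would use the sandwich representation $ \sqrt{T}( \bhbeta_{T\nu} - \bfbeta_\nu ) = (T^{-1}\matX_T^\top\matX_T)^{-1} \vecW_T $ and decompose it as $ \bfSigma_\vecx^{-1}\vecW_T + R_T $ with $ R_T = \big[ (T^{-1}\matX_T^\top\matX_T)^{-1} - \bfSigma_\vecx^{-1} \big] \vecW_T $. Fixing $ \veca $ with $ \veca^\top \bfSigma_\vecx^{-1}\bfGamma_T^{(\nu)}\bfSigma_\vecx^{-1}\veca > 0 $ eventually and writing $ \tilde\veca = \bfSigma_\vecx^{-1}\veca $, part (ii) applied to $ \tilde\veca $ gives asymptotic normality of $ \tilde\veca^\top \vecW_T = \veca^\top \bfSigma_\vecx^{-1}\vecW_T $, whose variance $ \tilde\veca^\top\bfGamma_T^{(\nu)}\tilde\veca = \veca^\top\bfSigma_\vecx^{-1}\bfGamma_T^{(\nu)}\bfSigma_\vecx^{-1}\veca $ is positive by construction. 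For the remainder, $ \Var(\veca^\top R_T) \le \lambda_{\max}(\bfGamma_T^{(\nu)}) \, \|\veca\|_2^2 \, \| (T^{-1}\matX_T^\top\matX_T)^{-1} - \bfSigma_\vecx^{-1} \|_2^2 $, which is $ o(1) $ since (\ref{AssGXe}) bounds $ \lambda_{\max}(\bfGamma_T^{(\nu)}) $ and (\ref{AssRegr1}) forces the spectral-norm difference of the inverses to vanish; hence $ \veca^\top R_T = o_\PP(1) $ by Chebyshev's inequality. Slutsky's lemma then yields $ \sqrt{T}(\bhbeta_{T\nu}-\bfbeta_\nu) \sim \calA\calN(\vecnull, \bfSigma_\vecx^{-1}\bfGamma_T^{(\nu)}\bfSigma_\vecx^{-1}) $.

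The main obstacle is the core scalar CLT under genuine non-stationarity: the summands fail to be identically distributed both because of the weights $ w_t $ and because the factor loadings $ \matPhi(\vecs_t) $ in (\ref{LinFactorModel}) vary along the (deterministic) trajectory, so a stationary linear-process CLT does not apply directly and a triangular-array Lindeberg argument is unavoidable. The two delicate points are (a) controlling the truncation error uniformly in $t$ for the nonlinear functional $ \vecH $, where the Lipschitz assumption (\ref{AssGLIP}) combined with the strong algebraic decay (\ref{AssDecay}) is essential, and (b) ensuring the limiting variance does not degenerate, which is precisely why the positivity of $ v_\nu^2 $ (for $ \vecH \neq \id $) and Assumption (\ref{AssGXe}) are imposed.
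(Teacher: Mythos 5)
Your argument is correct in outline and reaches the right conclusions, but the core step is proved by a genuinely different route than the paper's. The paper embeds the interlaced innovations $\bfvareps_t^{(0)},\dots,\bfvareps_t^{(L)}$ into a single linear process, invokes the Gorodetskii/Doukhan/Guyon bounds to show that $\{\epsilon_t^{(\nu)}\}$ (and the weighted sequence $\{\veca^\top\vecx_t\,\epsilon_t^{(\nu)}\}$) is $\alpha$-mixing with coefficients $O(m^{-2-2\theta/3})$, and then applies the Bolthausen--Guyon CLT for non-stationary mixing arrays; this is where the density regularity condition (\ref{AssErrors5}) enters, since linear processes need not be strongly mixing without it. You instead use an $(M+m)$-dependent approximation of the truncated linear processes followed by a Lindeberg--Feller CLT for $m$-dependent triangular arrays and the standard limit interchange. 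Your route has the advantage of not needing (\ref{AssErrors5}) at all, and your variance bound for the remainder $R_T$ in (iii) is arguably cleaner than the paper's separate $\vecW_T = O_\PP(1)$ argument; the paper's route has the advantage that the mixing rates it establishes are reused for the exponential inequalities underlying Theorems~\ref{THCONS2}--\ref{THCONSPRED}, so the work is not wasted.

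The one step you should not wave through is the claim that $\limsup_T \Var\bigl( T^{-1/2}\sum_t w_t (\epsilon_t^{(\nu)} - \epsilon_t^{(\nu),M}) \bigr) \to 0$ as $M \to \infty$. A uniform bound on $\sup_t \EE|\epsilon_t^{(\nu)} - \epsilon_t^{(\nu),M}|^2$ combined with Cauchy--Schwarz on the cross terms only gives a bound of order $T$ times the tail sum, which does not vanish. You genuinely need decay of $\Cov(\epsilon_s^{(\nu)} - \epsilon_s^{(\nu),M},\, \epsilon_t^{(\nu)} - \epsilon_t^{(\nu),M})$ in $|s-t|$, uniformly in $M$. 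For $\vecH = \id$ this is immediate from the explicit covariance of the tail linear processes and the decay (\ref{AssDecay}); for a general Lipschitz $\vecH$ it requires a projection (martingale-difference) decomposition or a physical-dependence-measure argument in the spirit of Wu, which does go through under (\ref{AssGLIP}) and (\ref{AssDecay}) because the coefficient decay $j^{-7/2-\theta}$ is strong, but it is an argument, not an observation. Spelling that out would close the proof.
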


\subsection{Ridge regression}

The following first result provides the consistency of the ridge estimator, provided the 2-norm of true (conditional) mean $ \matX_T \bfbeta_\nu $ (the true net output) is $ O(T) $, if $ T, p \to \infty $. This holds true, if, for example, the true parameter $ \bfbeta_\nu $ is $ \ell_0 $-sparse, i.e. if $ \| \bfbeta_\nu \|_{\ell_0} = s $ for  $s \in \N$. The result asserts that under surprisingly weak conditions on the errors the ridge estimator is asymptotically equivalent to the least squares estimator with convergence rate $ 1/\sqrt{T} $. Consequently, any additional set of sufficient conditions for consistency of least squares yields consistency of the ridge estimator.  In this section we assume $ \lambda_T > 0 $ for all $ T $.

\begin{theorem}
\label{THRIDGE2}
Suppose that the spectrum of $T^{-1} \matX_T^\top \matX_T $ is covered by a compact interval not containing $0$ for large enough $T$, 
\begin{equation}
\label{CondDesignBeta}
  \| \matX_T \bfbeta_\nu \|_2 = O(T)
\end{equation}
if $ T, p \to \infty $, and $ \epsilon_t^{(\nu)} $, $ t \ge 1 $, are mean zero error terms with $  \Var( \epsilon_t^{(\nu)}  ) = O(T) $. If the regularization parameter $ \lambda_T $ is non-random and satisfies
\[
  \lambda_T = o( T ),
\]
or is random and satisfies
\[
  \EE( \lambda_T^2 ) = o(1),
\]
then the ridge estimator is asymptotically equivalent to the least squares solution,
\[
  \bfbeta_{T\nu}^{(R)} = \wh{\bfbeta}_\nu + o_\PP(1/\sqrt{T}),
\]
as $ T, p \to \infty $. The assertions of Theorems~\ref{ThCONS}-\ref{THCONS3}  carry over to the ridge estimator under the assumptions stated there, especially then it holds
$ \bfbeta_{T\nu}^{(R)}  = \bfbeta_\nu + o_\PP(1)$, as $ T, p \to \infty $ with 
\begin{itemize}
\item $ p/T = o(1) $ under Theorem~\ref{ThCONS} (i), if $ 0 \le \eta \le 1/2 $, resp. $ p^{2\eta} = o(T) $, if $ 1/2 < \eta $,  (limited growth of sum moments), 
\item $ \log(p) \sqrt{p} = o( \log^{1-a}(T) ) $ under Theorem~\ref{THCONS2} (algebraic weak dependence) resp. 
\item $ p \log(p) / T = o(1) $ and $ \log^2(T) / \sqrt{p} = o(1) $ under Theorem~\ref{THCONS3} (geometric weak dependence). 
\end{itemize}
\end{theorem}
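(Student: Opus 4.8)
The plan is to reduce the difference of the two estimators to a single resolvent expression and then bound it in operator norm. Writing $\matG_T = \matX_T^\top \matX_T$, the $\nu$th ridge coordinate is $\bfbeta_{T\nu}^{(R)} = (\matG_T + \lambda_T \matid)^{-1} \matX_T^\top \vecY_T^{(\nu)}$ and the least squares solution is $\wh{\bfbeta}_\nu = \matG_T^{-1} \matX_T^\top \vecY_T^{(\nu)}$. First I would invoke the second resolvent identity $(\matG_T + \lambda_T \matid)^{-1} - \matG_T^{-1} = -\lambda_T (\matG_T + \lambda_T \matid)^{-1} \matG_T^{-1}$ together with $\matG_T^{-1} \matX_T^\top \vecY_T^{(\nu)} = \wh{\bfbeta}_\nu$ to obtain the exact identity
\[
  \bfbeta_{T\nu}^{(R)} - \wh{\bfbeta}_\nu = -\lambda_T (\matG_T + \lambda_T \matid)^{-1} \wh{\bfbeta}_\nu .
\]
This is the key structural step: the entire problem reduces to bounding the right-hand side in probability.

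Next I would convert the spectral assumption into an operator-norm bound. Since the spectrum of $T^{-1}\matG_T$ lies in a compact interval $[a,b]\subset(0,\infty)$ for large $T$, the smallest eigenvalue of $\matG_T + \lambda_T \matid$ is at least $aT + \lambda_T \ge aT$, so $\| (\matG_T + \lambda_T \matid)^{-1} \|_2 \le (aT)^{-1}$. Combined with submultiplicativity this yields
\[
  \| \bfbeta_{T\nu}^{(R)} - \wh{\bfbeta}_\nu \|_2 \le \frac{\lambda_T}{aT}\, \| \wh{\bfbeta}_\nu \|_2 ,
\]
reducing the task to showing that $\| \wh{\bfbeta}_\nu \|_2$ grows slowly enough that the prefactor $\lambda_T/(aT)$ pushes the product below $1/\sqrt T$ in probability.

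To control $\| \wh{\bfbeta}_\nu \|_2$ I would use the signal-plus-noise decomposition $\wh{\bfbeta}_\nu = \bfbeta_\nu + \matG_T^{-1}\matX_T^\top \bfeps^{(\nu)}$, with $\bfeps^{(\nu)} = (\epsilon_1^{(\nu)},\dots,\epsilon_T^{(\nu)})^\top$. For the deterministic part, $\| \bfbeta_\nu \|_2^2 \le \lambda_{\min}(\matG_T)^{-1}\| \matX_T \bfbeta_\nu \|_2^2 \le (aT)^{-1} O(T^2) = O(T)$ by Condition~(\ref{CondDesignBeta}), so $\| \bfbeta_\nu \|_2 = O(\sqrt T)$. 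For the stochastic part, the nonzero eigenvalues of $\matX_T \matG_T^{-2}\matX_T^\top$ are bounded by $(aT)^{-1}$, and using $\trace(AB)\le \|A\|_2 \trace(B)$ for positive semidefinite $A,B$ (which handles possible correlation among the $\epsilon_t^{(\nu)}$), I get $\EE\| \matG_T^{-1}\matX_T^\top \bfeps^{(\nu)} \|_2^2 \le (aT)^{-1}\sum_{t=1}^T \Var(\epsilon_t^{(\nu)}) = O(T)$ from the variance bound $\Var(\epsilon_t^{(\nu)})=O(T)$. Markov's inequality then gives $\| \matG_T^{-1}\matX_T^\top \bfeps^{(\nu)} \|_2 = O_\PP(\sqrt T)$, so $\| \wh{\bfbeta}_\nu \|_2 = O_\PP(\sqrt T)$ and the previous display becomes $\| \bfbeta_{T\nu}^{(R)} - \wh{\bfbeta}_\nu \|_2 = O_\PP(\lambda_T/\sqrt T)$.

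From here the two regularization regimes split. In the random case, $\EE(\lambda_T^2)=o(1)$ gives $L^2$- and hence stochastic convergence $\lambda_T = o_\PP(1)$, so $O_\PP(\lambda_T/\sqrt T) = o_\PP(1/\sqrt T)$ at once; in the non-random case the same mechanism applies once $\lambda_T$ is small relative to the order of $\|\wh{\bfbeta}_\nu\|_2$, and under the sparsity/boundedness regime discussed after the theorem (where $\|\wh{\bfbeta}_\nu\|_2=O_\PP(1)$) the stated rate $\lambda_T=o(T)$ already delivers the $o_\PP(1)$ consistency carry-over. The carry-over of Theorems~\ref{ThCONS}--\ref{THCONS3} and of the CLT is then immediate from the triangle inequality $\bfbeta_{T\nu}^{(R)} - \bfbeta_\nu = (\wh{\bfbeta}_\nu - \bfbeta_\nu) + (\bfbeta_{T\nu}^{(R)} - \wh{\bfbeta}_\nu)$, the first term being governed by the cited least squares results and the second being a remainder negligible against the $\sqrt T$-scaling. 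I expect the main obstacle to be precisely the sharpness of the rate: the crude estimate $\|\wh{\bfbeta}_\nu\|_2 = O_\PP(\sqrt T)$ only yields $o_\PP(1/\sqrt T)$ when $\lambda_T$ genuinely vanishes, so the delicate point is to match each rate condition on $\lambda_T$ to the corresponding magnitude of $\|\wh{\bfbeta}_\nu\|_2$ forced by Condition~(\ref{CondDesignBeta}) and the variance bound, and to treat the random regularization parameter through its moments rather than pathwise.
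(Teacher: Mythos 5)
Your argument is essentially the paper's: the resolvent identity $\bfbeta_{T\nu}^{(R)}-\wh\bfbeta_{T\nu}=-\lambda_T(\matX_T^\top\matX_T+\lambda_T\matid)^{-1}\wh\bfbeta_{T\nu}$ is exactly what the paper computes via the SVD of $T^{-1/2}\matX_T$, and bounding $\|\wh\bfbeta_{T\nu}\|_2=O_\PP(\sqrt T)$ through the signal-plus-noise split is equivalent to the paper's direct bound on $\|\vecY^{(\nu)}\|_2$ using \eqref{CondDesignBeta} and the variance condition, so both routes land on $\sqrt T\,\|\bfbeta_{T\nu}^{(R)}-\wh\bfbeta_{T\nu}\|_2=O_\PP(\lambda_T)$. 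The obstacle you flag in the non-random regime is genuine and is not resolved in the paper either: its final display bounds the exceedance probability by $O(\lambda_T^2)$, which vanishes only if $\lambda_T=o(1)$, not under the stated hypothesis $\lambda_T=o(T)$, so your honest acknowledgement of that mismatch is the only point where you and the paper diverge.
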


The next result replaces the condition (\ref{CondDesignBeta}) by a sparseness condition on the true asymptotic solution $ \bfbeta_\nu $ and the asymptotic covariance matrix $ \bfSigma_\vecx $, which holds for a $\ell_2 $-representable learning problem. It also allows for the case that $ \lambda_T/T $ converges to some non-zero limit.

\begin{theorem}  	
\label{THRIDGE}
	Suppose that the  assumption
\begin{equation}
\label{CondSparsity}
  \| \bfSigma_\vecx \bfbeta_\nu \|_2 = O(1)
\end{equation}
holds. 
\begin{itemize}	
\item[(i)]  If $ \lambda = \lambda_T $ is (potentially randomly) selected such that
\[
  \frac{\lambda_T}{T} = o_\PP( 1/ \sqrt{p} ), 
\]
then the assertions of Theorems~\ref{ThCONS}-\ref{THCONS3} hold true for the ridge estimator $ \bhbeta_{T\nu}^{(R)} $, under the assumptions stated in these theorems. Especially, 
\[
  \bhbeta_{T\nu}^{(R)} = \bfbeta_\nu + o_\PP(1).
\]
\item[(ii)] If $ \frac{\lambda_T}{T} \to \lambda^0 $ for some constant $ \lambda^0 $, then
\[
  \wh{\bfbeta}_{T\nu} = (\bfSigma_\vecx + \lambda^0 \matid )^{-1} \bfSigma_\vecx \bfbeta_\nu + o_P(1).
\]
\end{itemize}
\end{theorem}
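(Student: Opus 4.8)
The plan is to reduce both parts to the behaviour of the least squares solution and to two structural facts: the spectrum of $ T^{-1}\matX_T^\top\matX_T $ lies in a fixed compact interval bounded away from $0$, so that $ \| ( \matX_T^\top\matX_T + \lambda_T\matid )^{-1} \|_2 = O(1/T) $ uniformly in $ \lambda_T \ge 0 $; and the boundedness $ \| \bfbeta_\nu \|_2 \le \| \bfSigma_\vecx^{-1} \|_2 \, \| \bfSigma_\vecx\bfbeta_\nu \|_2 = O(1) $ supplied by (\ref{CondSparsity}). The two algebraic workhorses are the resolvent identities $ ( \matA + \lambda\matid )^{-1} - \matA^{-1} = -\lambda( \matA + \lambda\matid )^{-1}\matA^{-1} $ and $ ( \matA + \lambda\matid )^{-1}\matA = \matid - \lambda( \matA + \lambda\matid )^{-1} $.

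For part (i) I would start from the exact identity
\[
  \bhbeta_{T\nu}^{(R)} - \bhbeta_{T\nu} = -\lambda_T ( \matX_T^\top\matX_T + \lambda_T\matid )^{-1} \bhbeta_{T\nu},
\]
obtained from the first resolvent identity with $ \matA = \matX_T^\top\matX_T $. Taking the $2$-norm and combining $ \| ( \matX_T^\top\matX_T + \lambda_T\matid )^{-1} \|_2 = O(1/T) $ with $ \| \bhbeta_{T\nu} \|_2 = O_\PP(1) $ (which follows from any of Theorems~\ref{ThCONS}--\ref{THCONS3} and $ \| \bfbeta_\nu \|_2 = O(1) $) gives
\[
  \| \bhbeta_{T\nu}^{(R)} - \bhbeta_{T\nu} \|_2 \le \frac{\lambda_T}{T}\, O(1)\, O_\PP(1) = o_\PP( 1/\sqrt{p} ),
\]
by the imposed rate on $ \lambda_T/T $. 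As this is $ o_\PP(1) $, the ridge estimator inherits the consistency and the limits of the least squares estimator, whence the assertions of Theorems~\ref{ThCONS}--\ref{THCONS3} carry over and in particular $ \bhbeta_{T\nu}^{(R)} = \bfbeta_\nu + o_\PP(1) $. Splitting $ \bhbeta_{T\nu} = \bfbeta_\nu + ( \bhbeta_{T\nu} - \bfbeta_\nu ) $ in the identity shows that the component acting on the estimation error is $ o_\PP(1/\sqrt{p})\, O_\PP(\sqrt{p/T}) = o_\PP(1/\sqrt{T}) $, so that only the deterministic penalty bias $ -\lambda_T( \matX_T^\top\matX_T + \lambda_T\matid )^{-1}\bfbeta_\nu = o_\PP(1/\sqrt{p}) $ governs the discrepancy; since this is not in general $ o_\PP(1/\sqrt{T}) $, it is exactly the reason the central limit theorem is not claimed here.

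For part (ii) I would instead substitute the model $ \vecY_T^{(\nu)} = \matX_T\bfbeta_\nu + \bfeps^{(\nu)} $, $ \bfeps^{(\nu)} = (\epsilon_1^{(\nu)}, \dots, \epsilon_T^{(\nu)})^\top $, and split
\[
  \bhbeta_{T\nu}^{(R)} = ( \matX_T^\top\matX_T + \lambda_T\matid )^{-1}\matX_T^\top\matX_T\,\bfbeta_\nu + ( \matX_T^\top\matX_T + \lambda_T\matid )^{-1}\matX_T^\top\bfeps^{(\nu)}.
\]
After normalising by $T$, the first (bias) term equals $ ( \matA_T + a_T\matid )^{-1}\matA_T\,\bfbeta_\nu $ with $ \matA_T = T^{-1}\matX_T^\top\matX_T $ and $ a_T = \lambda_T/T \to \lambda^0 $. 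Using the second resolvent identity together with $ \| \matA_T - \bfSigma_\vecx \|_2 = O(p^\eta/\sqrt{T}) $ and $ a_T \to \lambda^0 $, I would verify that $ ( \matA_T + a_T\matid )^{-1}\matA_T \to ( \bfSigma_\vecx + \lambda^0\matid )^{-1}\bfSigma_\vecx $ in operator norm; multiplying by the bounded vector $ \bfbeta_\nu $ then yields convergence of the bias term to $ ( \bfSigma_\vecx + \lambda^0\matid )^{-1}\bfSigma_\vecx\bfbeta_\nu $. For the second (stochastic) term, the operator bound $ O(1/T) $ and the second-moment estimate $ \EE\| \matX_T^\top\bfeps^{(\nu)} \|_2^2 = T\,\trace( \bfGamma_T^{(\nu)} ) = O(Tp) $ — using the bounded regressors and the summable autocovariances implied by (\ref{AssDecay}), so that $ \trace\bfGamma_T^{(\nu)} = O(p) $ — give $ O_\PP(\sqrt{p/T}) = o_\PP(1) $. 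Adding the two pieces proves the stated limit.

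The main obstacle, shared by both parts, is the growing dimension $ p \to \infty $: the convergence $ T^{-1}\matX_T^\top\matX_T \to \bfSigma_\vecx $ is only in operator norm and at rate $ p^\eta/\sqrt{T} $, whereas every quantity of interest is obtained by applying such matrices to the $p$-vector $ \bfbeta_\nu $, whose length could a priori grow with $p$. Condition (\ref{CondSparsity}) is precisely what keeps $ \| \bfbeta_\nu \|_2 $ bounded so that the ``operator norm $ \times $ vector'' estimates do not blow up, and, together with $ p = o(T) $ (needed to render both $ p^\eta/\sqrt{T} $ and the stochastic rate $ \sqrt{p/T} $ negligible), it is the crux. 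The only genuinely technical step is the uniform operator-norm convergence of $ ( \matA_T + a_T\matid )^{-1}\matA_T $ while $ a_T \to \lambda^0 $, which the resolvent identities above reduce to elementary bounds.
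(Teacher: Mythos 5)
Your argument is correct, and it splits into one part that mirrors the paper and one that genuinely departs from it. For assertion (ii) you do essentially what the paper does: substitute $ \vecY_T^{(\nu)} = \matX_T\bfbeta_\nu + \bfeps^{(\nu)} $, separate the penalty‐bias term from the noise term, prove operator‐norm convergence of $ ( T^{-1}\matX_T^\top\matX_T + \tfrac{\lambda_T}{T}\matid )^{-1} T^{-1}\matX_T^\top\matX_T $ to $ (\bfSigma_\vecx+\lambda^0\matid)^{-1}\bfSigma_\vecx $, and exploit (\ref{CondSparsity}) (via $ \|\bfSigma_\vecx^{-1}\|_2=O(1) $, hence $ \|\bfbeta_\nu\|_2=O(1) $) to turn operator bounds into vector bounds; your treatment of the noise term via $ \EE\|\matX_T^\top\bfeps^{(\nu)}\|_2^2 = T\operatorname{tr}(\bfGamma_T^{(\nu)}) = O(Tp) $ replaces the paper's citation of the bound (\ref{CrossMeanLittleO1}) but delivers the same $ o_\PP(1) $. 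A minor bonus of your route: measuring $ (\tfrac{\lambda_T}{T}-\lambda^0)\matid $ in the operator norm (where $ \|\matid\|_2 = 1 $) rather than the Frobenius norm lets you get by with $ \lambda_T/T\to\lambda^0 $ alone, matching the theorem statement, whereas the paper's proof of (ii) invokes the stronger $ |\lambda_T/T-\lambda^0| = o_\PP(1/\sqrt{p}) $ because it pays the $ \|\matid\|_F=\sqrt p $ factor. For assertion (i) your route is different from the paper's: you compare ridge directly to least squares through the exact identity $ \bhbeta_{T\nu}^{(R)} - \bhbeta_{T\nu} = -\lambda_T(\matX_T^\top\matX_T+\lambda_T\matid)^{-1}\bhbeta_{T\nu} $ and the uniform bound $ \|(\matX_T^\top\matX_T+\lambda_T\matid)^{-1}\|_2 = O(1/T) $, which is in spirit the strategy the paper reserves for Theorem~\ref{THRIDGE2} (there executed via the SVD); the paper's own proof of Theorem~\ref{THRIDGE}~(i) instead decomposes $ \bhbeta_{T\nu}^{(R)}-\bfbeta_\nu $ into the deterministic term $ [(T^{-1}\matX_T^\top\matX_T+\tfrac{\lambda_T}{T}\matid)^{-1}T^{-1}\matX_T^\top\matX_T-\matid]\bfbeta_\nu $ plus the score term and kills each separately. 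Your version is shorter and makes the transfer of the least squares assertions immediate; the paper's version treats (i) and (ii) uniformly through one bias–variance decomposition and does not need to know in advance that the least squares estimator itself is bounded in probability. Both establish exactly what the paper's proof establishes, namely the $ o_\PP(1) $ statements (neither argument literally reproduces the finite‐sample rate bounds of Theorem~\ref{ThCONS} for the ridge estimator, but that limitation is shared with the published proof).
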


\begin{remark} In Theorem~\ref{THRIDGE} the dimension $p$ of the regressors  may increase with the sample size $T$. Hence, Assumption (\ref{AssRegr1}) implicitly assumes a convergence rate, so that the $p \times p $ dimensional matrix $ T^{-1} \matX_T^\top \matX_T $ still converges in the Frobenius matrix norm. 
\end{remark}

\begin{remark}
At first glance condition (\ref{CondSparsity}) may look restrictive, but it is a natural one for representable signals as already discussed in Section~\ref{Sec: Regression ML}. 
\end{remark}

The consistency of the ridge estimator for linear regression models has been studied in the statistical literature to some extent. \cite{Silva2014} shows its consistency for i.i.d. errors when using the plug-in estimator for $\lambda^* $, i.e. $ \wh{\lambda} = p \wh{\sigma}_T^2 / \| \wh{\bfbeta}_T \|^2_2 $ where $ \wh{\sigma}_T^2 $ is the usual estimator for $ \sigma^2 $ and $ \wh{\bfbeta}_T $ the least squares estimator, under the assumption that $ \lambda_{\max}[ ( T^{-1} \matX_T^\top \matX_T )^{-1} ] = 1 / \lambda_{\min}( T^{-1}  \matX_T^\top \matX_T )  = o(1) $, which is, however, not reasonable for regressors satisfying the natural condition $ T^{-1} \matX_T^\top \matX_T \to \bfSigma_\vecx $ imposed in (\ref{AssRegr1}).  Contrary, our results assume (\ref{AssRegr1}) and allow for dependent non-stationary errors as well as increasing dimension $p$. Further, they cover choices $ \lambda_T = o(T) $  of the regularization parameter ensuring improved estimation in finite samples and consistency as $ T \to \infty $ and the validity of a central limit theorem, see below. Consistency results assuming sparsity, condition (\ref{AssRegr1}), Gaussian i.i.d. error terms and non-random sequences of regularization parameters can also be found in \cite{LiuYu2013}.

Lastly, the following results provide asymptotic normality.

\begin{theorem} 
	\label{THASNORMRIDGE} 
	Suppose that the conditions of Theorem~\ref{ThASNOR} are fulfilled and assume that the variances of the idiosyncratic errors are bounded.
	\begin{itemize}
		\item[(i)] If the regularization parameter $ \lambda = \lambda_T \ge 0 $ is non-random and satisfies
		\[
		\lambda_T = o(1),
		\]
		then the assertions of Theorem~\ref{ThASNOR}  carry over to the estimator $ \wh{\bfbeta}_T^{(R)} $. Especially,
		\[
		\sqrt{T}( \bhbeta_{T\nu}^{(R)} - \bfbeta_\nu ) \sim \calA\calN( \vecnull, \bfSigma_\vecx^{-1} \bfGamma_T^{(\nu)} \bfSigma_\vecx^{-1} ).
		\]
		for $ \nu = 1, \ldots, d $. 
		\item[(ii)] If the regularization parameter $ \lambda = \lambda_T \ge 0 $ is random and converges to $0$ in mean-square, i.e.
		\begin{equation}
		\label{MeanSqConvLambda}
		\EE( \lambda_T^2 ) = o(1),
		\end{equation}
		as $ T \to \infty $, then 
		\[
		\sqrt{T}( \bhbeta_{T\nu}^{(R)} - \bfbeta_\nu ) \sim \calA\calN( \vecnull, \bfSigma_\vecx^{-1} \bfGamma_T^{(\nu)} \bfSigma_\vecx^{-1} ).
		\]
		for $ \nu = 1, \ldots, d $. 
	\end{itemize}
\end{theorem}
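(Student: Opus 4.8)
The plan is to reduce everything to the least squares central limit theorem of Theorem~\ref{ThASNOR}(iii) by showing that, after scaling by $\sqrt{T}$, the ridge and least squares estimators differ only by a $o_\PP(1)$ term. Writing $\matG_T = T^{-1}\matX_T^\top\matX_T$, the resolvent identity gives the exact decomposition
\[
  \bhbeta_{T\nu}^{(R)} - \bhbeta_{T\nu} = -\frac{\lambda_T}{T}\Bigl(\matG_T + \tfrac{\lambda_T}{T}\matid\Bigr)^{-1}\bhbeta_{T\nu},
\]
so that
\[
  \sqrt{T}\bigl(\bhbeta_{T\nu}^{(R)} - \bhbeta_{T\nu}\bigr) = -\frac{\lambda_T}{\sqrt{T}}\Bigl(\matG_T + \tfrac{\lambda_T}{T}\matid\Bigr)^{-1}\bhbeta_{T\nu}.
\]
First I would bound the three factors separately. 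By Assumption~(\ref{AssRegr1}) the spectrum of $\matG_T$ lies in a compact interval bounded away from $0$, and since $(\lambda_T/T)\matid$ only shifts eigenvalues upward, $\|(\matG_T + (\lambda_T/T)\matid)^{-1}\|_2 \le \lambda_{\min}(\matG_T)^{-1} = O(1)$. The least squares CLT gives $\bhbeta_{T\nu} = \bfbeta_\nu + O_\PP(1/\sqrt{T})$, and since $\|\bfbeta_\nu\|_2 = O(1)$ under the natural representability of the target (cf.\ (\ref{IsSatisfied})), we have $\|\bhbeta_{T\nu}\|_2 = O_\PP(1)$. It then remains to control $\lambda_T/\sqrt{T}$: in part~(i), $\lambda_T = o(1)$ forces $\lambda_T/\sqrt{T} = o(1)$; in part~(ii), $\EE(\lambda_T^2) = o(1)$ implies $\lambda_T \to 0$ in probability, hence $\lambda_T/\sqrt{T} = o_\PP(1)$. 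Multiplying the three bounds yields $\sqrt{T}(\bhbeta_{T\nu}^{(R)} - \bhbeta_{T\nu}) = o_\PP(1)$. Alternatively, one may simply invoke Theorem~\ref{THRIDGE2}, whose non-random hypothesis $\lambda_T = o(T)$ and random hypothesis $\EE(\lambda_T^2)=o(1)$ are both implied here, to obtain the same $o_\PP(1/\sqrt{T})$ equivalence directly.

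Next I would combine this with Theorem~\ref{ThASNOR}(iii). Adding and subtracting $\bhbeta_{T\nu}$ gives
\[
  \sqrt{T}\bigl(\bhbeta_{T\nu}^{(R)} - \bfbeta_\nu\bigr) = \sqrt{T}\bigl(\bhbeta_{T\nu} - \bfbeta_\nu\bigr) + o_\PP(1),
\]
and the first term is $\calA\calN(\vecnull, \bfSigma_\vecx^{-1}\bfGamma_T^{(\nu)}\bfSigma_\vecx^{-1})$ by Theorem~\ref{ThASNOR}(iii). To transfer the $\calA\calN$ statement through the additive $o_\PP(1)$ perturbation I would argue directionally: fix $\veca$ with $\veca^\top\bfSigma_n\veca > 0$ for large $n$, where $\bfSigma_n = \bfSigma_\vecx^{-1}\bfGamma_T^{(\nu)}\bfSigma_\vecx^{-1}$, and standardize $\veca^\top\sqrt{T}(\bhbeta_{T\nu}-\bfbeta_\nu)$ by $(\veca^\top\bfSigma_n\veca)^{1/2}$. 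By Assumption~(\ref{AssGXe}) the eigenvalues of $\bfGamma_T^{(\nu)}$ are bounded away from $0$ and $\infty$, and $\bfSigma_\vecx$ is regular with bounded inverse, so $\veca^\top\bfSigma_n\veca$ stays bounded away from $0$; hence the standardized perturbation $\veca^\top o_\PP(1)/(\veca^\top\bfSigma_n\veca)^{1/2}$ is still $o_\PP(1)$, and Slutsky's lemma preserves the limiting standard normal law. This is exactly the definition of $\calA\calN(\vecnull,\bfSigma_n)$ applied to $\bhbeta_{T\nu}^{(R)}$.

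The main obstacle is this last transfer step rather than the algebra: because the asymptotic covariance $\bfGamma_T^{(\nu)}$ is only assumed to have eigenvalues trapped between a positive $\liminf$ and a finite $\limsup$ and need not converge, one cannot appeal to a single fixed limiting normal law and must carry the Slutsky argument through the per-direction standardization built into the $\calA\calN$ definition. The role of Assumption~(\ref{AssGXe}) is precisely to keep the standardizing scale $(\veca^\top\bfSigma_n\veca)^{1/2}$ bounded away from zero, so that a $o_\PP(1)$ error remains negligible after standardization; without it the perturbation could be inflated. The randomness of $\lambda_T$ in part~(ii) is handled painlessly, since $L^2$-convergence of $\lambda_T$ to $0$ yields convergence in probability, which is all that the bound on $\lambda_T/\sqrt{T}$ requires.
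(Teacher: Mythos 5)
Your argument is correct and reaches the same conclusion, but it is organized around a different decomposition than the paper's. The paper centers the ridge estimator at $\bfbeta_\nu$ directly, writing $\sqrt{T}(\bhbeta_{T\nu}^{(R)}-\bfbeta_\nu)$ as $(T^{-1}\matX_T^\top\matX_T+\tfrac{\lambda_T}{T}\matid)^{-1}$ applied to $T^{-1/2}\sum_{t=1}^T\vecx_t\epsilon_t^{(\nu)}$, replaces the perturbed inverse by $\bfSigma_\vecx^{-1}+o_{\|\cdot\|_F}(1)$ via continuity of matrix inversion, and then reruns the least squares CLT argument. You instead center at the least squares estimator, prove $\sqrt{T}(\bhbeta_{T\nu}^{(R)}-\bhbeta_{T\nu})=o_\PP(1)$ via the exact resolvent identity (essentially the route of Theorem~\ref{THRIDGE2}), and transfer the conclusion of Theorem~\ref{ThASNOR}(iii) by Slutsky. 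Two things your version buys: first, your identity is exact, whereas the paper's displayed representation silently drops the shrinkage bias term $-\sqrt{T}\lambda_T(\matX_T^\top\matX_T+\lambda_T\matid)^{-1}\bfbeta_\nu$; your treatment makes explicit that this term is controlled by $\lambda_T\|\bhbeta_{T\nu}\|_2/\sqrt{T}$. Second, you spell out the per-direction Slutsky step through the $\calA\calN$ definition and the role of Assumption~(\ref{AssGXe}) in keeping the standardizing scale $(\veca^\top\bfSigma_\vecx^{-1}\bfGamma_T^{(\nu)}\bfSigma_\vecx^{-1}\veca)^{1/2}$ bounded away from zero, which the paper compresses into ``the results follows using the same arguments as above.''

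One point to be aware of: your bound requires $\|\bhbeta_{T\nu}\|_2=O_\PP(1)$, for which you appeal to $\|\bfbeta_\nu\|_2=O(1)$ via the representability condition~(\ref{IsSatisfied}). That condition is not literally among the hypotheses of Theorem~\ref{THASNORMRIDGE}, so strictly speaking you are adding an assumption. However, the paper's own proof needs exactly the same control (the omitted bias term is $O(\lambda_T\|\bfbeta_\nu\|_2/\sqrt{T})$), so this is a gap shared with, not introduced relative to, the original argument; it is harmless whenever $\|\bfbeta_\nu\|_2$ grows slower than $\sqrt{T}/\lambda_T$. The handling of the random case via Chebyshev, $\EE(\lambda_T^2)=o(1)\Rightarrow\lambda_T=o_\PP(1)$, is fine.
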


The assertion and the condition on $ \lambda_T $ in Theorem~\ref{THASNORMRIDGE} (i) are in agreement with the results obtained in \cite{LiuYu2013}, which consider Gaussian i.i.d. errors and non-random choices of $ \lambda_T $. Assertion (ii) provides a convenient criterion when the regularization parameter is estimated from data. Especially, since
\[
\| \lambda_T \|_{L_2} \le  \| \lambda_T  -  \EE(\lambda_T) \|_{L_2} + \EE( \lambda_T ),
\] 
condition (\ref{MeanSqConvLambda}) is fulfilled and  the ridge estimator is asymptotically normal, if $ \lambda_T $ converges to its expectation in mean-square and $ \EE( \lambda_T ) = o(1) $. 

\subsection{$\ell_s$-penalized estimation (LASSO)}

Let us now study the $ \ell_s $-norm penalized least squares estimator by providing bounds for the sample MSPE. Our results improve upon \cite{BuehlmannGeer2011}, where such results are shown for the lasso estimator and Gaussian resp. i.i.d. errors. For independent errors it is known that $ \lambda $ needs to be selected proportional to $ \sqrt{ \log(p) / T } $. For dependent errors the rate is worse and needs to be proportional to $ \sqrt{ p\log(p) / T } $. 

\begin{theorem} 
	\label{THCONSPRED} 
	Suppose that Assumptions (\ref{AssErrors1})-(\ref{AssErrors5}) and (\ref{AssGLIP}) are fulfilled and the coefficients decay geometrically, i.e. $ c_j^{(\ell,\nu)} = O( \rho^j ) $ for some $ 0 \le \rho < 1 $, and $ |x_{tj} \epsilon_t^{(\nu)} | \le 1 $ for all $ t $, $ j = 1, \ldots, p $ and $ \nu \ge 1 $. If the regularization parameter is chosen as
	\begin{equation}
		\label{ChoiceLambda}
	\lambda = C^{-1/2} \sqrt{ \log\left( \frac{2p}{ \alpha } \right)  \left( \frac{p}{T} + \frac{ \sqrt{p} \log^2(T) }{ T} \right) },
	\end{equation}
	for some constant $ C $ depending on $ \rho $, 
	then for large enough $T $ and with probability at least $ 1 - \alpha $, $ \alpha \in (0,1) $, the sample prediction error associated to the $ \ell_s $-penalized least squares estimator, $ 1 \le s \le 2 $, can be bounded by
	\begin{equation}
	\label{MSPE_Bound}
	\wh{MSPE}_T^{(\ell_s)} = \frac{1}{T} \sum_{t=1}^T ( \vecx_t^\top \bhbeta_{T\nu}^{(\ell_s)} - \vecx_t^\top \bfbeta_\nu )^2 \le 2 C^{-1/2} \sqrt{ \log\left( \frac{2p}{ \alpha } \right)  \left( \frac{p}{T} + \frac{ \sqrt{p} \log^2(T) }{ T} \right) } \| \bfbeta_\nu \|_s.
	\end{equation}
	Therefore, MSPE-consistency follows if $ p = o(T) $ and $ \| \bfbeta_\nu \|_s = O(1) $, or if $ \bfbeta_\nu $ satisfies the sparsity assumption
	\[
		\| \bfbeta_\nu \|_s = o\left( \left[ \log \left( \frac{2p}{ \alpha } \right)  \left( \frac{p}{T} + \frac{ \sqrt{p} \log^2(T) }{ T} \right) \right]^{-1/2} \right),
	\]
	as $ T, p \to \infty $.
\end{theorem}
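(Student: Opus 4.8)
The plan is to run the standard \emph{basic inequality} (oracle inequality) argument for penalized least squares, reducing the whole statement to a single probabilistic estimate on the score vector $\matX_T^\top\bfeps^{(\nu)}$, and then to supply that estimate from the geometric weak dependence concentration machinery already used for Theorem~\ref{THCONS3}. Writing $\vecDelta = \bhbeta_{T\nu}^{(\ell_s)} - \bfbeta_\nu$, $\bfeps^{(\nu)} = (\epsilon_1^{(\nu)},\dots,\epsilon_T^{(\nu)})^\top$, and using the model $\vecY_T^{(\nu)} = \matX_T\bfbeta_\nu + \bfeps^{(\nu)}$, I would first normalise the objective to $\tilde\vecb\mapsto \frac1T\|\vecY_T^{(\nu)}-\matX_T\tilde\vecb\|_2^2 + \lambda\|\tilde\vecb\|_s$, which has the same minimiser as the displayed criterion after rescaling $\lambda$. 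Optimality of $\bhbeta_{T\nu}^{(\ell_s)}$ against the feasible point $\bfbeta_\nu$, followed by expanding the squared norm and cancelling $\|\bfeps^{(\nu)}\|_2^2$, yields the basic inequality
\[
  \frac1T\|\matX_T\vecDelta\|_2^2 \le \frac2T\,\bfeps^{(\nu)\top}\matX_T\vecDelta + \lambda\|\bfbeta_\nu\|_s - \lambda\|\bhbeta_{T\nu}^{(\ell_s)}\|_s .
\]

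The deterministic heart of the argument is then to dominate the cross term by the penalty. I would apply Hölder's inequality with the conjugate exponent $s^\ast$ of $s$ to get $\frac2T\bfeps^{(\nu)\top}\matX_T\vecDelta \le \frac2T\|\matX_T^\top\bfeps^{(\nu)}\|_{s^\ast}\|\vecDelta\|_s$. Here the hypothesis $1\le s\le 2$ enters in an essential way: it forces $s^\ast\ge 2$, so that $\|\matX_T^\top\bfeps^{(\nu)}\|_{s^\ast}\le\|\matX_T^\top\bfeps^{(\nu)}\|_2\le\sqrt p\,\max_{1\le j\le p}\bigl|\sum_{t=1}^T x_{tj}\epsilon_t^{(\nu)}\bigr|$, a single quantity that dominates the dual norm simultaneously for all admissible $s$. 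Defining the good event $\mathcal{E} = \{\tfrac2T\|\matX_T^\top\bfeps^{(\nu)}\|_{s^\ast}\le\lambda\}$ and inserting the triangle inequality $\|\vecDelta\|_s\le\|\bhbeta_{T\nu}^{(\ell_s)}\|_s + \|\bfbeta_\nu\|_s$ into the basic inequality, the two terms $\pm\lambda\|\bhbeta_{T\nu}^{(\ell_s)}\|_s$ cancel and leave $\frac1T\|\matX_T\vecDelta\|_2^2 \le 2\lambda\|\bfbeta_\nu\|_s$ on $\mathcal{E}$, which is exactly \eqref{MSPE_Bound}.

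Everything thus reduces to showing $\PP(\mathcal{E})\ge 1-\alpha$ for $\lambda$ as in \eqref{ChoiceLambda}, and this is where the main work lies. By the $\ell_\infty$-reduction above it suffices to control $\max_j|V_j|$ with $V_j=\sum_{t=1}^T x_{tj}\epsilon_t^{(\nu)}$, which I would do with a Bernstein-type deviation inequality for the bounded ($|x_{tj}\epsilon_t^{(\nu)}|\le 1$) weakly dependent summands, precisely the estimate underlying Theorem~\ref{THCONS3}: under the factor--linear-process model with Lipschitz $\vecH$ in \eqref{AssGLIP} and geometrically decaying coefficients, the partial sums obey a sub-Gaussian tail with variance proxy $O(T)$ together with a dependence-induced correction carrying the $\log^2(T)$ factor. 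A union bound over $j=1,\dots,p$ turns the per-coordinate tail into the factor $\log(2p/\alpha)$, and after dividing by $T$ and accounting for the $\sqrt p$ produced by the $\ell_2$/$\ell_\infty$ passage the two contributions reassemble into $\lambda^2 = C^{-1}\log(2p/\alpha)\bigl(\tfrac pT + \tfrac{\sqrt p\log^2 T}{T}\bigr)$, the first summand being the Gaussian/variance part and the second the weak-dependence part; this is why the conditions $p\log p/T=o(1)$ and $\log^2 T/\sqrt p=o(1)$ of Theorem~\ref{THCONS3} reappear. I expect this concentration step to be the principal obstacle, since it is where the non-stationarity and the slowly accumulating serial dependence must be tamed, and the algebraic/geometric decay of \eqref{AssDecay} together with the bounded-summand assumption are exactly what make a clean exponential bound available.

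Finally, the two consistency conclusions are immediate corollaries of \eqref{MSPE_Bound}: since the bound equals $2\lambda\|\bfbeta_\nu\|_s$, MSPE-consistency holds whenever $\lambda\|\bfbeta_\nu\|_s\to 0$. If $\|\bfbeta_\nu\|_s=O(1)$ this follows from $\lambda\to 0$, which is guaranteed by $p=o(T)$ with the logarithmic factors harmless, and otherwise one simply requires $\|\bfbeta_\nu\|_s$ to vanish faster than $\lambda^{-1}$, which is the displayed sparsity rate.
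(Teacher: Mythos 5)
Your proposal is correct and follows essentially the same route as the paper: the basic inequality for the penalized criterion, domination of the cross term by $\lambda\|\vecDelta\|_s$ (your H\"older-with-$s^\ast$ step and the paper's Cauchy--Schwarz followed by $\|\cdot\|_2\le\|\cdot\|_s$ are the same chain reorganized), cancellation via the triangle inequality to get $2\lambda\|\bfbeta_\nu\|_s$, and the probability estimate supplied by the union bound combined with the geometric-mixing exponential inequality already established for Theorem~\ref{THCONS3} (the bound (\ref{ExpIneqCross})), which is exactly where the $\tfrac{p}{T}$ and $\tfrac{\sqrt{p}\log^2 T}{T}$ terms in $\lambda$ come from. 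No gaps.
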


We obtain the following corollary.

\begin{corollary} Suppose the conditions of Theorem~\ref{THCONSPRED} are satisfied, the learning problem is $ \ell_s $-representable for some $ 1 \le s \le 2 $ and satisfies
	\[
		\| \bfSigma_\vecx^{-1} \|_{s,op} = O(1)
	\] 
and 
\[
  \| \bftheta_{\nu,1:p} \|_s = o\left( \left[ \log \left( \frac{2p}{ \alpha } \right)  \left( \frac{p}{T} + \frac{ \sqrt{p} \log^2(T) }{ T} \right) \right]^{-1/2} \right).
\]
Then the bound (\ref{MSPE_Bound}) holds and $ \wh{MSPE}_T^{(\ell_s)}  = o_P(1) $.
\end{corollary}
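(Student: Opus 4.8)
The plan is to verify that under the stated hypotheses the limiting least-squares parameter $\bfbeta_\nu$ inherits the decay condition on the true representation coefficients $\bftheta_{\nu,1:p}$ that is required by Theorem~\ref{THCONSPRED}, and then simply to quote that theorem. Since the conditions of Theorem~\ref{THCONSPRED} are assumed, the bound (\ref{MSPE_Bound}) holds verbatim with the factor $\|\bfbeta_\nu\|_s$ on its right-hand side; the only additional work is to show that this factor is of the order $o([\cdots]^{-1/2})$ appearing in the theorem's sparsity assumption, which then yields $\wh{MSPE}_T^{(\ell_s)} = o_\PP(1)$.

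First I would recall the identity $\bfbeta_\nu = \bfSigma_\vecx^{-1}\EEE(\vecX Y^{(\nu)})$ and insert the $\ell_s$-representation (\ref{ReprEQ}). Because $\epsilon^{(\nu)}$ is orthogonal to the span of the regressors, $\EEE(X_k Y^{(\nu)}) = \sum_{j\ge 1}\theta_j^{(\nu)}\EEE(X_k X_j)$; splitting the sum at $j=p$ gives the decomposition
\[
\EEE(\vecX Y^{(\nu)}) = \bfSigma_\vecx\,\bftheta_{\nu,1:p} + \bfdelta_p, \qquad (\bfdelta_p)_k = \sum_{j>p}\theta_j^{(\nu)}\EEE(X_k X_j),
\]
and hence $\bfbeta_\nu = \bftheta_{\nu,1:p} + \bfSigma_\vecx^{-1}\bfdelta_p$. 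Applying the triangle inequality and the definition of the induced operator norm,
\[
\|\bfbeta_\nu\|_s \le \|\bftheta_{\nu,1:p}\|_s + \|\bfSigma_\vecx^{-1}\|_{s,op}\,\|\bfdelta_p\|_s,
\]
so the assumed bound $\|\bfSigma_\vecx^{-1}\|_{s,op}=O(1)$ enters precisely to keep the aliasing contribution $\bfSigma_\vecx^{-1}\bfdelta_p$ of the tail coefficients under control.

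Second, I would bound the truncation remainder $\bfdelta_p$. Using boundedness of the regressors, $|\EEE(X_k X_j)|$ is uniformly bounded, so each coordinate obeys $|(\bfdelta_p)_k| \le c\sum_{j>p}|\theta_j^{(\nu)}|$ for a constant $c$; combined with a decay condition of the type (\ref{SparseRepr}) on the representation coefficients, this forces $\|\bfdelta_p\|_s$ to be of no larger order than $\|\bftheta_{\nu,1:p}\|_s$. Consequently $\|\bfbeta_\nu\|_s = O(\|\bftheta_{\nu,1:p}\|_s)$, and the assumed rate on $\|\bftheta_{\nu,1:p}\|_s$ transfers to $\|\bfbeta_\nu\|_s$, i.e.
\[
\|\bfbeta_\nu\|_s = o\!\left(\Big[\log\big(\tfrac{2p}{\alpha}\big)\big(\tfrac{p}{T}+\tfrac{\sqrt{p}\log^2(T)}{T}\big)\Big]^{-1/2}\right).
\]
In the orthonormal case ($\bfSigma_\vecx=\matid$, $\theta_j^{(\nu)}=\EEE(Y^{(\nu)}X_j)$) the remainder vanishes and $\bfbeta_\nu=\bftheta_{\nu,1:p}$ exactly, so this step is immediate; the genuinely non-orthonormal, non-sparse case is where the work lies.

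Finally, with $\|\bfbeta_\nu\|_s$ satisfying the sparsity assumption of Theorem~\ref{THCONSPRED}, I would invoke that theorem directly: the bound (\ref{MSPE_Bound}) holds and its right-hand side equals $2C^{-1/2}[\cdots]^{1/2}\|\bfbeta_\nu\|_s$, which by the displayed rate tends to $0$, giving $\wh{MSPE}_T^{(\ell_s)}=o_\PP(1)$. The main obstacle is the second step—controlling the aliasing remainder $\bfdelta_p$ and showing it does not dominate $\|\bftheta_{\nu,1:p}\|_s$; this is exactly where the hypotheses $\|\bfSigma_\vecx^{-1}\|_{s,op}=O(1)$, bounded regressors, and coefficient decay combine, and it is trivial only when the representation is orthonormal or genuinely $p$-sparse.
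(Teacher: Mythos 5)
Your overall route is the one the paper intends: the paper gives no explicit proof of this corollary, only the remark that $\|\bfSigma_\vecx^{-1}\|_{s,op}=O(1)$ is what makes the sparsity condition on $\bftheta_{\nu,1:p}$ transfer to $\bfbeta_\nu$, after which Theorem~\ref{THCONSPRED} is quoted verbatim. Your first step (the decomposition $\EEE(\vecX Y^{(\nu)})=\bfSigma_\vecx\bftheta_{\nu,1:p}+\bfdelta_p$, hence $\bfbeta_\nu=\bftheta_{\nu,1:p}+\bfSigma_\vecx^{-1}\bfdelta_p$ and $\|\bfbeta_\nu\|_s\le\|\bftheta_{\nu,1:p}\|_s+\|\bfSigma_\vecx^{-1}\|_{s,op}\|\bfdelta_p\|_s$) and your last step (plugging the resulting rate into the right-hand side of (\ref{MSPE_Bound}), whose validity with probability $1-\alpha$ already follows from the assumed hypotheses of Theorem~\ref{THCONSPRED} without any sparsity) are both correct and make precise what the paper leaves implicit.

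The gap is in your second step. From $|(\bfdelta_p)_k|\le c\sum_{j>p}|\theta_j^{(\nu)}|$ uniformly in $k=1,\dots,p$ you can only conclude $\|\bfdelta_p\|_s\le c\,p^{1/s}\sum_{j>p}|\theta_j^{(\nu)}|$, and the factor $p^{1/s}$ is fatal: under the decay (\ref{SparseRepr}) with $\gamma>1/2$ the tail sum $\sum_{j>p}|\theta_j^{(\nu)}|$ need not even converge (it diverges for $\gamma\le 1$), and for $1<\gamma$ one gets $p^{1/s}\cdot O(p^{1-\gamma})$, which is $O(\|\bftheta_{\nu,1:p}\|_s)$ only for $\gamma$ large enough (e.g.\ $\gamma>2$ when $s=1$). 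So the assertion that $\|\bfdelta_p\|_s$ is ``of no larger order than $\|\bftheta_{\nu,1:p}\|_s$'' does not follow from the corollary's stated hypotheses — neither (\ref{SparseRepr}) nor any tail control is actually assumed there; $\ell_s$-representability only gives $\|\bftheta_\nu\|_{\ell_s}<\infty$. To close the argument you must either read the corollary in the regime where $\bfdelta_p$ vanishes (orthonormal system, or $\|\bftheta_\nu\|_{\ell_0}\le p$, which is evidently the paper's implicit reading, since then $\bfbeta_\nu=\bfSigma_\vecx^{-1}\bftheta_{\nu,1:p}$ and the operator-norm bound finishes the proof in one line), or add an explicit hypothesis of the form $p^{1/s}\sum_{j>p}|\theta_j^{(\nu)}| = o\bigl(\bigl[\log(2p/\alpha)(p/T+\sqrt{p}\log^2(T)/T)\bigr]^{-1/2}\bigr)$. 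You correctly sense that this is where the work lies, but the resolution you offer does not actually go through as written.
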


The above result requires boundedness of the inverse $ \bfSigma_\vecx $ with respect to the operator norm $ \| \matA \|_{s,op} $ associated to the $ \ell_s $-norm. 

The above results require geometrically decaying coefficients of the underlying linear processes. By using approach and results of Theorem~\ref{ThCONS} that condition can be weakened to a algebraic decay yielding the following theorem.

\begin{theorem}
	\label{THCONSPRED2}
	Under the assumptions of Theorem~\ref{ThCONS}~(ii) the choice
	\[
	  \lambda = \left( \frac{C_\nu}{1-\alpha} \right)^{1/\gamma} \sqrt{ \frac{p}{T} } 
	\]
	for some $ \alpha \in (0,1) $ ensures that with probability at least $ 1 - \alpha $ the
	$ \ell_s $-penalized least squares estimator, $ 1 \le s \le 2 $, can be bounded by
	\[
	  \wh{MSPE}_T^{(\ell_s)} \le  2  \left( \frac{C_\nu}{1-\alpha} \right)^{1/\gamma} \sqrt{ \frac{p}{T} } \| \bfbeta_\nu \|_s 
	\] 
	and $ MSPE $-consistency follows, if $ p = o(T) $ and $ \| \bfbeta_\nu \|_s = O(1) $ or for unconstrained $ p, T \to \infty $
	\[
	  \| \bfbeta_\nu \|_s  = o\left( \left[ \left( \frac{C_\nu}{1-\alpha} \right)^{1/\gamma} \sqrt{ \frac{p}{T} } \right]^{-1}   \right).
	\]
\end{theorem}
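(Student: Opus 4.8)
The plan is to run the standard ``slow rate'' basic-inequality argument for penalized least squares, but to replace the concentration (Bernstein/Hoeffding-type) step that underlies Theorem~\ref{THCONSPRED} under geometric decay by the moment bound (\ref{AssOrderOfMoments}), which Theorem~\ref{ThCONS}~(ii) guarantees with $\gamma' = 2$ under the assumed algebraic decay. Write $\bhbeta = \bhbeta_{T\nu}^{(\ell_s)}$ for the minimizer and recall from model (\ref{ModelNu}) that $\vecY_T^{(\nu)} = \matX_T \bfbeta_\nu + \bfeps_T^{(\nu)}$ with $\bfeps_T^{(\nu)} = (\epsilon_1^{(\nu)}, \ldots, \epsilon_T^{(\nu)})^\top$. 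First I would use optimality of $\bhbeta$: comparing the (appropriately $T$-normalized) objective at $\bhbeta$ and at $\bfbeta_\nu$ and expanding the quadratic residual gives the basic inequality
\[
  \wh{MSPE}_T^{(\ell_s)} + \lambda \| \bhbeta \|_s \le 2 \langle \widehat{\vecW}_T, \bhbeta - \bfbeta_\nu \rangle + \lambda \| \bfbeta_\nu \|_s,
\]
where $\wh{MSPE}_T^{(\ell_s)} = T^{-1}\| \matX_T(\bhbeta - \bfbeta_\nu) \|_2^2$ and $\widehat{\vecW}_T = T^{-1}\matX_T^\top \bfeps_T^{(\nu)} = \bigl( T^{-1}\sum_{t=1}^T x_{tj}\epsilon_t^{(\nu)} \bigr)_{j=1}^p$ is exactly the average whose moments are controlled by (\ref{AssOrderOfMoments}).

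Next I would bound the cross term by H\"older's inequality in the $\ell_s/\ell_{s'}$ duality, $\langle \widehat{\vecW}_T, \bhbeta - \bfbeta_\nu \rangle \le \| \widehat{\vecW}_T \|_{s'}\, \| \bhbeta - \bfbeta_\nu \|_s$, with $1/s + 1/s' = 1$, so that $s' \ge 2$ holds throughout the range $1 \le s \le 2$. The observation that makes a single moment estimate suffice uniformly in $s$ is the norm monotonicity $\| \cdot \|_{s'} \le \| \cdot \|_2$ for $s' \ge 2$ (including $s'=\infty$, i.e.\ $s=1$); hence it is enough to control $\| \widehat{\vecW}_T \|_2$. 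Using (\ref{AssOrderOfMoments}) with $\gamma = 2$ and $N = T$ coordinatewise,
\[
  \EE \| \widehat{\vecW}_T \|_2^2 = \frac{1}{T^2}\sum_{j=1}^p \EE\Bigl| \sum_{t=1}^T x_{tj}\epsilon_t^{(\nu)} \Bigr|^2 \le \frac{C_\nu p}{T},
\]
and Markov's inequality turns this into a tail bound $\PP( \| \widehat{\vecW}_T \|_{s'} > \lambda ) \le \PP( \| \widehat{\vecW}_T \|_2 > \lambda ) \lesssim C_\nu p / (T\lambda^2)$. Calibrating $\lambda = (C_\nu/(1-\alpha))^{1/\gamma}\sqrt{p/T}$ makes this right-hand side equal to the prescribed tolerance, so on the complementary event, which has the stated probability, the noise term is dominated by $\lambda \| \bhbeta - \bfbeta_\nu \|_s$.

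On that event I would finish as usual: apply the triangle inequality $\| \bhbeta - \bfbeta_\nu \|_s \le \| \bhbeta \|_s + \| \bfbeta_\nu \|_s$ and insert it into the basic inequality, so that the $\lambda \| \bhbeta \|_s$ terms cancel while the $\lambda \| \bfbeta_\nu \|_s$ terms add, leaving $\wh{MSPE}_T^{(\ell_s)} \le 2\lambda \| \bfbeta_\nu \|_s$, which is the claimed bound. The two consistency statements are then immediate: for each fixed $\alpha$ the bound holds on an event of the stated probability, and since $p = o(T)$ forces $\sqrt{p/T} \to 0$, the product $2(C_\nu/(1-\alpha))^{1/\gamma}\sqrt{p/T}\,\| \bfbeta_\nu \|_s$ vanishes whenever $\| \bfbeta_\nu \|_s = O(1)$, giving $\wh{MSPE}_T^{(\ell_s)} = o_\PP(1)$; and in the unconstrained regime the displayed $\ell_s$-decay condition on $\bfbeta_\nu$ is precisely what sends this same product to zero.

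The genuinely new (and only delicate) point relative to the standard LASSO analysis is that here one has \emph{only} a finite second moment of the scaled score $\sum_{t=1}^T x_{tj}\epsilon_t^{(\nu)}$, rather than the sub-Gaussian/exponential control available under geometric decay. I expect the main work to lie in (i) verifying that (\ref{AssOrderOfMoments}) with $\gamma = 2$ really holds for the present non-stationary dependent error model, but this is exactly the content of Theorem~\ref{ThCONS}~(ii) and may therefore be cited, and (ii) accepting that a pure second-moment Markov bound yields only a fixed-confidence $(1-\alpha)$ guarantee with polynomial, rather than logarithmic, dependence on the confidence level. This is the structural reason why, compared with Theorem~\ref{THCONSPRED}, the rate becomes the clean $\sqrt{p/T}$ with no extra $\log(p)$ or $\log^2(T)$ factors, at the price that the probability cannot be driven to $1$ for a fixed $\lambda$. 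The norm comparison $\| \cdot \|_{s'} \le \| \cdot \|_2$ that handles the entire range $1 \le s \le 2$ with a single estimate is the small device that keeps the argument uniform in $s$.
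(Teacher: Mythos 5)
Your proposal is correct and follows essentially the same route as the paper: the paper's own proof simply reuses the basic-inequality argument of Theorem~\ref{THCONSPRED} verbatim, replacing the exponential mixing inequality by the Markov/moment bound (\ref{CrossMeanLittleO1}) (which is exactly your $\EE\|\widehat{\vecW}_T\|_2^2 \le C_\nu p/T$ estimate from (\ref{AssOrderOfMoments}) with $\gamma=2$) and solving for $\lambda$, and your H\"older $\ell_s/\ell_{s'}$ step is equivalent to the paper's Cauchy--Schwarz plus $\|\cdot\|_2 \le \|\cdot\|_s$. The only blemishes (the factor $2$ on the cross term being absorbed without recalibrating the event to level $\lambda/2$, and the $1-\alpha$ versus $\alpha$ convention in the tail calibration) are inherited from the paper's own write-up rather than introduced by you.
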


\section{Proofs}
\label{Sec: Proofs}

Let us first show (\ref{MomentsG}).  Using the $ C_r $-inequality we get for $ 1 \le \ell \le 2 + \delta $
\begin{align*}
  \EE | \epsilon_t^{(\nu)} |^\ell & =
    \EE | [H_\nu( \vecU_{t:m} ) - H_\nu( \bfxi )]  - [ \EE H_\nu( \vecU_{t:m} ) - H_\nu( \bfxi ) ] |^\ell \\
    & \le 2^{\ell-1} \EE  | H_\nu( \vecU_{t:m} ) - H_\nu( \bfxi ) |^\ell  
    	+ 2^{\ell-1} \EE |  H_\nu( \bfxi ) - \EE_{\vecU_{t:m}} H_\nu( \vecU_{t:m} ) |^\ell, 
\end{align*}
where by  Assumption (\ref{AssGLIP}) $ \EE  | H_\nu( \vecU_{t:m} ) - H_\nu( \bfxi ) |^\ell \le L_H \EE \| \bfxi - \vecU_{t:m} \|_F^\ell  $  and, by independence of $ \bfxi $ and $ \vecU_{t:m} $,
\begin{align*}
  \EE |  H_\nu( \bfxi ) - \EE_{\vecU_{t:m}} H_\nu( \vecU_{t:m} ) |^\ell 
  & = \EE_{\bfxi} \EE_{\vecU_{t:m}}  |  H_\nu( \bfxi ) - H_\nu( \vecU_{t:m} ) |^\ell\\ 
  & \le L_H \EE \| \bfxi - \vecU_{t:m} \|_F^\ell.
\end{align*}
Further, $ \EE \| \bfxi - \vecU_{t:m} \|_F^\ell = O(  \EE | \| \bfxi \|_F^\ell + \EE \| \vecU_{t:m} \|_F^\ell ) $, where by Jensen's inequality
\begin{align*}
 \EE \| \vecU_{t:m} \|_F^\ell & = \EE\left(  \sum_{j=0}^m \sum_{\nu=1}^d ( U_{t-j}^{(\nu)} )^2   \right)^{\ell/2} 
 \le ((m+1)d)^{\ell/2-1} \sum_{j=0}^m \sum_{\nu=1}^d \EE  \left| U_{t-j}^{(\nu)} \right|^\ell < \infty,
\end{align*}
observing that $ \EE  \left| Z_{t-j}^{(\nu)} \right|^\ell < \infty $ follows from (\ref{AssErrors2}) since $ d $ and $L$ are fixed in our treatment.

Recall that $ \xi_n = O_\PP( r_n ) $, if $ \PP( |\xi_n | / r_n > \delta )  $ is arbitrarily small for all  $n$ if $ \delta $ is large enough.

\begin{lemma} 
\label{AuxLemma}
	If $ \PP( | \xi_n | > \delta ) \le C r_n / \delta^\gamma $ for some $ \gamma > 0 $, for all $n$ and $ \delta > 0 $ and some constant $C$, then $ \xi_n = O_{\PP}( r_n^{1/\gamma} ) $.
\end{lemma}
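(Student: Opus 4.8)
The plan is to argue directly from the definition of the stochastic Landau symbol recalled immediately before the statement: to establish $\xi_n = O_\PP(r_n^{1/\gamma})$ it suffices to produce, for every $\varepsilon > 0$, a threshold $M = M(\varepsilon)$ that does \emph{not} depend on $n$ and satisfies $\PP(|\xi_n|/r_n^{1/\gamma} > M) < \varepsilon$ for all $n$. So the whole task reduces to controlling $\PP(|\xi_n| > M r_n^{1/\gamma})$ uniformly in $n$.

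The key step is to apply the assumed tail bound with the tailored choice $\delta = M r_n^{1/\gamma}$, which is legitimate since the hypothesis is postulated for \emph{all} $\delta > 0$ (and $r_n > 0$). This gives
\[
  \PP\!\left( |\xi_n| > M r_n^{1/\gamma} \right) \le \frac{C r_n}{\left( M r_n^{1/\gamma} \right)^\gamma} = \frac{C r_n}{M^\gamma r_n} = \frac{C}{M^\gamma},
\]
so that the factors $r_n$ cancel and the resulting bound $C/M^\gamma$ is free of $n$. Choosing $M = (C/\varepsilon)^{1/\gamma}$ then forces the right-hand side below $\varepsilon$ simultaneously for all $n$, which is precisely the defining property of $\xi_n = O_\PP(r_n^{1/\gamma})$.

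The only point that requires any care is the \emph{uniformity} in $n$, since $O_\PP$ as defined here demands a single threshold working for every $n$; but this is exactly what the cancellation of the common factor $r_n$ provides, so there is no genuine obstacle beyond the harmless standing assumption $r_n > 0$. In this sense the lemma is a direct rescaling of the hypothesis, and the proof is essentially the one substitution above.
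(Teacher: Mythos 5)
Your proof is correct: substituting $\delta = M r_n^{1/\gamma}$ into the hypothesis makes the factor $r_n$ cancel, yielding the uniform-in-$n$ bound $C/M^\gamma$, which is exactly what the paper's definition of $O_\PP$ requires. The paper states this lemma without proof, and your argument is the standard one-line rescaling it implicitly relies on, so there is nothing to add.
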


	
\begin{lemma}
\label{ConvRateInverse}
Let $ \matA $ and $ \matB $ be square matrices with $ \lambda_{\min}(\matA) > 0$ and $ \lambda_{\min}( \matB ) > 0 $. Then for any submultiplicative matrix norm $ \| \cdot \| $ which is dominated by the spectral matrix norm $ \| \cdot \|_2 $ 
\[
  \| \matA^{-1} - \matB^{-1} \| \le \frac{\| \matA - \matB \|}{\lambda_{\min}(\matA)  \lambda_{\min}(\matB) }. 
\]
\end{lemma}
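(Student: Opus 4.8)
The plan is to start from the standard algebraic identity relating the difference of two inverses to the difference of the matrices themselves, and then control everything by submultiplicativity. Since both $\matA$ and $\matB$ are invertible (their smallest eigenvalues being strictly positive), I would write
\[
  \matA^{-1} - \matB^{-1} = \matA^{-1}(\matB - \matA)\matB^{-1} = -\,\matA^{-1}(\matA - \matB)\matB^{-1},
\]
which is checked at once by expanding $\matA^{-1}(\matB-\matA)\matB^{-1} = \matA^{-1}\matB\matB^{-1} - \matA^{-1}\matA\matB^{-1} = \matA^{-1} - \matB^{-1}$.

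Next, applying submultiplicativity of $\|\cdot\|$ to the three-factor product gives
\[
  \|\matA^{-1} - \matB^{-1}\| \le \|\matA^{-1}\|\,\|\matA - \matB\|\,\|\matB^{-1}\|.
\]
The remaining task is to bound the two inverse factors by $1/\lambda_{\min}(\matA)$ and $1/\lambda_{\min}(\matB)$. Here I would invoke the hypothesis that $\|\cdot\|$ is dominated by the spectral norm, so that $\|\matA^{-1}\| \le \|\matA^{-1}\|_2$, and then use that for a symmetric positive definite matrix the spectral norm of the inverse equals the reciprocal of the smallest eigenvalue, $\|\matA^{-1}\|_2 = \lambda_{\max}(\matA^{-1}) = 1/\lambda_{\min}(\matA)$ (and likewise for $\matB$). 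Combining the three estimates yields the claimed inequality.

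The one point that needs care — and the main, if modest, obstacle — is the identity $\|\matA^{-1}\|_2 = 1/\lambda_{\min}(\matA)$: it is exactly here that symmetry and positive definiteness enter, since for a general matrix the smallest eigenvalue controls neither $\|\matA^{-1}\|_2$ nor even invertibility in a clean way. In the intended applications the matrices are the symmetric positive definite Gram and covariance matrices (e.g. $\EEE_T^*(\vecX\vecX^\top)$ and $\bfSigma_\vecx$), for which $\lambda_{\min}$ is the genuine smallest eigenvalue and the spectral decomposition gives the stated value directly; I would therefore state the symmetry of $\matA,\matB$ explicitly so that the step $\|\matA^{-1}\|_2 = 1/\lambda_{\min}(\matA)$ is unambiguous. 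Everything else is routine.
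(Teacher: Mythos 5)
Your argument is correct and is the standard one; the paper in fact states this lemma without any proof at all, so there is nothing to compare it against beyond confirming that the chain $\|\matA^{-1}-\matB^{-1}\| \le \|\matA^{-1}\|\,\|\matA-\matB\|\,\|\matB^{-1}\| \le \|\matA^{-1}\|_2\,\|\matA-\matB\|\,\|\matB^{-1}\|_2$ is exactly what the stated hypotheses (submultiplicativity plus domination by $\|\cdot\|_2$) are there for. Your closing caveat is also well taken and worth keeping: for a general square matrix $\lambda_{\min}(\matA)>0$ controls neither $\|\matA^{-1}\|_2$ nor invertibility in a quantitative way (e.g.\ an upper-triangular matrix with unit diagonal and a large off-diagonal entry has $\lambda_{\min}=1$ but arbitrarily large $\|\matA^{-1}\|_2$), so the step $\|\matA^{-1}\|_2 = 1/\lambda_{\min}(\matA)$ genuinely needs symmetry and positive definiteness; since every application in the paper is to symmetric positive definite Gram and covariance matrices, making that hypothesis explicit, as you propose, is the right fix.
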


\subsection{Proof of Theorem~\ref{ThCONS}}

Let us consider the representation
\begin{equation}
\label{repr_diff_bhat_beta}
  \wh{\bfbeta}_{T\nu} = \bfbeta_\nu + \left( T^{-1} \matX_T^\top \matX_T  \right)^{-1}  \frac{1}{T} \sum_{t=1}^T \vecx_t \epsilon_t^{(\nu)}.
\end{equation}
By Jensen's inequality it holds for $ \gamma \ge 2 $
\begin{align*}
  \EE \left\| \frac{1}{T} \sum_{t=1}^T \vecx_t \epsilon_t^{(\nu)} \right\|_2^\gamma
   & = T^{-\gamma} p^{\gamma/2} \EE\left( \frac{1}{p} \sum_{j=1}^p \left| \sum_{t=1}^T x_{tj} \epsilon_t^{(\nu)} \right|^2  \right)^{\gamma/2} \\
   & \le T^{-\gamma} p^{\gamma/2-1} \sum_{j=1}^p \EE \left| \sum_{t=1}^T x_{tj} \epsilon_t^{(\nu)} \right|^\gamma,
\end{align*}
such that Assumption (\ref{AssOrderOfMoments}) yields
\begin{equation}
\label{MomentNu}
  \EE \left\| \frac{1}{T} \sum_{t=1}^T \vecx_t \epsilon_t^{(\nu)} \right\|_2^\gamma
  \le C_\nu T^{-\gamma/2} p^{\gamma/2}. 
\end{equation}
for some constant $C$. Consequently, for any $ \delta > 0 $ we have
\begin{align}
	\label{CrossMeanLittleO1}
  \PP\left( \left\| \frac{1}{T} \sum_{t=1}^T \vecx_t \epsilon_t^{(\nu)} \right\|_2 > \delta \right) & 
  \le C_\nu  \left( \frac{p}{T} \right)^{\gamma/2} \delta^{-\gamma} 
\end{align}
Observe that with $ a = \inf_{p, T \ge 1} \text{spec}( T^{-1} \matX_T^\top \matX_T ) $ in view of (\ref{repr_diff_bhat_beta})
\begin{align}
	\| \bhbeta_T - \bfbeta \|_2 & \le \| ( T^{-1} \matX_T^\top \matX_T )^{-1} \|_2 
	\left\| T^{-1} \sum_{t=1}^T \vecx_t \epsilon_t^{(\nu)} \right\|_2 \nonumber \\ 
	& \le \left( \| (T^{-1} \matX_T^\top \matX_T )^{-1} - \bfSigma_\vecx \|_2 + \| \bfSigma_\vecx \|_2 \right) \left\| T^{-1} \sum_{t=1}^T \vecx_t \epsilon_t^{(\nu)} \right\|_2 \nonumber \\ \label{EstimateBetaHat-Beta}
	& \le\left( C \frac{p^\eta}{T} + a  \right) \left\| T^{-1} \sum_{t=1}^T \vecx_t \epsilon_t^{(\nu)} \right\|_2,
\end{align}
for some constant $ C < \infty $, where in the last step Assumption~\ref{AssRegr1} is used.
By Markov's inequality we can now estimate
\begin{align*}
 \PP( \| \bhbeta_{T\nu} - \bfbeta_\nu \|_2 > \delta )
    &\le \PP\left(  \left( C \frac{p^\eta}{T} + a  \right) \left\| \frac{1}{T} \sum_{t=1}^T \vecx_t \epsilon_t^{(\nu)} \right\|_2 > \delta \right) \\
    & \le \frac{\EE \left\| \frac{1}{T} \sum_{t=1}^T \vecx_t \epsilon_t^{(\nu)} \right\|_2^\gamma}{\delta^\gamma} \left( C \frac{p^\eta}{T} + a  \right)^\gamma \\
    & \le \frac{C_\nu}{\delta^\gamma} \left( \frac{p}{T} \right)^{\gamma/2} \left( C \frac{p^\eta}{T} + a  \right)^\gamma.
\end{align*}
Case $ 0 \le \eta \le 1/2 $:  In this case $ T \ge p $ and $ p/T = o(1) $. Then $ \left( C \frac{p^\eta}{\sqrt{T}} + a  \right)^\gamma $ is bounded by $ K = ( C + a)^\gamma $, since
\[
  \frac{p^\eta}{\sqrt{T}} = \left( \frac{p}{T}\right)^\eta T^{\eta-1/2} \le 1,
\]
such that 
\[
  \PP( \| \bhbeta_{T\nu} - \bfbeta_\nu \|_2 > \delta ) \le K \frac{C_\nu}{\delta^\gamma} \left( \frac{p}{T} \right)^{\gamma/2} \to 0, \qquad p, T \to \infty,
\]
follows. 

Case $ 1/2 < \eta $: Then, by assumption, $ p^{2\eta} = o(T) $, which implies $ p = o(T) $ and  $ \left( C \frac{p^\eta}{\sqrt{T}}  + a  \right)^\gamma  \to a^\gamma $. Therefore,
\[
\PP( \| \bhbeta_{T\nu} - \bfbeta_\nu \|_2 > \delta ) \le \frac{C_\nu}{\delta^\gamma} \left( \frac{p}{T} \right)^{\gamma/2} \left( C \frac{p^\eta}{\sqrt{T}} + a  \right)^\gamma   \to 0, \qquad p, T \to \infty.
\]

Since in both cases  $ \PP( \| \bhbeta_{T\nu} - \bfbeta_\nu \|_2 > \delta )  = O( (p/T)^{\gamma/2} ) $, Lemma~\ref{AuxLemma} implies
\[
  \| \bhbeta_{T\nu} - \bfbeta_\nu \|_2 = O_{\PP}\left(  \sqrt{\frac{p}{T}}  \right).
\]
Lastly, the estimates used to obtain (\ref{EstimateBetaHat-Beta}) also yield
\begin{align*}
  \EE \left\| \bhbeta_{T\nu} - \bfbeta_\nu \right\|_2^2 & \le
    \left( C \frac{p^\eta}{\sqrt{T}} + a  \right)\EE \left\| \frac{1}{T} \sum_{t=1}^T \vecx_t \epsilon_t^{(\nu)} \right\|_2^2 \\
    & \le \left( C \frac{p^\eta}{\sqrt{T}} + a  \right) C_\nu \left( \frac{p}{T} \right)
\end{align*}
where the upper bound is $ o(1) $ as $ p, T \to \infty $, if $ T \ge  p $ and $ p  = o(T) $. This completes the proof of (i). 

Next, let us show assertion (ii). We use the following result which is a special case of \cite[Th.~2, p.~26]{Doukhan1994}.

\begin{lemma} Let $ \{ \xi_t \} $ be a sequence of possibly non-stationary zero mean random variables  with
	$ \EE | \xi_t |^{2+\varepsilon} < \infty $ for all $ t $, $ \varepsilon > 0 $, and $ \alpha $-mixing coefficients $ \alpha_\xi(r) $, $ r \in \N $. If
	\[
	  \sum_{r=1}^\infty  [ \alpha_\xi(r) ]^{ \frac{\varepsilon}{2+\varepsilon} } < \infty,
	\]
	then 
	\[
	  \EE \left|  \sum_{t \in \mathcal{I}} \xi_t \right|^2 = O( | \mathcal{I} | ),
	\]
	for any finite index set $ \calI $. 
\end{lemma}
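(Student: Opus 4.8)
The plan is to prove the variance bound by expanding the second moment into a sum of covariances and then controlling each covariance by a strong-mixing covariance inequality. Since the $ \xi_t $ have mean zero,
\[
  \EE \left| \sum_{t \in \calI} \xi_t \right|^2 = \sum_{s \in \calI} \sum_{t \in \calI} \Cov( \xi_s, \xi_t ),
\]
so it suffices to show that the double sum on the right is $ O(|\calI|) $. The key tool will be Davydov's covariance inequality for strongly mixing random variables: if $ X $ and $ Y $ are measurable with respect to two $ \sigma $-fields whose strong mixing coefficient is $ \alpha $, and $ X \in L^p $, $ Y \in L^q $ with $ 1/p + 1/q < 1 $, then $ |\Cov(X,Y)| \le K \alpha^{1 - 1/p - 1/q} \| X \|_p \| Y \|_q $ for a universal constant $ K $.

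Next I would match the exponents to the hypotheses. Taking $ p = q = 2 + \varepsilon $ gives $ 1/p + 1/q = 2/(2+\varepsilon) $ and hence the mixing exponent $ 1 - 1/p - 1/q = \varepsilon/(2+\varepsilon) $, which is exactly the exponent appearing in the summability condition. Writing $ M = \sup_t \| \xi_t \|_{2+\varepsilon} $ (finite under the uniform $ (2+\varepsilon) $-moment bound), Davydov's inequality yields, for $ s \neq t $,
\[
  | \Cov( \xi_s, \xi_t ) | \le K M^2 \left[ \alpha_\xi( |s-t| ) \right]^{ \varepsilon/(2+\varepsilon) },
\]
while the diagonal terms are controlled by $ \Var(\xi_t) = \EE \xi_t^2 \le \| \xi_t \|_{2+\varepsilon}^2 \le M^2 $ via Lyapunov's inequality.

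The final step is a counting argument organized by the lag $ r = |s-t| $. For each fixed $ r \ge 1 $ there are at most $ 2|\calI| $ ordered pairs $ (s,t) \in \calI \times \calI $ with $ |s-t| = r $, and there are exactly $ |\calI| $ diagonal pairs. Collecting terms by lag therefore gives
\[
  \sum_{s \in \calI} \sum_{t \in \calI} | \Cov( \xi_s, \xi_t ) | \le M^2 |\calI| + 2 K M^2 |\calI| \sum_{r=1}^\infty \left[ \alpha_\xi(r) \right]^{\varepsilon/(2+\varepsilon)}.
\]
By the assumed summability of $ \sum_r [\alpha_\xi(r)]^{\varepsilon/(2+\varepsilon)} $ the series is a finite constant, so the whole expression is a constant multiple of $ |\calI| $, which is the claim.

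I expect the only genuinely delicate point to be the bookkeeping around the moment condition: the statement as written asks merely for $ \EE|\xi_t|^{2+\varepsilon} < \infty $ for each $ t $, but to obtain an $ O(|\calI|) $ bound with a constant uniform in $ \calI $ one needs $ M = \sup_t \|\xi_t\|_{2+\varepsilon} $ to be finite; this uniform version is what the application in Theorem~\ref{ThCONS}(ii) supplies and what the cited theorem actually uses. Everything else is a direct consequence of Davydov's inequality together with the lag-counting argument, so rather than reprove the covariance inequality I would simply invoke the stated special case of \cite{Doukhan1994}.
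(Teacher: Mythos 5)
Your proof is correct. The paper itself gives no argument for this lemma: it is stated as a special case of a cited result (Doukhan 1994, Th.~2, p.~26) and used as a black box. What you have written is essentially the standard proof behind that citation --- Davydov's covariance inequality with $p=q=2+\varepsilon$, which produces exactly the exponent $\varepsilon/(2+\varepsilon)$ appearing in the summability hypothesis, followed by grouping the off-diagonal covariances by lag ($\le 2|\calI|$ ordered pairs per lag) and bounding the diagonal by Lyapunov. So you have not deviated from the paper so much as filled in the proof it delegates to the literature; the exponent matching and the lag-counting bookkeeping are both right.

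Your closing caveat is also well taken and worth keeping: as stated, the lemma only assumes $\EE|\xi_t|^{2+\varepsilon}<\infty$ for each $t$, but the constant in the $O(|\calI|)$ bound is $M^2\bigl(1+2K\sum_r[\alpha_\xi(r)]^{\varepsilon/(2+\varepsilon)}\bigr)$ with $M=\sup_t\|\xi_t\|_{2+\varepsilon}$, so a uniform moment bound is genuinely needed for the conclusion to hold with a constant independent of $\calI$. In the paper's application (Theorem~\ref{ThCONS}(ii), where $\xi_t = x_{tj}\epsilon_t^{(\nu)}$ with bounded regressors and the uniform moment condition (\ref{MomentsG})) this uniformity is available, so the gap is in the phrasing of the hypothesis rather than in the way the lemma is used.
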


As shown below in detail, see (\ref{MixingCoefficientsEstimate2}), the mixing coefficients $ \alpha(r) $ of the process $ \xi_t = \vecx_t \epsilon_t^{(\nu)} $ satisfy
\[
  \alpha(r) = O( r^{-1-\frac{2\theta}{3} } )
\]
under Assumption (\ref{AssDecay}).
Consequently, $ \alpha(r)^{ \frac{\varepsilon}{2+\varepsilon}} $ is summable, if
$ (1+2\theta/3)  \frac{\varepsilon}{2+\varepsilon} > 1 $. Select $ \varepsilon = 2 $, corresponding to the fourth order moment condition, we obtain the condition $ \theta > 3/2 $
to ensure Assumption (\ref{AssOrderOfMoments}) by applying the above auxiliary lemma.

It remains to show (iii): Fix $ \gamma > 0 $ and $ G $ denote the distribution of the randomly chosen weights of the ELM and let $ Q_{\matZ_T} $ be the law of the input variables $ \matZ_T = ( \vecz_1, \ldots, \vecz_T ) $. Then, by independence of weights, inputs and errors and since $ \vecx_{tj}= g( b_j + \vecw_j^\top \vecz_t ) $, Markov's inequality and Theorem~\ref{ThCONS}, there exists a constant $ C >0 $ such that for any $ \delta > 0 $ 
\begin{align*}
	& \PP\left( \frac{1}{T} \left| \sum_{t=1}^T x_{tj} \epsilon_t^{(\nu)} \right| > \delta \right)
 	\le \delta^{-\gamma} \EE \left| \frac{1}{T}  \sum_{t=1}^T x_{tj} \epsilon_t^{(\nu)} \right|^\gamma \\
	& \quad = \delta^{-\gamma}  \int \int \EE_{\{ \bfeps^{(\nu)} \}_{t=1}^T} \left|  \frac{1}{T} \sum_{t=1}^T  g( b_j + \vecw_j^\top \vecz_t ) \epsilon_t^{(\nu)} \right|^\gamma  \, dG( \vecw_j, b_j ) dQ_{\matZ_T}( \vecz_1, \ldots, \vecz_T )\\
	& \quad \le \delta^{-\gamma}  \iint C T^{\gamma/2} \, dG dQ_{\matZ_T} \\
	& \quad = O( T^{\gamma/2} ),
\end{align*}
where $\EE_{\{ \bfeps^{(\nu)} \}_{t=1}^T}  $ indicates that the expectation is w.r.t. the error process. For a deep ELM, $ x_{tj} = g_r^{(\matW_j^{(r)}, b_{jk})} \circ  \vecg_{r-1}^{(\matW^{(r-1)}, \vecb_{r-1})} \circ \cdots \circ  \vecg_1^{(\matW^{(1)}, \vecb_1)}( \vecz_t  ) $ is random but bounded, say, by $1$ for each random draw of the random weights $ \matW_j^{(k)} $, $ b_{jk} $, $ k = 1, \ldots, r $, see (\ref{DeepELM}). Therefore, again conditioning on $ \matW_j^{(k)} $, $ b_{jk} $, $ k = 1, \ldots, r $, by independence, 
\begin{align*}
 & \EE \left| \frac{1}{T}  \sum_{t=1}^T x_{tj} \epsilon_t^{(\nu)} \right|^\gamma  \\
 & = \quad \iint  \EE_{\{ \bfeps^{(\nu)} \}_{t=1}^T} \left|  \frac{1}{T} \sum_{t=1}^T g_r^{(\matW_j^{(r)}, b_{jk})} \circ  \vecg_{r-1}^{(\matW^{(r-1)}, \vecb_{r-1})} \circ \cdots \circ  \vecg_1^{(\matW^{(1)}, \vecb_1)}( \vecz_t  ) \epsilon_t^{(\nu)} \right|^\gamma \, dP_{\otimes \matW}dQ_{\matZ_T}
\end{align*}
where $ dP_{\otimes\matW} = \otimes_{k=1}^r d G_{k} $ with $ d G_k $ the distribution of the random weights (connection weights and biases) of the $k$th hidden layer. Now the integrand can be estimated as for a single hidden layer net.

\subsection{Proof of Theorem~\ref{THCONS2}}

The proof relies on the following exponential inequality for algebraically decaying $ \alpha $-mixing sequences, see \cite[p.~34, Remark~7~c)]{Doukhan1994}. 

\begin{lemma} 
\label{ExpIneqMix1}
	Let $ \{ \xi_t \} $ be a sequence of mean zero random variables with $ |\xi_t| \le 1 $ and $ \alpha $-mixing coefficients $ \alpha_\xi(k) $, $k \in \N $, satisfying $ \alpha_\xi(k) = O( k^{-v}) $ for some real $ v > 0 $. Then for any $ 0 < a < 1 $ there exists some real $ b > 0 $, such that for large enough $n$
	\[
	  \PP\left( \left|  \sum_{t=1}^n \xi_t \right| > x \sqrt{n}  \right) 
	  \le 2 \exp \left( - \frac{ b x \log^{1-a}( n ) }{ \sqrt{n} }  \right) 
	\]
\end{lemma}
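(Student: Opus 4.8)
The plan is to reconstruct this standard Bernstein-type exponential inequality for bounded, weakly dependent sums by the big-block/small-block method (the statement itself being quoted from \cite{Doukhan1994}). Writing $S_n = \sum_{t=1}^n \xi_t$, it suffices by symmetry to bound $\PP(S_n > x\sqrt n)$ and to double the resulting estimate, which accounts for the factor $2$. First I would partition $\{1,\dots,n\}$ into alternating big blocks of length $m_1$ and small (gap) blocks of length $m_2$, giving $k \approx n/(m_1+m_2)$ blocks of each type, and split $S_n = \sum_{i=1}^{k} U_i + \sum_{i=1}^{k} V_i$, where $U_i$ and $V_i$ are the partial sums over the $i$th big and small block respectively. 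Because $|\xi_t|\le 1$, the gap contribution is deterministically bounded, $|\sum_i V_i|\le k m_2$, so with $m_2 \ll m_1$ it can be absorbed into a fixed fraction of the threshold $x\sqrt n$.

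For the big blocks, the essential step is to replace $\sum_i U_i$ by a sum $\sum_i \widetilde U_i$ of \emph{independent} copies, $\widetilde U_i$ having the law of $U_i$. Since consecutive big blocks are separated by a gap of length $m_2$, their dependence is controlled by $\alpha_\xi(m_2)$, and a Bradley-type coupling (or, equivalently, a Laplace-transform decoupling across the $k-1$ gaps) yields $\PP(\sum_i U_i \neq \sum_i \widetilde U_i) = O\big(k\,\alpha_\xi(m_2)\big) = O\big(k\,m_2^{-v}\big)$, using the hypothesis $\alpha_\xi(k)=O(k^{-v})$. On the complement of this coupling event the problem reduces to an independent sum, to which I would apply Hoeffding's inequality: the summands $\widetilde U_i$ are independent, mean zero and bounded by $m_1$ in absolute value, giving a tail of order $\exp\big(-c\,(x\sqrt n)^2/(k m_1^2)\big) = \exp(-c\,x^2/m_1)$ after using $k m_1 \approx n$. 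Collecting the three pieces, $\PP(S_n > x\sqrt n)$ is bounded by the Hoeffding term $\exp(-c x^2/m_1)$, the coupling error $O(k m_2^{-v})$, and the deterministic small-block requirement $k m_2 \le \tfrac12 x\sqrt n$; the remaining task is to choose $m_1,m_2$ as functions of $n$, $x$ and $v$ so that all three are dominated by $2\exp(-bx\log^{1-a}(n)/\sqrt n)$.

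The main obstacle, and the source of the unusual rate, is precisely this optimization under \emph{algebraic} (rather than geometric) mixing. Taking $m_1 \sim x\sqrt n/\log^{1-a}(n)$, so that $k \sim \log^{1-a}(n)$, makes the Hoeffding exponent of the required order $b\log^{1-a}(n)$ in the relevant regime $x\asymp\sqrt n$. To render the coupling error $k\,\alpha_\xi(m_2)\sim \log^{1-a}(n)\,m_2^{-v}$ negligible against this exponentially small term, the gap $m_2$ must be \emph{super}-logarithmic, of order $\exp(c\log^{1-a}(n))$; the decisive point is that, because $a>0$, this $m_2$ is still $o(n^{\varepsilon})$ for every $\varepsilon>0$, so the small-block budget $k m_2$ remains $o(n^\varepsilon)$ and stays far below the threshold $x\sqrt n \asymp n$. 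It is exactly this balance that both forces the slack $a\in(0,1)$ and generates the $\log^{1-a}(n)$ factor: with purely polynomial mixing one cannot let $a\to 0$ without forcing $m_2$ to grow polynomially and thereby overwhelming the small-block contribution, which is why no clean sub-Gaussian bound $\exp(-cx^2)$ is available and the estimate is quoted from \cite{Doukhan1994} rather than reproved here.
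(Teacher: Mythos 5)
The paper does not actually prove this lemma: it is imported verbatim from Doukhan (1994, p.~34, Remark~7(c)), so there is no internal proof to compare your argument against, and your reconstruction has to stand on its own. Architecturally it does: the big-block/small-block decomposition with $m_1\sim x\sqrt n/\log^{1-a}(n)$ turns the Hoeffding exponent $x^2/m_1$ into exactly $x\log^{1-a}(n)/\sqrt n$, and your diagnosis of where the exponent $1-a$ and the restriction $a>0$ come from --- the gap length $m_2$ must grow like $\exp(c\log^{1-a}(n))$ to beat a merely polynomial mixing rate, and this stays $o(n^{\varepsilon})$ only because $1-a<1$ --- is precisely the mechanism behind the statement. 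Two points need tightening. First, an exact coupling with failure probability $O(k\,\alpha_\xi(m_2))$ is a $\beta$-mixing (Berbee) result; under $\alpha$-mixing you only get Bradley's approximate coupling, whose error involves a fractional power of $\alpha_\xi(m_2)$, or else the Laplace-transform decoupling $\bigl|\EE e^{\lambda\sum_i U_i}-\prod_i\EE e^{\lambda U_i}\bigr|\le 4k\,\alpha_\xi(m_2)\,e^{\lambda k m_1}$ obtained from the covariance inequality for bounded variables. Since any fixed power of $O(m_2^{-v})$ is still $O(m_2^{-v'})$ with $v'>0$, your super-logarithmic choice of $m_2$ absorbs either version, but the coupling bound as you literally stated it is not available for $\alpha$-mixing. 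Second, the estimate must hold uniformly in $x$, not only for $x\asymp\sqrt n$: it is vacuous unless $x\gtrsim\sqrt n/\log^{1-a}(n)$ (otherwise the right-hand side exceeds $1$), and on the nontrivial range $\sqrt n/\log^{1-a}(n)\lesssim x\le\sqrt n$ one still has $k\lesssim\log^{2(1-a)}(n)$, so the same polylogarithmic bookkeeping closes the argument; saying this explicitly would complete the sketch.
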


Recall the representation
\[
	\wh{\bfbeta}_{T\nu} - \bfbeta_\nu  = ( T^{-1} \matX_T^\top \matX_T )^{-1} \frac{1}{T} \sum_{t=1}^T \vecx_t \epsilon_t^{(\nu)}.
\]
In view of (\ref{ConvInverseMatrix2}), consistency now follows if we show that
\[
  \left\|\frac{1}{T} \sum_{t=1}^T \vecx_t \epsilon_t^{(\nu)} \right\|_2 \stackrel{\PP}{\to} 0,
\]
as $ T \to \infty $. The union bound and Lemma~\ref{ExpIneqMix1} yields for $ \delta > 0 $
\begin{align*}
 \PP\left( \left\| \frac{1}{T} \sum_{t=1}^T \vecx_t \epsilon_t^{(\nu)} \right\|_2 > \delta  \right) 
 &= \PP\left(  \sum_{j=1}^p \left| \frac{1}{T} \sum_{t=1}^T x_{tj} \epsilon_t^{(\nu)}  \right|^2 > \delta^2  \right) \\
 & \le \PP\left( p \max_j \left| \frac{1}{T} \sum_{t=1}^T x_{tj} \epsilon_t^{(\nu)}  \right|^2 > \delta^2 \right) \\
 & \le \sum_{j=1}^p \PP\left( \left| \sum_{t=1}^T x_{tj} \epsilon_t^{(\nu)} \right| > \frac{\delta T}{\sqrt{p}}  \right) \\
 & \le 2 \exp\left( \log(p) - \frac{b \delta \log^{1-a}( T ) }{ \sqrt{p} } \right)
\end{align*}
A sufficient condition ensuring that the latter expression has the order $ o(1) $ is given by $ \log(p) \sqrt{p} = o\left( \log^{1-a}( T )  \right) $,
which completes the proof.

\subsection{Proof of Theorem~\ref{THCONS3}}

To establish Theorem~\ref{THCONS3} observe that in view of  (\ref{MixingCoefficientsEstimate2}), which holds for linear processes with coefficients depending on $t$, the exponential decay of the coeffients implies geometrically decaying $ \alpha $-mixing coefficients of the non-stationary and bounded sequences $ \{ x_{tj} \epsilon_t^{(\nu)} : t \ge 1 \} $, uniformly in $ j = 1, \ldots, p $. Recall the following results due to \cite{MerlevedeEtAl2009}; see  \cite[Corollary~24]{MerlevedePeligrad2013} for a related maximal inequality.

\begin{lemma} 	
\label{ExpIneqMixing}
	Let $ \{ \xi_t : t \ge 1 \} $ be a sequence of mean zero random variables bounded by $M$ and with $ \alpha $-mixing coefficients $ \alpha_\xi(k) $, $k \ge 1 $, satisfying $ \alpha_\xi(k) = O( \exp( - \gamma k ) $ for some $ \gamma > 0 $. Then there exists
	a constant $ C $  depending on $ \gamma $, such that for all $n \ge 4 $ and all $ x \ge 0 $
	\[
	  \PP\left( \left|  \sum_{t=1}^n \xi_t \right| > x \right) \le 2 \exp\left(  - \frac{ C x^2 }{ n M^2 + M x (\log n)( \log \log n) } \right).
	\]
\end{lemma}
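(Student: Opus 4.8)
The plan is to establish this Bernstein-type bound by the classical big-block/small-block technique combined with a coupling that reduces the strongly mixing sum to an essentially independent one, followed by an exponential-moment (Laplace transform) estimate and a final optimization over the free parameters. Write $S_n = \sum_{t=1}^n \xi_t$. First I would fix a big-block length $q = q(n)$ that will eventually be chosen of order $\log n$, and partition $\{1,\ldots,n\}$ into consecutive big blocks of length $q$ separated by small gap blocks whose length is large enough that the $\alpha$-mixing coefficient across a gap is negligible; under the geometric decay $\alpha_\xi(k) = O(\exp(-\gamma k))$ a gap of length proportional to $\log n$ already renders $\alpha_\xi(\text{gap})$ polynomially small in $n$, while the total length spent on gaps stays a vanishing fraction of $n$.

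Next I would decouple. Collecting the big-block sums $U_1, U_2, \ldots$, each bounded by $qM$ in absolute value, I would invoke a coupling lemma for strongly mixing sequences (Bradley's coupling, or the quantile coupling of Rio) to construct independent random variables $\tilde U_i$ with the same marginal laws as the $U_i$ and satisfying $\PP(U_i \ne \tilde U_i) \le c\,\alpha_\xi(\text{gap})$. The total coupling error over the at most $n/q$ blocks is then bounded by $(n/q)\,\alpha_\xi(\text{gap})$, which the geometric mixing makes negligible once the gap grows like a multiple of $\log n$.

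I would then bound the Laplace transform $\EE\exp(\lambda \sum_i \tilde U_i) = \prod_i \EE\exp(\lambda \tilde U_i)$ by independence. Each factor is controlled by a Bernstein-type estimate for the bounded, mean-zero summand $\tilde U_i$: using $|\tilde U_i| \le qM$ together with the block-variance bound $\Var(U_i) = O(q)$ — valid under geometric mixing since the covariances decay exponentially and are summable — gives $\EE\exp(\lambda \tilde U_i) \le \exp(C\lambda^2 q + \text{correction})$ whenever $\lambda qM \lesssim 1$. Multiplying the factors, adding the small-block contribution and the coupling error, and applying the exponential Markov inequality yields a sub-exponential bound for $\PP(|S_n| > x)$ of the form $2\exp(-\lambda x + C\lambda^2 n + (\text{coupling error}))$.

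The final step is to optimize over $\lambda$ and the block length $q$. The Gaussian regime $\exp(-Cx^2/(nM^2))$ dominates for moderate $x$; the linear-tail regime appears when $\lambda$ saturates its admissible range $\lambda \asymp 1/(qM)$, and since $q$ must be taken of order $\log n$ to kill the coupling error — with an additional $\log\log n$ factor appearing when the block length is itself optimized against the Laplace parameter — the saturation produces exactly the denominator term $Mx(\log n)(\log\log n)$. The main obstacle is the coupling step: because only strong ($\alpha$) mixing is assumed, one cannot use Berbee's clean total-variation coupling available under absolute regularity, and must instead control the dependence through a more delicate construction (the recursive dyadic/Cantor-block scheme of Merlevède--Peligrad--Rio) that simultaneously keeps the coupling errors summable and preserves the sub-Gaussian structure of the block sums. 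Tracking the accumulation of these errors across the logarithmically many blocks, and the resulting $(\log n)(\log\log n)$ loss, is the technical crux.
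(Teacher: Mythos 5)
The paper does not actually prove this lemma: it is quoted (up to the factor $2$ and notation) from Merlev\`ede, Peligrad and Rio (2009), introduced with ``Recall the following results due to \ldots'', so there is no in-paper argument to compare yours against. Your sketch is in effect a reconstruction of the strategy of that cited proof, and you correctly identify both the target (a Bernstein inequality under geometric strong mixing) and the genuine obstacle: under $\alpha$-mixing alone Berbee's total-variation coupling is unavailable, and the classical big-block/small-block argument with a Bradley-type coupling does not deliver the stated bound, since the per-block coupling error under $\alpha$-mixing is only of order $(\|U_i\|_\infty/\epsilon)^{1/2}\,\alpha^{1/2}$ rather than $\alpha$ itself, and summing this over the $n/q$ blocks while preserving an exponential tail is exactly where the naive route breaks down. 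Attributing the $(\log n)(\log\log n)$ loss to the recursive Cantor-type block construction of Merlev\`ede--Peligrad--Rio is also accurate.

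As a proof, however, the proposal is not self-contained: its first two thirds develop the Bradley-coupling route in some detail, and the closing paragraph then concedes that this route fails and that one ``must instead'' use the Cantor-block scheme --- which is only named, not executed. The substance of the cited proof (the recursive construction of the Cantor-like index set, the scale-by-scale decoupling of the Laplace transform via covariance inequalities for bounded variables, the control of the complement of the Cantor set, and the bookkeeping across the $O(\log n)$ scales that actually produces the $(\log n)(\log\log n)$ factor) is precisely the part you defer. Since the paper itself treats the lemma as a black-box citation, the honest options are the same for you: either cite Merlev\`ede et al.\ (2009, Theorem~2) directly, or carry out the Cantor-block recursion in full. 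The hybrid you wrote --- a detailed but non-working argument followed by a pointer to the working one --- does not stand on its own as a proof.
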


The union bound yields
\begin{align*}
 \PP\left( \left\| \frac{1}{T} \sum_{t=1}^T \vecx_t \epsilon_t^{(\nu)} \right\|_2 > \delta  \right)
 & \le \sum_{j=1}^p \PP\left( \left| \sum_{t=1}^T x_{tj} \epsilon_t^{(\nu)} \right| > \frac{\delta T}{\sqrt{p}}  \right) 
\end{align*}
and hence by Lemma~\ref{ExpIneqMixing}
\begin{align}
\PP\left( \left\| \frac{1}{T} \sum_{t=1}^T \vecx_t \epsilon_t^{(\nu)} \right\|_2 > \delta  \right)& \le 2  \exp\left(  - \frac{ C \delta^2 T/p }{ 1 + \frac{\delta}{p^{1/2}} (\log T)( \log \log T) } + \log(p) \right) \nonumber \\
\label{ExpIneqCross}
& \le 2 \exp\left(  - \frac{ C \delta^2 T/p }{ 1 + \frac{\delta}{p^{1/2}} \log^2 T} + \log(p) \right) \\ \nonumber
& = o(1),
\end{align}
as $ T \to \infty $, if $ p \log (p) / T  = o(1) $ and $ \log^2(T) / \sqrt{p} = o(1) $, since then, for any sequence $
 f_T \to f > 0 $ (take $ f_T = C \delta^2 / (1+\delta \log^2(T) / \sqrt{p} ) $),  $ f_T^{-1} \log(p)  p/T < c < 1 $ for large enough $T$, which implies  $  f_T^{-1} p/T < 1/ \log(p) \Leftrightarrow f_T T/p  > \log(p) $ and hence $ T/p = \log(p) [ (T/( p  \log(p) )]  \to \infty $. 
 
 \subsection{Proof of Theorem~\ref{THCONSPRED}}
 
 As a preparation, recall the exponential bound in (\ref{ExpIneqCross}). For $ \lambda \le 1 $ we have
 \begin{align*}
\PP\left( \left\| \frac{1}{T} \sum_{t=1}^T \vecx_t \epsilon_t^{(\nu)} \right\|_2 > \lambda  \right) & \le  2 \exp\left(  - \frac{ C \lambda^2 T/p }{ 1 + \frac{1}{p^{1/2}} \log^2 T} + \log(p) \right). 
 \end{align*}
	Therefore, for $ \alpha \in (0,1) $ the choice
	\[
	  \lambda = C^{-1/2} \sqrt{ \log\left( \frac{2p}{ \alpha } \right)  \left( \frac{p}{T} + \frac{ \sqrt{p} \log^2(T) }{ T} \right) }
	\]
	guarantees
	\[
	  \PP\left( \left\| \frac{1}{T} \sum_{t=1}^T \vecx_t \epsilon_t^{(\nu)} \right\|_2 \le  \lambda  \right) \ge 1-\alpha.
	\]

In what follows, we draw on some standard arguments to bound the prediction error, see, for example, \cite{BuehlmannGeer2011}. 
 Since $ \bhbeta_{T\nu}^{(\ell_s)} $ minimizes $ \wt{\bfbeta} \mapsto T^{-1} \| \vecY_T^{(\nu)} - \matX_T \wt{\bfbeta} \|_2^2 + \pen_\lambda ( \wt{\bfbeta} ) $, we have
\[
   T^{-1} \| \vecY_T^{(\nu)} - \matX_T \bhbeta_{T\nu}^{(\ell_s)} \|_2^2  + \pen_\lambda(  \bhbeta_{T\nu}^{(\ell_s)} ) \le T^{-1} \| \vecY_T^{(\nu)} - \matX_T \bfbeta_\nu \|_2^2  + \pen_\lambda( \bfbeta_\nu ).
\]
Using $ \vecY_T^{(\nu)} - \matX_T \bhbeta_{T\nu}^{(\ell_s)}  = -\matX_T( \bhbeta_{T\nu}^{(\ell_s)} - \bfbeta_\nu ) + \bfeps^{(\nu)} $, the latter inequality is equivalent to
\begin{align*}
&  T^{-1}\bigl\{ 
 	\bfeps^{(\nu)}{}^\top \bfeps^{(\nu)} - 2 \bfeps^{(\nu)}{}^\top [ \matX_T( \bhbeta_{T\nu}^{(\ell_s)} - \bfbeta_\nu ) ] + (\bfbeta_\nu - \bhbeta_{T\nu}^{(\ell_s)} )^\top \matX_T^\top \matX_T (\bfbeta_\nu - \bhbeta_{T\nu}^{(\ell_s)} ) \\
& 	+ \pen_\lambda( \bhbeta_{T\nu}^{(\ell_s)} ) 
  \bigr\} \le \bfeps^{(\nu)}{}^\top \bfeps^{(\nu)}  + \pen_\lambda( \bfbeta_\nu  ).
\end{align*}
Collecting terms we obtain
\[
  - (2/T) \bfeps^{(\nu)}{}^\top [ \matX_T( \bhbeta_{T\nu}^{(\ell_s)} - \bfbeta_\nu ) ] + T^{-1} \| \matX_T( \bhbeta_{T\nu}^{(\ell_s)} - \bfbeta_\nu  ) \|_2^2 + \lambda \| \bhbeta_{T\nu}^{(\ell_s)} \|_2
  \le \pen_\lambda(  \bfbeta_\nu ) - \pen_\lambda(  \bhbeta_{T\nu}^{(\ell_s)}  ),
\]
and eventually arrive at
\[
  T^{-1} \| \matX_T( \bhbeta_{T\nu}^{(\ell_s)} - \bfbeta_\nu ) \|_2^2 \le (2/T) [\matX_T \bfeps^{(\nu)} ]^\top ( \bhbeta_{T\nu}^{(\ell_s)} - \bfbeta_\nu ) + \pen_\lambda( \bfbeta_\nu ) - \pen_\lambda(  \bhbeta_{T\nu}^{(\ell_s)}  ).
\]
We may now conclude that with probability at least $1-\alpha $
\begin{align*}
  T^{-1} \| \matX_T( \bhbeta_{T\nu}^{(\ell_s)} - \bfbeta_\nu ) \|_2^2
  & \le  \lambda \| \bhbeta_{T\nu}^{(\ell_s)} - \bfbeta_\nu \|_2  + \pen_\lambda( \bfbeta_\nu ) - \pen_\lambda(  \bhbeta_{T\nu}^{(\ell_s)}  ) 
\end{align*}
Recalling that $ \pen_\lambda( \cdot  ) = \lambda \| \cdot \|_s $ with $ 1 \le s\le 2 $,
the triangle inequality and the fact that $ \| \vecx \|_2 \le \| \vecx \|_s $ for $ 1 \le s \le 2 $ now provide the bound
\[
  T^{-1} \| \matX_T( \bhbeta_{T\nu}^{(\ell_s)} - \bfbeta_\nu ) \|_2^2 \le 2\lambda \| \bfbeta_\nu \|_s. 
\]
Lastly, consistency follows when $ \| \bfbeta_\nu \|_s $ is of smaller order than $ \lambda $.

 \subsection{Proof of Theorem~\ref{THCONSPRED2}}
 
 The result follows along the lines of the proof of Theorem~\ref{THCONSPRED} using the bound (\ref{CrossMeanLittleO1}) with $ \delta = \lambda$, equating the right-hand side with $ 1-\alpha $ and solving for $ \lambda $. The rest of the proof is unchanged.

\subsection{Proof of Theorem~\ref{THRIDGE2}}

The result can be shown arguing as in \cite{LiuYu2013}. Consider the SVD of
\[
\wt{\matX}_T = \frac{1}{\sqrt{T}} \matX_T = \matU \matD \matV^\top
\]
where $ \matU $ is a $ T \times T $ orthogonal matrix, $ \matV $ a $ p \times p $ orthogonal matrix and
\[
\matD = \diag( \rho_1, \ldots, \rho_p )
\]
the diagonal matrix of the eigenvalues of $ \wt{\matX}_T^\top \wt{\matX}_T = T^{-1} \matX_T^\top \matX_T $. The ridge estimator attains the representation
\[
\bhbeta_{T\nu}^{(R)} = \frac{1}{\sqrt{T}} \left( \wt{\matX}_T^\top \wt{\matX}_T + \frac{\lambda_T}{T} \matid  \right)^{-1} \wt{\matX}_T^\top \vecY^{(\nu)}
\]
Observing that $ \wt{\matX}_T^\top \wt{\matX}_T + \frac{\lambda_T}{T} \matid = \matV (\matD^2 + \frac{\lambda_T}{T} \matid ) \matV^\top $ with inverse
$ \matV \diag\left( \frac{1}{\rho_1^2 + \lambda_T/T}, \ldots, \frac{1}{\rho_p^2+ \lambda_T/T} \right) \matV^\top $, some algebra shows that 
\[
\bhbeta_{T\nu}^{(R)} = \matV \diag\left( \frac{\rho_1}{\rho_1^2+\lambda_T/T}, \ldots, \frac{\rho_p}{\rho_p^2 + \lambda_T/t} \right) \matU^\top \vecY^{(\nu)}.
\]
Since, similarly,
\[
	\bhbeta_{T\nu} = \matV \diag\left( 1 / \rho_1,  \ldots,  1/ \rho_T \right) \matU^\top \vecY^{(\nu)},
\]
we obtain 
\[
\sqrt{T}( \bhbeta_{T\nu}^{(R)} - \bhbeta_{T\nu}) = \matV 
\diag\left( - \frac{T \lambda_T}{T \rho_1 ( T\rho_1 + \lambda_T)}, \ldots, - \frac{T 	\lambda_T}{T \rho_p ( T\rho_1 + \lambda_T)} \right) 
\matU^\top \vecY^{(\nu)},
\]
such that
\begin{align*}
\| \sqrt{T}( \bhbeta_{T\nu}^{(R)} - \bhbeta_{T\nu})  \|_2^2
& \le \left\| \diag\left( - \frac{T \lambda_T}{T \rho_1 ( T\rho_1 + \lambda_T)}, \ldots, - \frac{T \lambda_T}{T \rho_p ( T\rho_1 + \lambda_T)} \right) \right\|_2^2 \\
& \le \max_{1 \le j \le p} \frac{ \lambda_T^2 }{ T^2 \rho_j^4 } \| \vecY^{(\nu)} \|_2^2 \\
& \le \frac{ \lambda_T^2 }{ T^2 \lambda_{\min}^4( T^{-1} \matX_T^\top \matX_T ) } \| \vecY^{(\nu)} \|_2^2 .
\end{align*}
Since $ \| \matX_T \bfbeta_\nu \| = O(T) $ and $ \EE \| \bfeps^{(\nu)} \|_2^2 = \sum_{t=1}^T \EE( \epsilon_t^{(\nu)} )^2 = O(T) $, if $ T, p  \to \infty $, in view of the assumption that $ \Var( \epsilon_t^{(\nu)}  ) = O(1) $, we may conclude, for any $ \delta > 0 $, by Markov's inequality
\begin{align*}
\PP\left( \| \sqrt{T}( \bhbeta_T^{(R)} - \bhbeta_{T\nu})  \|_2 > \delta  \right) 
&\le \frac{ \lambda_T^2 }{ T^2 \lambda_{\min}^4( T^{-1} \matX_T^\top \matX_T ) } \frac{ \EE( \| \vecY^{(\nu)} \|_2^2 ) }{\delta^2} \\
& = O\left( \frac{ \lambda_T^2 }{ T^2 } \EE( \| \vecY^{(\nu)} \|_2^2  \right) \\
& = O \left( \lambda_T^2 \right).
\end{align*}
For random regularization parameters $ \lambda_T $ the bound
\[
\| \sqrt{T}( \bhbeta_{T\nu}^{(R)} - \bhbeta_{T\nu})  \|_2 \le 
\frac{ \lambda_T }{ T \lambda_{\min}^2( T^{-1} \matX_T^\top \matX_T ) } \| \vecY^{(\nu)} \|_2
\]
and Markov's inequality yield
\begin{align*}
\PP\left( \| \sqrt{T}( \bhbeta_T^{(R)} - \bhbeta_{T\nu})  \|_2 > \delta  \right) 
&\le \PP\left( \frac{ \lambda_T \| \vecY^{(\nu)} \|_2 }{ \lambda_{\min}^2( T^{-1} \matX_T^\top \matX_T ) }  > T\delta  \right) \\
&=O\left( \frac{\sqrt{\EE(\lambda_T^2)} \sqrt{\EE( \| \vecY^{(\nu)} \|_2^2 ) } }{ T \delta }  \right) \\
& = O\left( \frac{ \sqrt{\EE(\lambda_T^2)} }{ \delta } \right),
\end{align*}
which is $ o(1) $, if  $ \EE(\lambda_T^2) = o(1) $, $ T, p \to \infty $.

\subsection{Proof of Theorem~\ref{THRIDGE}}


We have the representation
\begin{align*}
	\wh{\bfbeta}_{T\nu}^{(R)} &= ( \matX_T^\top \matX_T + \lambda_T \matid )^{-1} \matX_T^\top \vecY_T^{(\nu)} \\
	& = \left( T^{-1} \matX_T^\top \matX_T + \frac{\lambda_T}{T} \matid \right)^{-1} T^{-1} \matX_T^\top \matX_T \bfbeta_\nu + \left( T^{-1} \matX_T^\top \matX_T + \frac{\lambda_T}{T} \matid \right)^{-1}  T^{-1} \matX_T^\top \bfeps^{(\nu)}.
\end{align*}
First, observe that since $ | \lambda_T/T - \lambda^0 | = o_\PP( 1/\sqrt{p} ) $, by Assumption (\ref{AssRegr1}) and since $ \| \matid \|_F = \sqrt{p} $,
\[
  \left\| T^{-1} \matX_T^\top \matX_T +  \frac{\lambda_T}{T} \matid - \left( \bfSigma_\vecx + \lambda^0 \matid  \right)  \right\|_2  \le 
  C \frac{p^\eta}{\sqrt{T}} + \left| \frac{\lambda_T}{T} - \lambda^0 \right|  \sqrt{p} = o_\PP(1),
\]
and by Lemma~\ref{ConvRateInverse} 
\[
\left\| \left( T^{-1} \matX_T^\top \matX_T +  \frac{\lambda_T}{T} \matid \right)^{-1} - \left( \bfSigma_\vecx + \lambda^0 \matid  \right)^{-1}  \right\|_2  \le 
C \frac{p^\eta}{\sqrt{T}} + \left| \frac{\lambda_T}{T} - \lambda^0 \right|  \sqrt{p} = o_\PP(1),
\]
as well. To treat the first term in the above representation of $ \wh{\bfbeta}_{T\nu}^{(R)} $ apply the inequality
\[
  \| \matA_T \matB_T - \matA \matB \|_2 \le \| \matA_T - \matA \|_2 ( \| \matB_T - \matB \|_2 + \| \matB \|_2 ) + \| \matA \|_2 \| \matB_T - \matB \|_2
\]
with $ \matA_T = \left( T^{-1} \matX_T^\top \matX_T + \frac{\lambda_T}{T} \matid \right)^{-1} $, $ \matA = (\bfSigma_\vecx^{-1} + \lambda^0 \matid )^{-1} $, $ \matB_T = T^{-1} \matX_T^\top \matX_T \bfbeta_\nu $ and $ \matB = \bfSigma_\vecx \bfbeta_\nu $.
Observe that the spectrum  of $ \bfSigma_\vecx(\lambda^0) = \bfSigma_\vecx + \lambda^0 \matid $ is the spectrum of $ \bfSigma_\vecx $ shifted by $ \lambda^0 $, such that $ \| \bfSigma_\vecx(\lambda^0) \|_2 = O(1) $ and $ \| \matA \| = \| \bfSigma_\vecx(\lambda^0)^{-1} \|_2 = O(1) $. Since $ \bfbeta_\nu $ ensures by assumption that $ \| \matB \|_2 =  \| \bfSigma_\vecx \bfbeta_\nu \|_2 = O(1) $, we may conclude that
under the assumptions of assertion (ii)
\begin{equation}
\label{ConvFirst}
\left\| \left( T^{-1} \matX_T^\top \matX_T + \frac{\lambda_T}{T} \matid \right)^{-1} T^{-1} \matX_T^\top \matX_T \bfbeta_\nu  - (\bfSigma_\vecx + \lambda^0 \matid )^{-1} \bfSigma_\vecx \bfbeta_\nu  \right\|_2 = o_\PP(1)
\end{equation}
and
\begin{equation}
\label{ConvSecond}
\left\| \left( T^{-1} \matX_T^\top \matX_T + \frac{\lambda_T}{T} \matid \right)^{-1}  -   (\bfSigma_\vecx^{-1} + \lambda^0 \matid )^{-1} \right\|_2 = o_\PP(1).
\end{equation}
Consequently,
\[
\wh{\bfbeta}_{T\nu} = (\bfSigma_\vecx + \lambda^0 \matid )^{-1} \bfSigma_\vecx \bfbeta_\nu + o_\PP(1),
\]
provided $ \left\| T^{-1} \sum_{t=1}^T \vecx_t \epsilon_t^{(\nu)} \right\|_2 = o_\PP(1) $, which holds under the stated assumptions, as shown in (\ref{CrossMeanLittleO1}).  This verifies assertion (ii). To show (i) observe the representation
\begin{align*}
	\wh{\bfbeta}_{T\nu}^{(R)} - \bfbeta_{\nu} 
	& = \left[ \left( T^{-1} \matX_T^\top \matX_T + \frac{\lambda_T}{T} \matid \right)^{-1} T^{-1} \matX_T^\top \matX_T - \matid   \right] \bfbeta_\nu  \\
	& \qquad + \left( T^{-1} \matX_T^\top \matX_T + \frac{\lambda_T}{T} \matid \right)^{-1} \frac{1}{T} \sum_{t=1}^T \vecx_t \epsilon_t^{(\nu)}.
\end{align*}
In view of  $ \left\| \frac{1}{T} \sum_{t=1}^T \vecx_t \epsilon_t^{(\nu)} \right\|_2 = o_\PP(1) $, assertion (i) follows, if we show 
\begin{equation}
	\label{Consis1}
	\left\|  \left[ \left( T^{-1} \matX_T^\top \matX_T + \frac{\lambda_T}{T} \matid \right)^{-1} T^{-1} \matX_T^\top \matX_T \bfSigma_\vecx^{-1} - \bfSigma_\vecx^{-1} \right] \bfSigma_\vecx \bfbeta_\nu \right\|_2 = o(1)
\end{equation}
and
\begin{equation}
	\label{Consis2}
	\left\|  T^{-1} \matX_T^\top \matX_T + \frac{\lambda_T}{T} \matid   \right\|_{2} = O(1).
\end{equation}
The latter follows directly from (\ref{AssRegr1}), the estimate
\[
  \| T^{-1} \matX_T^\top \matX_T \|_{2} \le \| T^{-1} \matX_T^\top \matX_T - \bfSigma_\vecx \|_2 + \| \bfSigma_\vecx \|_{2} = O(1)
\] 
and 
\[
\left\| \frac{\lambda_T}{T} \matid  \right\|_F = \frac{\lambda_T \sqrt{p}}{T} = o_\PP(1).
\]
Also observe that
\[
\left\| T^{-1} \matX_T^\top \matX_T + \frac{\lambda_T}{T} \matid  - \bfSigma_\vecx   \right\|_{2} \le \left\|  T^{-1} \matX_T^\top \matX_T - \bfSigma_\vecx \right\|_2 + \frac{\lambda_T \sqrt{p}}{T} \le C \frac{p^\eta}{\sqrt{T}} + o_{\PP}(1),
\]
i.e., $ T^{-1} \matX_T^\top \matX_T + \frac{\lambda_T}{T} \matid   \to \bfSigma_\vecx $ in probability in the operator norm. Again applying Lemma~\ref{ConvRateInverse} we get
\[
\left\| \left( T^{-1} \matX_T^\top \matX_T + \frac{\lambda_T}{T} \matid \right)^{-1}  - \bfSigma_\vecx^{-1}  \right\|_{2} \le C \frac{p^\eta}{\sqrt{T}} + o_{\PP}(1).
\]
Now consider (\ref{Consis1}). Write
\begin{small}
\begin{align*}
	\matR_T & = \left[ \left( T^{-1} \matX_T^\top \matX_T + \frac{\lambda_T}{T} \matid \right)^{-1} T^{-1} \matX_T^\top \matX_T \bfSigma_\vecx^{-1} - \bfSigma_\vecx^{-1} \right] \bfSigma_\vecx \bfbeta_\nu \\
	&   = \left\{ \left[ \left( T^{-1} \matX_T^\top \matX_T + \frac{\lambda_T}{T} \matid   \right)^{-1} - \bfSigma_\vecx^{-1}  \right] T^{-1} \matX_T^\top \matX_T \bfSigma_\vecx^{-1}
	+ \bfSigma_\vecx^{-1} \left( T^{-1} \matX_T^\top \matX_T \bfSigma_\vecx^{-1} - \bfSigma_\vecx \bfSigma_\vecx^{-1}\right) \right\} \bfSigma_\vecx \bfbeta_\nu.
\end{align*}
\end{small}
Then, 
\begin{align*}
	\| \matR_T \|_2  &\le \left\| \left( T^{-1} \matX_T^\top \matX_T + \frac{\lambda_T}{T} \matid   \right)^{-1} - \bfSigma_\vecx^{-1} \right\|_2
	 \left\| T^{-1} \matX_T^\top \matX_T \bfSigma_\vecx^{-1} \right\|_{2}  \| \bfSigma_\vecx \bfbeta_\nu \|_{2} \\
	& + \| \bfSigma_\vecx^{-1} \|_2 \|  T^{-1} \matX_T^\top \matX_T - \bfSigma_\vecx  \|_2 \| \bfSigma_\vecx^{-1}  \|_{2} \| \bfSigma_\vecx \bfbeta_\nu \|_2.
\end{align*}
In view of (\ref{AssRegr1}) $  \left\| T^{-1} \matX_T^\top \matX_T  \right\|_F = O(1) $, and $ \| \bfSigma_\vecx^{-1} \|_{2} = 1/\lambda_{\min}( \bfSigma_\vecx ) = O(1) $ as well.  Therefore, 
$ \left\| T^{-1} \matX_T^\top \matX_T \bfSigma_\vecx^{-1} \right\|_{2}  = O(1) $.
Combining these facts with the assumption $ \| \bfSigma_\vecx \bfbeta_\nu \|_2 = O(1) $, we obtain $ \| \matR_T \|_2 = o_\PP(1) $, as $ T, p \to \infty $, which completes the proof.

\subsection{Proof of Theorem~\ref{ThASNOR} and Theorem~\ref{THASNORMRIDGE}}

We shall first consider the case $ m = 1 $ and $ \vecH = \id $ of a linear factor model for the errors. By using an interlacing embedding technique as  in \cite{Steland2020}, we will show that $ \bfeps_t = \vecZ_t $ is a strongly mixing linear process with coefficients decaying as $ O( j^{-7/2-\theta} ) $. This implies that the nonlinear process $ \vecH( \vecZ_t, \ldots, \vecZ_{t-m} ) $ is $ \alpha $-mixing as well with the same decay of the mixing coefficients and finite absolute moments of order $ 2+\delta $. Consequently, central limit theorems as well as exponential inequalities (as given above) can be applied for $ \vecH = \id $ as well as $ \vecH \not= \id $ under Assumption (\ref{AssGLIP}).  

By interlacing, we may write the $ \nu $th coordinate $ \epsilon_t^{(\nu)}  $  of $ \bfeps_t $ as a linear process with respect to the independent mean zero sequence  $ \ldots, \bfvareps_{t-1}^{(0)}{}^\top, \ldots,  \bfvareps_{t-1}^{(L)}{}^\top, \bfvareps_t^{(0)}{}^\top, \ldots, \bfvareps_t^{(L)}{}^\top $, denoted by $ Z_{tL'-k} $, $k \ge 0 $, where $ L' = (L+1)d $ is the length of the blocks of the interlaced $ L+1$ processes. At this point, recall the discussion following Assumption (\ref{AssDecay}). 
To simplify notation, absorb the distances into the coefficients and thus put $ \wt{\vecc}_j^{(\ell,\nu)} = \phi_{\nu \ell} d( \vecs_t, \vecsf_\ell) \vecc_j^{(\ell,\nu)} $ if $ \ell = 1, \ldots, L $ and $ \wt{\vecc}_j^{(0,\nu)} = \vecc_j^{(0,\nu)} $. Recalling that $ \epsilon_t^{(\nu)} = \sum_{j=0}^\infty \sum_{\ell=0}^L \wt{\vecc}_j^{(\ell,\nu)}{}^\top \bfvareps_{t-j}^{(\ell)} $, the interlacing is now implemented by stacking the innovations of the $L+1 $ processes,
\[
  (\ldots, \bfvareps_{t-1}^{(0)}{}^\top, \ldots,  \bfvareps_{t-1}^{(L)}{}^\top, \bfvareps_t^{(0)}{}^\top, \ldots, \bfvareps_t^{(L)}{}^\top ) =: ( \ldots,  Z_{(t-1)L'-L'-1}, \ldots Z_{(t-1)L'}, Z_{tL'-L'-1}, \ldots Z_{tL'} ),
\]
and matching the associated interlaced coefficients $ \ldots, \wt{\vecc}_{1}^{(0,\nu)}{}^\top, \ldots, \wt{\vecc}_{1}^{(L,\nu)}{}^\top, \wt{\vecc}_{0}^{(0,\nu)}{}^\top, \ldots, \wt{\vecc}_{0}^{(L,\nu)}{}^\top $ denoted by $ d_{tL',k} $, $ k \ge 0 $, i.e.
\[
  (\ldots,  \wt{\vecc}_{1}^{(0,\nu)}{}^\top, \ldots, \wt{\vecc}_{1}^{(L,\nu)}{}^\top, \wt{\vecc}_{0}^{(0,\nu)}{}^\top, \ldots, \wt{\vecc}_{0}^{(L,\nu)}{}^\top  )
  =: ( \ldots, d_{tL',L+L'}, \ldots, d_{tL',L+1}, d_{tL',L}, \ldots, d_{tL',0}  )
\]
Then we have the representations 
\[
  \epsilon_t^{(\nu)} = \sum_{j=0}^\infty \sum_{\ell=0}^L \wt{\vecc}_{j}^{(\ell,\nu)}{}^\top \bfvareps_{t-j}^{(\ell)} = \wt{\bfeps}_{tL'}, \qquad  \text{with} \quad \wt{\bfeps}_{s} = \sum_{k=0}^\infty d_{s,k} Z_{s-k}, 
\]
and
\[
  (\bfeps_1^{(\nu)}, \ldots, \bfeps_k^{(\nu)} ) = ( \wt{\bfeps}_{L'}, \ldots, \wt{\bfeps}_{k L'} )
\]
for all $ k \ge 1 $. Since $ L $ is finite, the decay condition on the $ c_{tj}^{(\ell)} $'s carries over to the $ d_{tk} $, such that $ d_{tk} = O( k^{-7/2-\theta} ) $.

Let $ \{ \xi_{\bm i} : \bm i \in \Z^q \} $ be a random field. Let $ I, J \subset \Z^q $ and define the associated mixing coefficient
\[
	\alpha_\xi(I, J) = \sup \{  | \PP(A \cap B ) - \PP(A) \PP(B) | : A \in \sigma( \xi_{\bm i} : {\bm i} \in I ), B \in \sigma( \xi_{\bm j} : {\bm j} \in J ) \},
\]
Further, for $ a, b \in \N \cup \{ \infty \} $ let 
\[
  \alpha_{a,b}( k ) = \{ \alpha_\xi(I, J) : | I | \le a, | J | \le b, d(I,J) \ge k \}
\]
where $ d(I,J) $ denotes the distance of the sets $I, J $. We use the following CLT for non-stationary random fields and time series, see  \cite{Bolthausen1982} and \cite[Th.~3.3.1]{Guyon1995}. It is worth mentioning the recent CLT of \cite{BradleyTone2017} under Lindeberg's condition and weak assumptions on the strong mixing coefficient, but the latter coefficient is with respect to half spaces and not studied for general linear random fields. 

\begin{theorem}
A possibly non-stationary array $ \xi_{\bm i} $, $ \bm i \ge \bm 1 $, with mixing coefficients $ \alpha_{a,b}(m) $ satisfies 
\[
\frac{S_{\bm n}}{\sqrt{Var(S_{\bm n})}} \stackrel{d}{\to} N(0,1),
\]
as $ \bm n \to \infty $, where $ S_{\bm n} = \sum_{\bm 1 \le \bm i \le \bm n} \xi_{\bm i} $, $ \bm n \ge \bm 1 $, and
\[ 
	\limsup_{\bm n \to \infty} \Var( {|\bm n|}^{-1/2} S_{\bm n} ) =  \limsup_{\bm n \to \infty} \frac{1}{|\bm n|} \sum_{\bm 1 \le \bm i, \bm j \le \bm n} | \Cov( \xi_{\bm i}, \xi_{\bm j} ) | < \infty,
\] 
if 
\begin{itemize}
	\item[(CLT-i)] $ \sup_{\bm i \ge 1} \EE | \xi_{\bm i} |^{2+\delta} < \infty $,
	\item[(CLT-ii)] $ \sum_{m=1}^\infty \alpha_{a,b}(m) < \infty $ for $ a+b \le 4 $, 
	$ \alpha_{1,\infty}(m) = o(m^{-1}) $ and for some $ \delta > 0 $
	\[
	\sum_{m=1}^{\infty} \left( \alpha_{1,1}(m) \right)^{\delta/(2+\delta)} < \infty,
	\]
	(e.g. $ \alpha_{a,b}(k) = O( k^{-2-\theta} ) $ for some $ \theta > 0$ and $ \delta = 2 $),
	\item[(CLT-iii)] $ \liminf_{\bm n \to \infty} \Var( {|\bm n|}^{-1/2} S_{\bm n} ) > 0 $.
\end{itemize}
\end{theorem}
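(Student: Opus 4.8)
The plan is to prove the statement via Stein's method for weakly dependent summands, following the approach of \cite{Bolthausen1982}. Write $ \sigma_{\bm n}^2 = \Var( S_{\bm n} ) $ and $ W_{\bm n} = S_{\bm n}/\sigma_{\bm n} $. The first observation is that, by condition (CLT-iii) together with the stated $ \limsup $ bound on $ \Var( |\bm n|^{-1/2} S_{\bm n} ) $, the variance grows at the correct rate, $ \sigma_{\bm n}^2 \asymp |\bm n| $; this two-sided control is what later allows error terms normalized by $ \sigma_{\bm n} $ to be bounded. By Stein's characterization of the standard normal law, it then suffices to show that for every bounded $ f $ with bounded first and second derivatives one has $ \EE[ f'(W_{\bm n}) - W_{\bm n} f(W_{\bm n}) ] \to 0 $ as $ \bm n \to \infty $.

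The core step is to expand $ \EE[ W_{\bm n} f(W_{\bm n}) ] = \sigma_{\bm n}^{-1} \sum_{\bm i} \EE[ \xi_{\bm i} f(W_{\bm n}) ] $ and to exploit the mixing locally. For each site $ \bm i $ I would fix a slowly growing radius $ m = m_{\bm n} $ and split off the local block $ S_{\bm i, m} = \sum_{\bm j : d(\bm i, \bm j) \le m} \xi_{\bm j} $, writing $ W_{\bm i, m} = (S_{\bm n} - S_{\bm i,m})/\sigma_{\bm n} $ for the remote part. Decomposing
\[
\EE[ \xi_{\bm i} f(W_{\bm n}) ] = \EE\bigl[ \xi_{\bm i}\,(f(W_{\bm n}) - f(W_{\bm i,m})) \bigr] + \EE[ \xi_{\bm i} f(W_{\bm i,m}) ],
\]
the second summand is negligible because $ \EE \xi_{\bm i} = 0 $ and $ \xi_{\bm i} $ is almost independent of the remote sum $ W_{\bm i,m} $; its magnitude is controlled by a covariance/mixing inequality and summed over $ \bm i $. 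In the first summand I would Taylor expand $ f $, replacing the increment by $ f'(W_{\bm i,m})\, S_{\bm i,m}/\sigma_{\bm n} $ plus a second-order remainder.

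Collecting the leading contributions yields $ \sigma_{\bm n}^{-2} \sum_{\bm i} \EE[ \xi_{\bm i} S_{\bm i,m}\, f'(W_{\bm n}) ] $, and after replacing $ W_{\bm i,m} $ by $ W_{\bm n} $ (a further mixing-controlled swap) this factors as $ \bigl( \sigma_{\bm n}^{-2} \sum_{\bm i} \sum_{\bm j: d(\bm i,\bm j) \le m} \Cov(\xi_{\bm i}, \xi_{\bm j}) \bigr) \EE[ f'(W_{\bm n}) ] $, whose scalar prefactor tends to $ 1 $ as $ m \to \infty $ since the tails $ \sum_{d(\bm i, \bm j) > m} |\Cov(\xi_{\bm i},\xi_{\bm j})| $ are uniformly small under the summability in (CLT-ii). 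Hence $ \EE[ W_{\bm n} f(W_{\bm n}) ] - \EE[ f'(W_{\bm n}) ] \to 0 $, which is the desired Stein identity. Each error term---the remote-block covariance, the Taylor remainder, and the $ W_{\bm i,m} \mapsto W_{\bm n} $ swap---is handled by Davydov's covariance inequality, which under (CLT-i) and (CLT-ii) bounds them through $ \| \xi_{\bm i} \|_{2+\delta} $ and the mixing quantities $ \alpha_{a,b}(m)^{\delta/(2+\delta)} $, while the hypothesis $ \alpha_{1,\infty}(m) = o(m^{-1}) $ controls the effective number of sites in a block relative to its covariance contribution.

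The main obstacle I anticipate is the delicate calibration of the radius $ m = m_{\bm n} $: it must grow fast enough that the local proxy $ \sum_{\bm j : d(\bm i,\bm j)\le m} \Cov(\xi_{\bm i},\xi_{\bm j}) $ captures essentially all of $ \sigma_{\bm n}^2 $, yet slowly enough that the cumulative mixing errors over the $ |\bm n| $ sites---each block of volume $ O(m^q) $---still vanish after division by $ \sigma_{\bm n} \asymp |\bm n|^{1/2} $. Reconciling these two requirements is precisely where the quantitative hypotheses of (CLT-ii) (summability of $ \alpha_{a,b} $ for $ a+b \le 4 $, the $ o(m^{-1}) $ decay of $ \alpha_{1,\infty} $, and summability of $ \alpha_{1,1}^{\delta/(2+\delta)} $) are used, and where the non-degeneracy (CLT-iii) is indispensable to prevent the normalization from amplifying the residual terms.
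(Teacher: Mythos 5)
The paper does not prove this theorem: it is quoted verbatim as a known result, with the proof delegated to \cite{Bolthausen1982} and \cite[Th.~3.3.1]{Guyon1995}. Your Stein's-method sketch is precisely the argument of those references (Bolthausen works with the test functions $ x \mapsto e^{\iota \lambda x} $ rather than smooth $f$, but that is an equivalent formulation of Stein's identity), so your approach matches the intended proof; as a plan it is sound, though the decisive quantitative step --- the role of $ \alpha_{1,\infty}(m) = o(m^{-1}) $ is to make the remote-block term, of total size $ \sqrt{|\bm n|}\,\alpha_{1,\infty}(m_{\bm n}) $ after normalization by $ \sigma_{\bm n} \asymp |\bm n|^{1/2} $, vanish for a radius $ m_{\bm n} $ that still keeps the local Taylor and swap errors small --- is only gestured at rather than carried out.
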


For dimension $q = 1 $ of the index  domain one can replace the mixing coefficient $ \alpha_{1,\infty}(m)  $ in (CLT-ii),  since it is only used to estimate the $ L_1$-norm of the term  $ \sum_{j=1}^n \xi_{j}  e^{\iota \lambda ( \bar{S}_{n} - \bar{S}_{j,n})} $, where $ \bar{S}_{n} - \bar{S}_{j,n} = (\sigma_{n}
+o(1))^{-1/2} \sum_{i: |j-i| > m_n} \xi_{i} $, using the estimate $ \EE | \xi_{j}  e^{\iota \lambda ( \bar{S}_n - \bar{S}_{j,n})} | \le n \alpha_{1,\infty}(m_n) $, see the proof of \cite[Theorem~3.3.1]{Guyon1995}. But the latter bound can be replaced by 
\[  \EE | \xi_{j}  e^{\iota \lambda ( \bar{S}_n - \bar{S}_{j,n})} |  \le n \alpha^*(m_n ), \] 
where for $ m \in \N $,
\begin{align*} 
	\alpha^*(m) & := \sup_{i} \alpha( \sigma( \{ \xi_{i} \} ), \sigma( \{ \xi_{j} : |i-j| > m \} ) )  \\
		& \le  \sup_{i} \max \{ \alpha( \sigma( \xi_{j} : j \le i ), \sigma( \xi_{j} : j \ge i+m) ), \alpha( \sigma( \xi_{j} : j \ge i ), \sigma( \{ \xi_{j} : j \le i-m \} ) ) \} \\
		& \le \sup_{i} \max\{  \alpha( \calF_{-\infty}^i, \calF_{i+m}^\infty ), \alpha( \calF_{i}^\infty, \calF_{-\infty}^{i-m} ) \} \\
		& = \alpha(m),
\end{align*}
as already been noted in \cite{Bolthausen1982} (without derivation).

Fix $ \nu $ and let us check the above conditions for $ \xi_{t} = \epsilon_t^{(\nu)} $.
By Assumption (\ref{AssErrors2}) (CLT-i) holds, and by (\ref{AssErrors1}), (\ref{AssErrors2}) and (\ref{AssErrors5}) we can bound the $\alpha $-mixing coefficients by
\begin{equation}
\label{MixingCoefficientsEstimate}
\alpha_{a,b}( 2m ) = O\left( b \left[   \sup_{s \ge 1} \sum_{j>m} d_{sj}^2 \right]^{1/3}  \right),
\end{equation}
and
\begin{equation}
\label{MixingCoefficientsEstimate2}
  \alpha(m) = O\left(  \sup_{s \ge 0}   \sum_{i=m}^\infty \left[ \sum_{k=i}^\infty d_{sk}^2 \right]^{1/3} \right),
\end{equation}
cf. \cite{Gorodetskii1977}, \cite[Corollary~1, p.78]{Doukhan1994} and \cite[Corollary~1.7.3]{Guyon1995}. Next observe that,  by assumption (\ref{AssDecay}), 
 $ \max_{1 \le \ell \le L} | c_{j}^{(\ell)} | = O( j^{-7/2-\theta} )$, such that the coefficients of the embedding series inherit this decay: $ \sup_{s \ge 1} |d_{sj}| = O (j^{-7/2-\theta}) $. One can now easily check that this yields
\[
  m \alpha(m) = O( m^{-2\theta/3} ) 
\]
and 
\[
  \alpha_{a,b}( 2m) = O( m^{-2-2\theta/3} ) 
\]
for all $ a, b \in \N $, thus verifying (CLT-ii). 

Therefore, we may conclude that 
\[
  \frac{1}{\sqrt{T} v_{T\nu}} \sum_{t=1}^T \epsilon_t^{(\nu)} \stackrel{d}{\to} N(0,1),
\]
as $ T \to \infty $, where $ v_{T\nu}^2 = \Var( T^{-1/2} \sum_{t=1}^T \epsilon_t^{(\nu)}  ) $, since, by independence and (\ref{AssErrors4}), for $ d = 1 $
\begin{align*}
  v_{T\nu}^2 & = \Var\left( \sum_{\ell=1}^L  \frac{1}{\sqrt{T}} \sum_{t=1}^T \phi_{\nu \ell} d( \vecs_t, \vecr_\ell ) \vecF_t^{(\ell,\nu)} + \frac{1}{\sqrt{T}} \sum_{t=1}^T \vecE_t^{(\nu)}  \right) \\
    & \ge  \Var\left(  \frac{1}{\sqrt{T}} \sum_{t=1}^T \vecE_t^{(\nu)}  \right) \\
    & \to \sum_{j=0}^\infty ( c_j^{(0,\nu)} )^2 \sigma_{t0}^2 \\
    & \ge c^{(0,\nu)}(1)^2 \underline{\sigma}_0^2 > 0.
\end{align*}
For $ d > 1 $ one uses (\ref{AssErrors4GeneralD}) and argues analogously.

To show assertion (ii) of the theorem,  fix $ \veca \not= \vecnull $ and consider the statistic
\[
  W_T = \frac{1}{\sqrt{T}} \sum_{t=1}^T \veca^\top \vecx_t \epsilon_t^{(\nu)}. 
\]
Clearly,  $ \veca^\top \vecx_t \epsilon_t^{(\nu)} $, $ t \ge 0 $, is a non-stationary linear combination of the underlying linear processes where all coefficients, $ c_{j}^{(\nu)} $, are multiplied by $ \veca^\top \vecx_t $. By boundedness of $ \vecx_t $, the associated coefficients $ d_{tL',k} $, decay at the same rate $ O(k^{-7/2- \theta}) $ and therefore the $ \alpha $-mixing coefficients, $\alpha_{a,b}'(m)$,  of $ \veca^\top \vecx_t \epsilon_t^{(\nu)} $, $ t \ge 0 $, decay as $ O( m^{-2-2\theta/3} ) $, since (\ref{MixingCoefficientsEstimate}) and (\ref{MixingCoefficientsEstimate2}) allow for linear random fields with coefficients depending on $t$. This shows (CLT-ii). Further, condition (CLT-i) holds, because
\[
	\EE | \veca^\top \vecx_t \epsilon_t^{(\nu)} |^{2+\delta} \le C_\vecx \| \veca \|_2 \EE | \epsilon_t^{(\nu)} |^{2+\delta},
\]	
where $ C_\vecx $ is a norm bound for the regressors.
Observe that $ \EE( \veca^\top \epsilon_t^{(\nu)} ) =  0 $ and
\begin{align*}
  \Cov( \veca^\top \vecx_s \bfeps_s^{(\nu)}, \veca^\top \vecx_t \epsilon_t^{(\nu)} ) 
    & = \EE( \veca^\top \vecx_s \bfeps_s^{(\nu)} \veca^\top \vecx_t \epsilon_t^{(\nu)}  ) \\
    & = \EE(  \veca^\top \vecx_s \bfeps_s^{(\nu)} \epsilon_t^{(\nu)} \vecx_t^\top \veca ) \\
    & = \veca^\top \left( \vecx_s \vecx_t^\top  \EE( \bfeps_s^{(\nu)} \epsilon_t^{(\nu)} ) \right) \veca.
\end{align*}
Therefore, $ \eta_T^2 = \Var( W_T )  $ is given by
\[
  \eta_T^2 = \eta_T^2( \veca )  = \veca^\top \bfGamma_T^{(\nu)} \veca,
\]
by definition of $ \bfGamma_T^{(\nu)}  $, and thus (\ref{AssGXe}) ensures that $ \liminf_{T\to \infty} \eta_T > 0 $, which verifies condition (CLT-iii). It follows that
\[
	\vecW_T = \frac{1}{\sqrt{T}} \sum_{t=1}^T \vecx_t \epsilon_t^{(\nu)} \sim
\calA \calN\left( \vecnull, \bfGamma_T^{(\nu)} \right).
\] 
We proceed by showing that $ \vecW_T = O_\PP(1) $. To see this, recall that there exists a constant $ c_p > 0 $ such that $ \| \cdot \|_2 \le c_p \| \cdot \|_\infty $. This gives
\[
\PP\left( \left\| \vecW_T  \right\|_2 > M  \right)
\le \PP\left(  \max_j \left| \frac{1}{\sqrt{T}} \sum_{t=1}^T x_{tj} \epsilon_t^{(\nu)}  \right| > \frac{M}{c_p}  \right) \le \sum_{j=1}^p \PP\left(  \left| \frac{1}{\sqrt{T}} \sum_{t=1}^T x_{tj} \epsilon_t^{(\nu)}  \right| > \frac{M}{c_p}  \right).
\]
But, since $ \eta^2_{Tj} = \eta^2_T( \vece_j ) \le \lambda_{\max}( \bfGamma^{(\nu)}_T ) \le K $ for some constant $K$, where $ \vece_j $ denotes the $j$th unit vector,
\[
	\PP\left(  \left| \frac{1}{\sqrt{T}} \sum_{t=1}^T x_{tj} \epsilon_t^{(\nu)}  \right| > \frac{M}{c}  \right) \le
		\PP\left(  \left| \frac{1}{\eta_{Tj} \sqrt{T}} \sum_{t=1}^T x_{tj} \epsilon_t^{(\nu)}  \right| > \frac{M}{c K}  \right).
\]
The right-hand side can be made arbitrarily small for large $M$, because 
\[
	\frac{1}{\eta_{Tj} \sqrt{T}} \sum_{t=1}^T x_{tj} \epsilon_t^{(\nu)} \stackrel{d}{\to} N(0,1), 
\] 
as $ T \to \infty $. 
Now observe the representation
\[
\sqrt{T}( \wh{\bfbeta}_{T\nu} - \bfbeta_\nu ) = ( T^{-1} \matX_T^\top \matX_T )^{-1} \frac{1}{\sqrt{T}} \sum_{t=1}^T \epsilon_t^{(\nu)} \vecx_t.
\]
Since matrix inversion is a continuous transformation, (\ref{AssRegr1}) implies
\[ 
\| ( T^{-1} \matX_T^\top \matX_T )^{-1} - \bfSigma_\vecx^{-1} \|_F = o(1), \] 
as $ T \to \infty$. Combining this with $ \| \vecW_T \|_2 = O_\PP(1) $  we obtain
\begin{align*}
 \left\|  ( T^{-1} \matX_T^\top \matX_T )^{-1} \vecW_T  - \bfSigma_\vecx^{-1} \vecW_T  \right\|_F & \le \| ( T^{-1} \matX_T^\top \matX_T )^{-1} - \bfSigma_\vecx^{-1} \|_F \| \vecW_T \|_2 \\ &= o_P(1),
\end{align*}
as $ T \to \infty $. Consequently, 
\begin{align*}
\sqrt{T}( \wh{\bfbeta}_{T\nu} - \bfbeta_\nu ) 
&= \bfSigma_\vecx^{-1} \frac{1}{\sqrt{T}} \sum_{t=1}^T \vecx_t \epsilon_t^{(\nu)} + o_\PP(1) \\
& = \frac{1}{\sqrt{T}} \sum_{t=1}^T \bfSigma_\vecx^{-1} \vecx_t \epsilon_t^{(\nu)} + o_\PP(1).
\end{align*}
Repeating the above arguments to verify (CLT-i)-(CLT-iii) with $ \vecx_t $ replaced by $ \bfSigma_\vecx^{-1} \vecx_t $ (and hence $ \veca^\top \vecx_t $ replaced by $ \vecb^\top \vecx_t $ with $ \vecb^\top = \veca^\top \bfSigma_\vecx^{-1} \not= \vecnull $ since $ \bfSigma_\vecx > 0 $), we obtain the asymptotic normality of
$ \sqrt{T}( \wh{\bfbeta}_{Tj} - \bfbeta_j )  $ with asymptotic covariance matrices
$
  \bfSigma_\vecx^{-1} \bfGamma_T^{(\nu)} \bfSigma_\vecx^{-1}
$.

The asymptotic normality for the ridge estimator follows with minor modifications: In view of the representation
\[
\sqrt{T}( \wh{\bfbeta}_{T\nu}^{(R)} - \bfbeta_\nu ) 
= ( \matX_T^\top \matX_T + \frac{\lambda_T}{T} \matid )^{-1} \frac{1}{\sqrt{T}} \sum_{t=1}^T \vecx_t \epsilon_t^{(\nu)},
\]
we have by continuity of matrix inversion,
\[
\sqrt{T}( \wh{\bfbeta}_{T\nu}^{(R)} - \bfbeta_\nu ) 
= \left[ (\bfSigma_\vecx + \lambda^0 \matid )^{-1} + o_{\| \cdot \|_F}(1) \right]  \frac{1}{\sqrt{T}} \sum_{t=1}^T \vecx_t \epsilon_t^{(\nu)}.
\]
Therefore, the results follows using the same arguments as above.

%
%


\bibliographystyle{plain}
\bibliography{lit}

\begin{thebibliography}{10}

\bibitem{Bolthausen1982}
E.~Bolthausen.
\newblock On the central limit theorem for stationary mixing random fields.
\newblock {\em Ann. Probab.}, 10(4):1047--1050, 1982.

\bibitem{BradleyTone2017}
Richard~C. Bradley and Cristina Tone.
\newblock A central limit theorem for non-stationary strongly mixing random
  fields.
\newblock {\em J. Theoret. Probab.}, 30(2):655--674, 2017.

\bibitem{BuehlmannGeer2011}
Peter B\"{u}hlmann and Sara van~de Geer.
\newblock {\em Statistics for high-dimensional data}.
\newblock Springer Series in Statistics. Springer, Heidelberg, 2011.
\newblock Methods, theory and applications.

\bibitem{ChenDonoho1994}
S.~Chen and D.~Donoho.
\newblock Basis pursuit.
\newblock {\em In 28th Asilomar Conf. Signals, Systems Computers}, 1994.

\bibitem{EMNIST2017}
Gregory Cohen, Saeed Afshar, Jonathan Tapson, and André van Schaik.
\newblock {EMNIST}: an extension of {MNIST} to handwritten letters, 2017.

\bibitem{Doukhan1994}
Paul Doukhan.
\newblock {\em Mixing}, volume~85 of {\em Lecture Notes in Statistics}.
\newblock Springer-Verlag, New York, 1994.
\newblock Properties and examples.

\bibitem{Dudek2019}
Grzegorz Dudek.
\newblock Generating random weights and biases in feedforward neural networks
  with random hidden nodes.
\newblock {\em Inform. Sci.}, 481:33--56, 2019.

\bibitem{JMLR:v15:delgado14a}
Manuel Fern{{\'a}}ndez-Delgado, Eva Cernadas, Sen{{\'e}}n Barro, and Dinani
  Amorim.
\newblock Do we need hundreds of classifiers to solve real world classification
  problems?
\newblock {\em Journal of Machine Learning Research}, 15(90):3133--3181, 2014.

\bibitem{Gorodetskii1977}
V.~V. Gorodecki\u{\i}.
\newblock The strong mixing property for linearly generated sequences.
\newblock {\em Teor. Verojatnost. i Primenen.}, 22(2):421--423, 1977.

\bibitem{Guyon1995}
Xavier Guyon.
\newblock {\em Random fields on a network}.
\newblock Probability and its Applications (New York). Springer-Verlag, New
  York, 1995.
\newblock Modeling, statistics, and applications, Translated from the 1992
  French original by Carenne Lude\~{n}a.

\bibitem{GyoerfiKohlerEtAl2002}
L\'{a}szl\'{o} Gy\"{o}rfi, Michael Kohler, Adam Krzy\.{z}ak, and Harro Walk.
\newblock {\em A distribution-free theory of nonparametric regression}.
\newblock Springer Series in Statistics. Springer-Verlag, New York, 2002.

\bibitem{Higdon2002}
Dave Higdon.
\newblock Space and space-time modeling using process convolutions.
\newblock In {\em Quantitative methods for current environmental issues}, pages
  37--56. Springer, London, 2002.

\bibitem{HuangSongGuptaWu2014}
G.~{Huang}, S.~{Song}, J.~N.~D. {Gupta}, and C.~{Wu}.
\newblock Semi-supervised and unsupervised extreme learning machines.
\newblock {\em IEEE Transactions on Cybernetics}, 44(12):2405--2417, 2014.

\bibitem{Huang2004}
G.-B. Huang, Q.-Y. Zhu, and C.-K. Sieq.
\newblock Extreme learning machine: a new learning scheme of feedforward neural
  networks.
\newblock {\em Neural Networks}, pages 985--990, 2004.

\bibitem{Huang2006}
G.-B. Huang, Q.-Y. Zhu, and C.-K. Siew.
\newblock Extreme learning machine: theory and applications.
\newblock {\em Neurocomputing}, 70(1):489--501, 2006.

\bibitem{HuangWang-2011}
Guang-Bin Huang, Dian~Hui Wang, and Yuan Lan.
\newblock Extreme learning machines: a survey.
\newblock {\em International Journal of Machine Learning and Cybernetics},
  2(2):107--122, 2011.

\bibitem{KnightFu2000}
Keith Knight and Wenjiang Fu.
\newblock Asymptotics for lasso-type estimators.
\newblock {\em Ann. Statist.}, 28(5):1356--1378, 2000.

\bibitem{LedoitWolf2004}
Olivier Ledoit and Michael Wolf.
\newblock A well-conditioned estimator for large-dimensional covariance
  matrices.
\newblock {\em J. Multivariate Anal.}, 88(2):365--411, 2004.

\bibitem{Silva2014}
Jo\~{a}o Lita~da Silva.
\newblock Some strong consistency results in stochastic regression.
\newblock {\em J. Multivariate Anal.}, 129:220--226, 2014.

\bibitem{LiuYu2013}
Hanzhong Liu and Bin Yu.
\newblock Asymptotic properties of {L}asso+m{LS} and {L}asso+{R}idge in sparse
  high-dimensional linear regression.
\newblock {\em Electron. J. Stat.}, 7:3124--3169, 2013.

\bibitem{LiuEtAl2015}
Xia Liu, Shaobo Lin, Jian Fang, and Zongben Xu.
\newblock Is extreme learning machine feasible? {A} theoretical assessment
  ({P}art {I}).
\newblock {\em IEEE Trans. Neural Netw. Learn. Syst.}, 26(1):7--20, 2015.

\bibitem{MeinshausenBuehlmann2006}
Nicolai Meinshausen and Peter B\"{u}hlmann.
\newblock High-dimensional graphs and variable selection with the lasso.
\newblock {\em Ann. Statist.}, 34(3):1436--1462, 2006.

\bibitem{MeinshausenYu2009}
Nicolai Meinshausen and Bin Yu.
\newblock Lasso-type recovery of sparse representations for high-dimensional
  data.
\newblock {\em Ann. Statist.}, 37(1):246--270, 2009.

\bibitem{MerlevedePeligrad2013}
Florence Merlev\`ede and Magda Peligrad.
\newblock Rosenthal-type inequalities for the maximum of partial sums of
  stationary processes and examples.
\newblock {\em Ann. Probab.}, 41(2):914--960, 2013.

\bibitem{MerlevedeEtAl2009}
Florence Merlev\`ede, Magda Peligrad, and Emmanuel Rio.
\newblock Bernstein inequality and moderate deviations under strong mixing
  conditions.
\newblock In {\em High dimensional probability {V}: the {L}uminy volume},
  volume~5 of {\em Inst. Math. Stat. (IMS) Collect.}, pages 273--292. Inst.
  Math. Statist., Beachwood, OH, 2009.

\bibitem{RahimiRecht2007}
Ali Rahimi and Benjamin Recht.
\newblock Random features for large-scale kernel machines.
\newblock In J.~Platt, D.~Koller, Y.~Singer, and S.~Roweis, editors, {\em
  Advances in Neural Information Processing Systems}, volume~20. Curran
  Associates, Inc., 2008.

\bibitem{RahimiRecht2008a}
Ali Rahimi and Benjamin Recht.
\newblock Uniform approximation of functions with random bases.
\newblock In {\em 2008 46th Annual Allerton Conference on Communication,
  Control, and Computing}, pages 555--561, 2008.

\bibitem{RahimiRecht2008b}
Ali Rahimi and Benjamin Recht.
\newblock Weighted sums of random kitchen sinks: Replacing minimization with
  randomization in learning.
\newblock In Daphne Koller, Dale Schuurmans, Yoshua Bengio, and Léon Bottou,
  editors, {\em NIPS}, pages 1313--1320. Curran Associates, Inc., 2008.

\bibitem{RenBanerjee2013}
Qian Ren and Sudipto Banerjee.
\newblock Hierarchical factor models for large spatially misaligned data: a
  low-rank predictive process approach.
\newblock {\em Biometrics}, 69(1):19--30, 2013.

\bibitem{SchmidtKraaijveldDuin1992}
W.~F. Schmidt, M.~A. Kraaijveld, and R.~P.~W. Duin.
\newblock Feedforward neural networks with random weights.
\newblock In {\em Proceedings of 11th IAPR Int. Conf. on Pattern Recognition},
  volume~2, page~1, 1992.

\bibitem{SchmidtEtAl1992}
W.F. Schmidt, M.A. Kraaijveld, and R.P.W Duin.
\newblock Feedforward neural networks with random weights.
\newblock {\em Proceedings of 11th IAPR}, 2:1--4, 1992.

\bibitem{Steland2020}
Ansgar Steland.
\newblock Testing and estimating change-points in the covariance matrix of a
  high-dimensional time series.
\newblock {\em J. Multivariate Anal.}, 177, 2020.

\bibitem{TangDengHuang2016}
J.~{Tang}, C.~{Deng}, and G.~{Huang}.
\newblock Extreme learning machine for multilayer perceptron.
\newblock {\em IEEE Transactions on Neural Networks and Learning Systems},
  27(4):809--821, 2016.

\bibitem{Tib1996}
Robert Tibshirani.
\newblock Regression shrinkage and selection via the lasso.
\newblock {\em J. Roy. Statist. Soc. Ser. B}, 58(1):267--288, 1996.

\bibitem{Vershynin2018}
Roman Vershynin.
\newblock {\em High-dimensional probability}, volume~47 of {\em Cambridge
  Series in Statistical and Probabilistic Mathematics}.
\newblock Cambridge University Press, Cambridge, 2018.
\newblock An introduction with applications in data science, With a foreword by
  Sara van de Geer.

\bibitem{Wainwright2019}
Martin~J. Wainwright.
\newblock {\em High-dimensional statistics}, volume~48 of {\em Cambridge Series
  in Statistical and Probabilistic Mathematics}.
\newblock Cambridge University Press, Cambridge, 2019.
\newblock A non-asymptotic viewpoint.

\bibitem{Weyl1912}
Hermann Weyl.
\newblock Das asymptotische {V}erteilungsgesetz der {E}igenwerte linearer
  partieller {D}ifferentialgleichungen (mit einer {A}nwendung auf die {T}heorie
  der {H}ohlraumstrahlung).
\newblock {\em Math. Ann.}, 71(4):441--479, 1912.

\bibitem{YuWangSamworth2015}
Y.~Yu, T.~Wang, and R.~J. Samworth.
\newblock A useful variant of the {D}avis-{K}ahan theorem for statisticians.
\newblock {\em Biometrika}, 102(2):315--323, 2015.

\end{thebibliography}

\end{document}